\documentclass{article}

\usepackage{microtype}
\usepackage{graphicx}
\usepackage{subcaption}
\usepackage{booktabs} 

\usepackage{hyperref}

\usepackage[preprint]{icml2026}

\usepackage{amsmath}
\usepackage{amssymb}
\usepackage{mathtools}
\usepackage{amsthm}
\usepackage{amsfonts}
\usepackage{latexsym,bbm}
\usepackage{algorithm}
\usepackage{algorithmic}
\usepackage{framed}
\usepackage{verbatim}
\usepackage{xcolor}

\usepackage[capitalize,noabbrev]{cleveref}

\theoremstyle{plain}
\newtheorem{theorem}{Theorem}[section]
\newtheorem{proposition}[theorem]{Proposition}
\newtheorem{lemma}[theorem]{Lemma}
\newtheorem{corollary}[theorem]{Corollary}

\theoremstyle{definition}
\newtheorem{definition}[theorem]{Definition}
\newtheorem{assumption}[theorem]{Assumption}

\newtheorem{example}[theorem]{Example}

\theoremstyle{remark}
\newtheorem{remark}[theorem]{Remark}

\DeclareMathOperator*{\E}{\mathbb{E}}

\DeclareMathOperator*{\Var}{Var}

\newcommand{\train}{\mathrm{train}}
\newcommand{\test}{\mathrm{test}} 

\newcommand{\calibrate}{\mathrm{cal}}
\newcommand{\estimate}{\mathrm{est}}

\newcommand{\calX}{\mathcal X}
\newcommand{\calY}{\mathcal Y}
\newcommand{\calZ}{\mathcal Z}

\newcommand{\calB}{\mathcal B}

\newcommand{\calD}{\mathcal{D}}

\newcommand{\calF}{\mathcal{F}}
\newcommand{\calE}{\mathcal{E}}
\newcommand{\calW}{\mathcal{W}}
\newcommand{\calN}{\mathcal{N}}
\newcommand{\calO}{\mathcal{O}}
\newcommand{\R}{\mathbb{R}} 
\newcommand{\N}{\mathbb{N}} 
\newcommand{\1}{\mathbf{1}} 
\newcommand{\EX}{\mathrm{EX}}

\newcommand{\TV}{\mathrm{TV}}

\newcommand{\rad}{\mathrm{Rad}}
\newcommand{\fat}{\mathrm{fat}}
\newcommand{\range}{\mathrm{Range}}

\icmltitlerunning{Weight Clipping for Robust Conformal Inference under Unbounded Covariate Shifts}

\begin{document}

\twocolumn[
\icmltitle{Weight Clipping for Robust Conformal Inference under Unbounded Covariate Shifts}


\begin{icmlauthorlist}
\icmlauthor{James Wang}{upenn}
\icmlauthor{Surbhi Goel}{upenn}
\end{icmlauthorlist}

\icmlaffiliation{upenn}{Department of Computer and Information Science, University of Pennsylvania, Philadelphia, PA 19104, USA}

\icmlcorrespondingauthor{James Wang}{jwang541@seas.upenn.edu}
\icmlcorrespondingauthor{Surbhi Goel}{surbhig@seas.upenn.edu}

\icmlkeywords{Conformal Prediction, Covariate Shift, Robustness, Machine Learning}

\vskip 0.3in
]

\printAffiliationsAndNotice{}  

\begin{abstract}
Conformal prediction (CP) provides powerful, distribution-free prediction sets, but its guarantees rely on the exchangeability of training and test data, which is often violated in practice due to covariate shifts. While weighted conformal prediction (WCP) is designed to handle such shifts, it can suffer from significant undercoverage when the density ratio between the distributions is unbounded and/or must be learned. This is because of both overfitting in learning the density ratio, and high variance in estimating the nonconformity score threshold. To address this, we introduce clipped least-squares importance fitting (CLISF) as a reduced-variance method for density ratio estimation. Specifically, we show that density ratios learned using CLISF, when plugged into WCP, have bounded expected undercoverage. Furthermore, we show that the undercoverage can be corrected by running WCP with a slightly inflated coverage target; crucially, we are able to estimate the required level of inflation from the data. We provide the first theoretical guarantees for weight clipping in conformal inference, achieving dataset-conditional coverage with a sample complexity that does not blow up with the higher moments of the true density ratio---a key limitation of prior work. We verify our results on real-world benchmarks and synthetic data.
\end{abstract}

\section{Introduction}\label{section: introduction}

Predictive algorithms are essential tools in medicine, finance, and the sciences, used to forecast outcomes and quantify uncertainty. Conformal prediction (CP) \citep{vovk2005algorithmic} uses a calibration set $\calD = \{ (X_i, Y_i) \}_{i=1}^m$ to construct prediction sets $C(x)$ that contain the true outcome $y$ with a user-specified probability, $1-\alpha$. A standard guarantee is \emph{expected marginal coverage}, where $\Pr_{\calD, X, Y}[Y \in C(X)] \geq 1 - \alpha$, averaging over both the calibration and test data. A stronger guarantee is \emph{dataset-conditional marginal coverage}, which requires that for a \emph{given} calibration set, the coverage probability $\Pr_{X, Y}[Y \in C(X)]$ is at least $1-\alpha$, holding with high probability ($1-\delta$) over the draw of $\calD$. Split conformal prediction \citep{papadopoulos2002inductive} is a straightforward method to achieve these guarantees which requires exchangeability of calibration and test data.

However, the exchangeability assumption is often violated in practice due to covariate shifts, where the marginal covariate distributions change between training and test sets ($P_X \neq Q_X$), while the conditional label distribution remains invariant ($P(Y|X) = Q(Y|X)$). A standard approach to handle this is weighted conformal prediction (WCP) \citep{tibshirani2019conformal}, which reweights the calibration samples according to an estimate of the density ratio $w^*(x) = dQ_X/dP_X$. However, WCP can fail dramatically when this density ratio is unbounded or must be learned. First, unbounded ratios lead to high-variance estimates of the coverage threshold and greatly reduce the ``effective sample size" \citep{tibshirani2019conformal}. Second, for an estimated ratio $\hat w$, \citet{lei2021conformal} bound the (expected) undercoverage by $\E_{P}[|\hat w(X) - w^*(X)|]$. However, to guarantee this quantity is small is challenging, as generalization bounds generally fail when the error functions have bad higher moments. Consider the following motivating example:

\begin{example}\label{ex: needle}
Fix a dimension $d \in \N$, radius $r \in (0, 1)$, and mixture weight $\theta \in (0, 1)$. Define the input space $\calX = [0, 1]^d$ and label space $\calY = [0, 1]$. Define $\calB$ to be the ball $\{ x \in \calX : \| x \|_\infty \leq r \}$. Define the train distribution $P$ to be uniform over $\calX \times \calY$. Define the test distribution $Q = (1 - \theta) P + \theta S$, where $S$ is uniform over $\calB \times \calY$. Define the nonconformity score to be $s(x, y) = \| x \|_\infty$. It can be checked that $\TV(P, Q) = \theta( 1 - r^d)$ and $w^*(x) = \begin{cases}
1 - \theta + \theta / r^d, & x \in \calB \\
1 - \theta, & x \not \in \calB
\end{cases}$.
\end{example}


Here, the total variation between $P$ and $Q$ is small, yet the density ratio and its higher moments are unbounded as $r \to 0$. Even when $w^*$ is known exactly, the size of the calibration set needed to achieve dataset-conditional guarantees will blow up as $r\to 0$. This happens because the density ratio blows up, allowing a few examples to wildly affect the score threshold. Additionally, when $w^*$ is unknown, and must be learned, the loss of coverage extends to expected guarantees, conditional on the dataset used for the density ratio estimation. This happens when $Q$ contains many examples in $\calB$ but $P$ contains few or none --- in this case, unconstrained density ratio estimation methods overestimate the density ratio on $\calB$. We make these ideas formal in \Cref{appendix: motivating example}.

Motivated by this example, we ask the following question: \emph{Can we obtain reliable, dataset-conditional conformal coverage guarantees under covariate shift when the true density ratio is unbounded or must be learned from data?}

We answer this question in the affirmative. We propose a simple yet effective technique: clipping the class of density ratios. Instead of learning an unbounded density ratio, we propose clipped least-squares importance fitting (CLISF) to learn a ratio that is clipped at a threshold $B \ge 1$. This introduces a small, controllable bias but significantly reduces variance. By simply running WCP with $\hat w$ at a slightly inflated target coverage level, we restore expected and dataset-conditional coverage guarantees; we call this combined approach clipped weighted conformal prediction (CWCP).

Our main contributions are summarized below.

\noindent \textbf{A novel approach to stable density-ratio estimation.} We propose CLISF, which learns clipped density ratios $\hat w\in[0,B]$ via density ratio estimation on a \emph{clipped} class, reducing variance and overfitting relative to unclipped estimators. This is a subtle but important distinction from the post-hoc clipping heuristic used by \citet{tibshirani2019conformal} and leads to provable generalization guarantees for the estimated density ratio. This is also distinct from methods which trim the \emph{dataset} to exclude high variance points \citep{liu2017trimmed, ma2020robust}. Additionally, we show that we can accurately estimate the bias introduced by CLISF, which is necessary for downstream use in CP. To our knowledge, this is the first finite-sample theory for weight clipping in conformal inference.

\noindent\textbf{Finite-sample, dataset-conditional, two-sided guarantees.} We prove dataset-conditional coverage guarantees for CWCP with calibration size polynomial in $(B,\epsilon^{-1},\log(1/\delta))$ and \emph{no} dependence on higher moments of $w^*$. Furthermore, unlike much prior work which provides only \emph{one-sided} coverage guarantees \citep{tibshirani2019conformal, joshi2025likelihood,park2020calibrated} we provide stronger \emph{two-sided} guarantees. Thus, CWCP \emph{provably} does not achieve the target coverage guarantee by trivially overcovering. This result is a \emph{dataset-conditional} analog to Proposition 1 of \citet{lei2021conformal}, under less restrictive assumptions on the higher moments of $w^*$. Our bounds represent a qualitative advancement for conformal prediction under realistic heavy-tailed shifts.

\noindent \textbf{Empirical validation.} We validate our algorithm on synthetic as well as real-world (iWildCam) datasets. Our method obtains tighter, more stable coverage than WCP under heavy tails.

\subsection{Related Work}

\noindent\textbf{Reweighting methods for CP under covariate shift.} Importance weighting is a classical solution for covariate shift \citep{shimodaira2000improving}. For conformal prediction, \citet{tibshirani2019conformal} proposed WCP. Subsequent work analyzed the case of learned weights, showing that coverage guarantees depend on the $L_1$-error of the weight estimate \citep{lei2021conformal}. However, obtaining good $L_1$ guarantees is challenging without assumptions like bounded density ratios or moments. This is also the case for other density ratio-based methods \citep{park2021pac, pournaderi2024training, cortes2010learning, joshi2025likelihood}. \citet{bhattacharyya2024group} assume \emph{subpopulation structure} to estimate piecewise constant weights. Our work avoids such assumptions by clipping the weights.

\noindent\textbf{Alternative approaches to CP under distribution shift.} A parallel line of work calibrates predictors against a \emph{distance} or \emph{divergence} between $P$ and $Q$. \citet{barber2023conformal} prove that their NexCP algorithm has coverage gap bounded by a TV-like quantity measuring the shift between train and test points. Going beyond worst‑case TV, \citet{xu2025wasserstein} bound the gap by the \emph{Wasserstein‑1} distance between the \emph{score distributions} under $P$ and $Q$, yielding a tighter, shift‑specific correction. \citet{cauchois2024robust} and \citet{ai2024not} use a distributionally robust optimization approach to guard against the worst-case shift in a ball centered at $P$.

\noindent\textbf{Variance reduction for importance weighting.} Sample trimming \citep{liu2017trimmed, ma2020robust} is one approach to reduce the variance in importance weighting; this trims high-variance examples corresponding to a high estimated ratio. These methods focus on asymptotic consistency, whereas we are interested in a finite-sample $L_1$-error guarantee for the estimated ratio. Importance weight clipping is another approach \citep{ionides2008truncated} and has been applied in CP as a heuristic to alleviate numerical issues with density ratio estimation \citep{tibshirani2019conformal}. In contrast to these methods, which apply clipping \emph{post-hoc}, we integrate clipping directly in the density ratio estimation step, which leads to stronger guarantees when the weights must be learned.

\section{Preliminaries}

\noindent\textbf{Density ratios and WCP.} Given distributions $Q \ll P$ over some space $\calZ$, we define the density ratio (also known as the Radon-Nikodym derivative or importance weights) as $w^* = dQ / dP$. In this work, we will assume that $P$ and $Q$ admit density functions $p$ and $q$, so that $w^*(z) = q(z) / p(z)$ for all $z \in \calZ$. If $\calZ = \calX \times \calY$ and $P, Q$ satisfy the covariate shift assumption, then $w^* = dQ / dP = dQ_X / dP_X$, that is, we can recover the importance weights between the $P$ and $Q$ from only their marginals $P_X$ and $Q_X$. The importance weights are useful because of the change of measure identity, $\E_{P}{[w^*(Z) \cdot f(Z)]} = \E_{Q}{[f(Z)]}$ for any measurable $f$, i.e., we can relate expectations under $Q$ to expectations under $P$. In particular, for a set of weights $w : \calX \to \R_+$, we can define the weighted score CDF under a distribution $P$ over $\calX \times \calY$ as
\begin{align}\label{eq: weighted score CDF}
    F_P(t, w) &\coloneqq \frac{\E_{P}{[w(X) \cdot \1[s(X, Y) \leq t]]}}{\E_{P}{[w(X)]}} \\
    &= \E_{Q}{[\1[s(X, Y) \leq t]]} = F_Q(t),
\end{align}
where $Q$ is the distribution satisfying $dQ_X/dP_X = w / \E_{P}{[w(X)]}$. This is the motivation of WCP, which replaces the empirical CDF of nonconformity scores by a weighted empirical CDF
\begin{align}\label{eq: WCP}
    F_{(X_\calibrate, Y_\calibrate)}(t, w) \coloneqq \frac{\sum_{i=1}^{m} w(X_i) \cdot \1[s(X_i, Y_i)\leq t]}
     {\sum_{i=1}^{m} w(X_i)},
\end{align}
where $(X_\calibrate, Y_\calibrate) = (X_1, Y_1), \dots, (X_m, Y_m)$ is a calibration set, and chooses the data–dependent cutoff $\tau \coloneqq \inf{\{ t : F_m(t) \geq 1 - \alpha\}} \cup \{\infty\}$.

\noindent\textbf{Density ratio estimation.} In practice, $w^*$ is not known exactly and must be learned. Prior work assumes access to some (typically parametric) class of density ratios $\calW \subseteq \R^{\calX}$ and aims to learn an approximate $\hat w$ using examples from $P_X$ and $Q_X$. A popular approach is least-squares importance fitting (LSIF) \citep{kanamori2009least}, which solves
\begin{align}\label{eq: LSIF}
    &\hat w = \arg \min_{w \in \calW} \widehat R(w), \\&\quad \text{where} \quad \widehat R(w) =\frac{1}{2m}\sum_{i=1}^{m}{w(X_i)^2} - \frac{1}{n}\sum_{i=1}^{n}{w(\tilde X_i)}.\notag
\end{align}
Here, $X_P = (X_1, \dots, X_m)$ and $X_Q = (\tilde X_1, \dots, \tilde X_n)$ represent samples from $P_X$ and $Q_X$, respectively. Other popular approaches include KLIEP \citep{sugiyama2008direct}, Kernel Mean Matching \citep{gretton2009covariate}, and source discriminators \citep{bickel2009discriminative}. A key drawback, as mentioned earlier, is that (\ref{eq: LSIF}) varies greatly over the draw of the sample when $\calW$ is unbounded.

\subsection{Problem Statement}\label{section: problem statement}

Let $P$ and $Q$ be distributions over $\calX \times\calY$ which are related by the covariate shift assumption. We access i.i.d. examples from $P$ and $Q_X$ via oracles $\EX(P)$, $\EX(P_X)$, and $\EX(Q_X)$, from which we may obtain some dataset $\calD$. We do not assume access to $w^* = dQ / dP$. Instead, we assume access to some class of ratios $\calW$, and assume $w^* \in \calW$ (or a good approximation). Given $\alpha \in [0, 1]$ and confidence $\delta \in [0, 1]$ our goal is a threshold $\tau(\calD)$ satisfying the dataset-conditional guarantee 
\begin{align}\label{eq: conditional coverage}
    \Pr_{\calD}{\left[ \Pr_{Q}{[s(X, Y) \leq \tau(\calD)]} \geq 1 - \alpha \right]} \geq 1 - \delta,
\end{align}
where $s : \calX \times \calY \to \R$ is an arbitrary nonconformity score. Of course, this definition is too weak currently as it can easily be satisfied by outputting $\calY$ everywhere. Thus, we will also require upper bounds on the overcoverage, $\Pr_{Q}{[s(X, Y) \leq \tau(\calD)]} - (1 - \alpha)$. For our results, these upper bounds will be stated in terms of the bias $\Delta_B$ and an additional error parameter $\epsilon$.

\section{Clipped Least-Squares Importance Fitting}\label{section: CLISF}

In this section, we formally introduce our algorithm for learning the clipped importance weights, which we call clipped least-squares importance fitting (CLISF). In place of the class $\calW$, \Cref{alg: CLISF} solves (\ref{eq: LSIF}) over the \emph{clipped} class
\begin{align}\label{eq: clipped class}
    \calW_B \coloneqq \{ x \mapsto \min(w(x), B) : w \in \calW \}, \quad B \geq 1.
\end{align}

The advantage of this is two-fold. First, it requires fewer samples to obtain uniform convergence guarantees for (\ref{eq: LSIF}) over $\calW_B$ compared with $\calW$. Second, using clipped weights leads to a more stable estimation of the population distribution of nonconformity scores when used downstream for WCP. This reduces the variance of the estimate of the $(1 - \alpha)$-coverage nonconformity score threshold.

Of course, by clipping the class $\calW_B$, we introduce bias: if $\sup_{x \in \calX}{w^*(x)} > B$, we will not be able to recover $w^*$ and so lose out on exact coverage guarantees. Assuming that $w^* \in \calW$, We quantify this by the $L_1$-error between $w^*$ and the best approximation to $w^*$ in $\calW_B$, the clipped true weights $w^*_B(x) = \min(w^*(x), B)$. For $B \geq 1$, this can be written as an $f$-divergence between $P$ and $Q$:
\begin{align}\label{eq: clipping bias}
    \Delta_B \coloneqq \E_{P}[|w^*_B(X) - w^*(X)|] = \E_{P}[(w^*(X) - B)^+], \\\quad \text{where $x^+ \coloneqq \max(x, 0)$}\notag.
\end{align}
The clipping parameter $B$ allows us to toe this bias-variance tradeoff. When $B = 1$, then 
\begin{align}\label{eq: delta 1 is TV}
    \Delta_1 = \E_{P}[(w^*(X) - 1)^+] = \TV(P, Q),
\end{align}
where $\TV$ is the total variation distance. When $B \geq \sup_{x\in \calX}{w^*(x)}$, then clipping has no effect, so $\Delta_B = 0$ and we recover standard LSIF.

In the case that $w^* \not \in \calW$, we additionally present our results in terms of the misspecification:
\begin{align} \label{eq: class misspecification}
    &\Delta_R = \inf_{w \in \calW_B} R(w) - R(w^*_B), \\&\quad\text{where $R(w) = \E_P[w(X)^2 / 2] - \E_Q[w(X)]$}.
\end{align} 
Note that $w^* \in \calW$ implies zero misspecification:
\begin{align}
    w^* \in \calW \implies w^*_B \in \calW_B \implies \Delta_R = 0.
\end{align} 

\begin{algorithm}[t]
    \caption{Clipped Least-Squares Importance Fitting (CLISF)}
    \label{alg: CLISF}
    \begin{algorithmic}[1]
        \STATE \textbf{Input:} Density ratios $\calW \subseteq \R_+^{\calX}$, coverage error $\epsilon \in (0, 1]$, confidence $\delta \in (0, 1]$, clipping parameter $B \in [1, \infty)$, example oracles $\EX(P_X)$ and $\EX(Q_X)$.
        \STATE \textbf{Output:} Density ratio $\hat w : \calX \to [0, B]$.
        \STATE Set $X_\train \gets \EX(P_X)^{m_\train}$ and $X_\test \gets \EX(Q_X)^{m_\test}$, with $m_\train$ and $m_\test$ in \Cref{theorem: l2 generalization bound}.
        \STATE Set $\hat w \gets \arg \min_{w \in \calW_{B}}{\widehat R(w)}$, with $\calW_{B}$ as (\ref{eq: clipped class}) and $\widehat R$ as (\ref{eq: LSIF}).
        \STATE Return $\hat w$.
    \end{algorithmic}
\end{algorithm}

Our analysis relies on a standard assumption in statistical learning theory. Our guarantees are presented for Rademacher classes of density ratios. \Cref{assume: complexity upper bound} posits that we have access to a known upper bound on the complexity of $\calW_B$. We additionally assume that the Rademacher complexity of the class decays at the standard $1/\sqrt{m}$ rate. This holds, for example, for linear classes \citep{shalev2014understanding} and neural networks \citep{neyshabur2015norm}.

\begin{assumption}[Bounded complexity of $\calW_B$]\label{assume: complexity upper bound}
    Let $X_\train = (X_1, \dots, X_m) \sim P^m$ and $X_\test = (\tilde X_1, \dots, \tilde X_n) \sim Q^n$. For any $B \in [1 ,\infty)$, we assume universal constants $C_B$ and $\tilde C_B$ such that 
    \begin{align*}
        \E_{X_\train}[\rad_{X_\train}(\calW_B)] \leq C_B / \sqrt m \\
        \E_{X_\test}[\rad_{X_\test}(\calW_B)] \leq \tilde C_B/\sqrt n.
    \end{align*}
\end{assumption}

\begin{remark}\label{remark: decreasing complexity}
    Note that for any $B' > B$, $\calW_B$ can be written as the composition of $\calW_{B'}$ and the $1$-Lipschitz clipping function $x \mapsto \min(x, B)$. Thus, it follows from Talagrand's contraction principle that $C_B$ and $\tilde C_B$ are nondecreasing in $B$, i.e., more clipping will always result in a larger reduction in the statistical complexity of the density ratio class.\footnote{See \Cref{appendix: clipped class complexity} for sharper bounds on the complexity of the clipped class, under additional assumptions.}
\end{remark}

Our analysis begins by establishing a connection between the LSIF objective function and the $L_2$-error of the learned weights with respect to the true clipped weights $w^*_B$. The following lemma provides an excess risk inequality tailored to our clipped setting. It shows that minimizing the population LSIF risk over the clipped class $\calW_B$ is equivalent to finding the function in that class with the minimum squared error with respect to $w^*_B$.

\begin{lemma}[Excess risk transfer inequality for clipped ratios]\label{lem: risk to lsif}
    Let $P, Q$ and $w^* \in \calW$ be as defined in \Cref{section: problem statement}. Define $\calW_B$ as in (\ref{eq: clipped class}). Define the clipped true weights $w^*_B(x) = \min(w^*(x), B)$. Then, it holds that $w^*_B \in \arg \min_{w \in \calW_B}{R(w)}$. Additionally, for any $w \in \calW_B$, it holds that
    \begin{align*}
        \E_{P}{\big[(w(X) - w^*_B(X))^2\big]} \leq 2\cdot (R(w) - R(w_B^*)).
    \end{align*} 
\end{lemma}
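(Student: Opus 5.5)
Our plan is to rewrite the LSIF risk as a squared distance to $w^*$ plus a constant, and then exploit the pointwise structure of clipping. By the change of measure identity, $\E_Q[w(X)] = \E_P[w^*(X) w(X)]$. Hence for any $w$ with $\E_P[w^2]<\infty$ we have
\begin{align*}
R(w) = \tfrac12 \E_P\big[(w(X) - w^*(X))^2\big] - \tfrac12 \E_P\big[w^*(X)^2\big].
\end{align*}
If $\E_P[(w^*)^2]$ is infinite, we instead work with the difference $R(w) - R(w^*_B)$ directly, since all functions in $\calW_B$ are bounded by $B$. This gives
\begin{align*}
R(w) - R(w^*_B) = \E_P\Big[\tfrac12 w^2 - \tfrac12 (w^*_B)^2 - w^*(w - w^*_B)\Big],
\end{align*}
which is finite. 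This is the formula we will use.

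Next, we analyze the integrand pointwise for $a = w(x)\in[0,B]$ (recall $\calW\subseteq\R_+^{\calX}$) and $b = w^*(x)$, with $c=\min(b,B)$. We write
\begin{align*}
\tfrac12 a^2 - \tfrac12 c^2 - b(a-c) = \tfrac12 (a-c)^2 + (c-b)(a-c).
\end{align*}
This identity follows from $\tfrac12 a^2-\tfrac12 c^2 = \tfrac12(a-c)^2 + c(a-c)$. The cross term is nonnegative in both cases:
\begin{itemize}
\item If $b\le B$, then $c=b$ and the cross term is zero.
\item If $b>B$, then $c=B$, so $c-b<0$, and $a\le B$ gives $a-c\le 0$. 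Their product is therefore $\ge 0$.
\end{itemize}
Thus the integrand is at least $\tfrac12(a-c)^2$.

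Taking expectations under $P$ yields $R(w)-R(w^*_B)\ge \tfrac12\E_P[(w-w^*_B)^2]$, which is the claimed inequality after multiplying by $2$. The same bound shows $R(w)\ge R(w^*_B)$ for every $w\in\calW_B$. Since $w^*\in\calW$ implies $w^*_B\in\calW_B$, this proves $w^*_B\in\arg\min_{\calW_B}R$.

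The only delicate step is integrability: we need $\E_Q[w]$ to equal $\E_P[w^* w]$ and the expectations to split. Both are fine because $0\le w\le B$ and $\E_P[w^*]=1$, so every term is integrable, and the cross term's sign does not depend on unbounded moments of $w^*$.
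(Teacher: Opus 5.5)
Your proof is correct and follows essentially the same route as the paper: both compare the integrands pointwise, split on whether $w^*(x)\le B$, and use $w(x)\le B$ to show the cross term $(w^*_B(x)-w^*(x))(w(x)-w^*_B(x))$ is nonnegative when $w^*(x)>B$ (the paper writes this as $(w-B)(w+B-2w^*)\ge (w-B)^2$). Your choice to work directly with $R(w)-R(w^*_B)$ makes the argument more rigorous. The paper's decomposition $R(w)=\tfrac12\E_P[(w(X)-w^*(X))^2]+C$ has $C=-\tfrac12\E_P[w^*(X)^2]$, which can be $-\infty$ in exactly the heavy-tailed regimes the paper targets.
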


We now provide a finite-sample generalization bound for the output of CLISF. By combining the result of Lemma \ref{lem: risk to lsif} with standard uniform convergence guarantees for empirical risk minimization, the following theorem establishes that with high probability, CLISF returns a weight function $\hat w$ with low $L_2$-error relative to $w_B^*$ (the clipped true ratio). The sample complexity notably depends on the clipping parameter $B$ and the Rademacher complexity of the \emph{clipped} function class $\calW_B$.

\begin{theorem}[$L_2$-error generalization bound]\label{theorem: l2 generalization bound}
    Assume \Cref{assume: complexity upper bound} holds. Suppose we run \Cref{alg: CLISF} with sample sizes $m_\train, m_\test$, where 
    \begin{align*}
        m_{\train} &= \calO\left( \frac{B^2 C_B^2 + B^4 \log(1 / \delta)}{\epsilon^2} \right), \\
        m_{\test} &= \calO\left( \frac{\tilde C_B^2 + B^2 \log(1 / \delta)}{\epsilon^2} \right).
    \end{align*}
    Let $\hat w$ be the output of the call to \Cref{alg: CLISF}. Then, with probability at least $1 - \delta$,
    \begin{align*}
        \E_{P}[(\hat w(X) - w^*_B(X))^2] \leq 2\Delta_R + \epsilon.
    \end{align*}
\end{theorem}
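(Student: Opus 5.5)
We combine \Cref{lem: risk to lsif} with a uniform convergence bound for the empirical LSIF risk $\widehat R$ over the clipped class $\calW_B$. Let $w^\circ \in \arg\min_{w \in \calW_B} R(w)$, or an approximate minimizer if the infimum is not attained; by definition $R(w^\circ) - R(w^*_B) = \Delta_R$. Apply \Cref{lem: risk to lsif} to $\hat w \in \calW_B$:
\[
\E_P[(\hat w(X) - w^*_B(X))^2] \le 2\big(R(\hat w) - R(w^*_B)\big) = 2\big(R(\hat w) - R(w^\circ)\big) + 2\Delta_R .
\]
So it suffices to show that $R(\hat w) - R(w^\circ) \le \epsilon/2$ with probability at least $1-\delta$. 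The lemma's first claim is stated for $w^* \in \calW$; in the misspecified case we use its second inequality only. That inequality follows from the pointwise identity $(w^2/2 - w w^*) - (w_B^{*2}/2 - w_B^* w^*) = (w-w_B^*)^2/2 + (w - w^*_B)(w^*_B - w^*)$, where the cross term is nonnegative whenever $w \le B$. This needs only $w^*$, not $w^* \in \calW$, so we may assume it in general.

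The standard ERM decomposition gives
\[
R(\hat w) - R(w^\circ) = \big[R(\hat w) - \widehat R(\hat w)\big] + \big[\widehat R(\hat w) - \widehat R(w^\circ)\big] + \big[\widehat R(w^\circ) - R(w^\circ)\big] \le 2\sup_{w \in \calW_B} |R(w) - \widehat R(w)|,
\]
because the middle term is $\le 0$. We then split $R - \widehat R$ into its two independent parts. The first is the train part, $\sup_w |\E_P[w^2/2] - \frac{1}{2m}\sum_i w(X_i)^2|$. The second is the test part, $\sup_w |\E_Q[w] - \frac1n \sum_j w(\tilde X_j)|$. We give each failure probability $\delta/2$ and each a target accuracy of order $\epsilon$.

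For the test part, functions in $\calW_B$ take values in $[0,B]$. Symmetrization and \Cref{assume: complexity upper bound} bound the expectation of the supremum by $2\tilde C_B/\sqrt n$. McDiarmid's inequality, with bounded differences $B/n$, adds $B\sqrt{\log(4/\delta)/(2n)}$. Requiring both terms to be at most $\epsilon/16$ gives the stated $m_\test = \calO((\tilde C_B^2 + B^2\log(1/\delta))/\epsilon^2)$.

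For the train part, the loss $u \mapsto u^2/2$ is $B$-Lipschitz on $[0,B]$. Talagrand's contraction lemma therefore bounds the Rademacher complexity of $\{w^2/2 : w \in \calW_B\}$ by $B\,\rad(\calW_B)$, so the expected supremum is at most $2BC_B/\sqrt m$. The squared loss takes values in $[0,B^2/2]$, so McDiarmid's bounded differences are $B^2/(2m)$, which yields a deviation term of order $B^2\sqrt{\log(1/\delta)/m}$. Setting both terms to order $\epsilon$ gives $m_\train = \calO((B^2C_B^2 + B^4\log(1/\delta))/\epsilon^2)$, which explains the $B^4$.

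A union bound over the two events then gives $\sup_w|R-\widehat R| \le \epsilon/4$, hence $R(\hat w) - R(w^\circ) \le \epsilon/2$, and the claim follows.

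The main obstacle is bookkeeping rather than any conceptual step. The contraction step needs the loss to be Lipschitz, which holds only because the class is clipped to $[0,B]$; this is exactly where clipping enters. If $\Delta_R$ is only an infimum, we handle it with an approximate minimizer $w^\circ$ at slack $\epsilon/8$ and absorb the slack into the constants.
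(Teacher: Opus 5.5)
Your proposal is correct and follows the paper's proof. It uses the same risk-transfer inequality (whose second part, as you note, is pointwise and does not need $w^* \in \calW$) and the same ERM comparison $R(\hat w) \le \inf_{w \in \calW_B} R(w) + 2\sup_{w}|\widehat R(w) - R(w)|$. The uniform-convergence bound is also split the same way, into the squared train term and the linear test term, each handled by symmetrization, the $B$-Lipschitz contraction of $u \mapsto u^2/2$ on $[0,B]$, \Cref{assume: complexity upper bound}, and a $\delta/2$ union bound at total accuracy $\epsilon/4$.

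The only real difference is the concentration step. You use McDiarmid's bounded-differences inequality, whereas the paper uses Bousquet's variance-sensitive inequality with variance proxies $UB^3$ and $UB$ (where $\E_P[w(X)] \le U$). With the coarse choice $U \le B$ made in the theorem, both give the same $B^4$ and $B^2$ orders, so your route is simpler. Only the variance-sensitive version, however, gives the improved $B^3$ and $B$ dependence the paper claims in its remark for known $P$, where $U \le 1$.
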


\begin{remark}\label{remark: when P is known}
    When $P$ is known and $w^* \in \calW$, we may without loss of generality remove any functions from $\calW$ which integrate to more than $1$ under $P$. This allows us to improve the dependence on $B$, which we formalize in \Cref{appendix: proof of L2 generalization bound}.
\end{remark}
 
\begin{remark}\label{remark: L2 to L1}
    By Jensen's inequality, we can convert an $L_2$-error guarantee to an $L_1$-error guarantee,
    \begin{align*}
        \E_{P}[(\hat w(X) - w^*_B(X))^2] \leq 2 \Delta_R +  \epsilon \\
        \implies \E_{P}[|\hat w(X) - w^*_B(X)|] \leq \sqrt{2 \Delta_R + \epsilon}.
    \end{align*}
\end{remark}

\begin{remark}\label{remark: computational complexity}
    \Cref{theorem: l2 generalization bound} does not say anything about the computational complexity of the clipped least-squares minimization problem. For example, for linear-in-features classes, for which the unclipped problem is convex \citep{kanamori2009least}, the clipped problem is nonconvex, and similar to ReLU regression, for which there are many hardness results \citep{goel2020tight}. Thus, to guarantee \Cref{alg: CLISF} is computationally efficient, we must make additional assumptions on $\calW$. \footnote{For example, it is sufficient to assume a piecewise constant structure as in \cite{bhattacharyya2024group} or \cite{park2021pac}. We formalize this in \Cref{appendix: CLISF piecewise constant}.}
\end{remark}

\subsection{Estimating the Clipping Bias}\label{section: estimating the bias}

The $B$-clipping bias defined in (\ref{eq: clipping bias}) can alternatively be written as 
\begin{align}
    \Delta_B &\coloneqq \E_{P}[(w^*(X) - B)^+] \notag\\
             &= \E_{P}[w^*(X) - w^*_B(X)] = 1 - \E_{P}[w^*_B(X)]. \label{eq: rewritten bias}
\end{align}
Motivated by the results above, suppose we have a clipped ratio estimate $\hat w : \calX \to [0, B]$ such that $\E_P[|\hat w(X) - w^*_B(X)|] \leq \epsilon$. We define the bias estimate
\begin{align}
    \widehat \Delta_B \coloneqq 1 - \frac 1 m\sum_{i=1}^{m}{\hat w(X_i)} 
\end{align}
where $X_1, \dots, X_m$ represent a \emph{bias estimation} sample. Since $\hat w$ is bounded, we may apply concentration inequalities to show that $\widehat \Delta_B$ sharply concentrates around its expectation $1 - \E_P[\hat w(X)]$. Furthermore, $\E_P[\hat w(X)] \approx \E_P[w_B^*(X)]$ due to the $L_1$-error guarantee of $\hat w$. Thus, given a learned clipped ratio, we are may obtain a tight estimate of $\Delta_B$. This is summarized in the following lemma. 

\begin{lemma}\label{lemma: estimating the bias}
    Suppose $\hat w : \calX \to [0, B]$ satisfies $\E_P[|\hat w(X) - w^*_B(X)|] \leq \epsilon$. Let $X_1, \dots, X_m \sim P_X$ be an i.i.d. sample. Then, for any $\gamma > 0$,
    \begin{align*}
        \Pr{\left[ \left| \widehat \Delta_B - \Delta_B \right| > \epsilon + \gamma \right]} \leq 2 \exp\left(-\frac{\gamma^2m}{2B(1 + \epsilon + \gamma)}\right).
    \end{align*}
\end{lemma}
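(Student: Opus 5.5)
We split the error into a deterministic bias term and a stochastic fluctuation term. By (\ref{eq: rewritten bias}), $\Delta_B = 1 - \E_P[w^*_B(X)]$. Writing $\mu \coloneqq \E_P[\hat w(X)]$, the triangle inequality gives
\begin{align*}
    \bigl|\widehat \Delta_B - \Delta_B\bigr| \leq \Bigl|\frac 1m \sum_{i=1}^m \hat w(X_i) - \mu\Bigr| + \bigl|\mu - \E_P[w^*_B(X)]\bigr|.
\end{align*}
The second term is at most $\E_P[|\hat w(X) - w^*_B(X)|] \leq \epsilon$ by Jensen's inequality and the hypothesis. So the event $|\widehat \Delta_B - \Delta_B| > \epsilon + \gamma$ is contained in the event that the empirical mean of $\hat w(X_i)$ deviates from $\mu$ by more than $\gamma$. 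We treat $\hat w$ as fixed, independent of the bias-estimation sample, so the $\hat w(X_i)$ are i.i.d.

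For the fluctuation term we apply Bernstein's inequality rather than Hoeffding's, which would give $B^2$ in the exponent. The variables $Z_i = \hat w(X_i)$ take values in $[0, B]$, so $|Z_i - \mu| \leq B$. For the variance we use $\Var(Z_i) \leq \E[Z_i^2] \leq B\,\E[Z_i] = B\mu$. Moreover $\mu \leq \E_P[w^*_B(X)] + \epsilon \leq 1 + \epsilon$, because $w^*_B \leq w^*$ and $\E_P[w^*] = 1$. Hence $\sigma^2 \leq B(1+\epsilon)$.

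Two-sided Bernstein then gives
\begin{align*}
    \Pr\Bigl[\Bigl|\frac1m\sum_i Z_i - \mu\Bigr| > \gamma\Bigr] \leq 2\exp\left(-\frac{m\gamma^2}{2\sigma^2 + \tfrac{2}{3}B\gamma}\right) \leq 2\exp\left(-\frac{m\gamma^2}{2B(1+\epsilon) + \tfrac{2}{3}B\gamma}\right).
\end{align*}
Since $\tfrac23 B\gamma \leq 2B\gamma$, the denominator is at most $2B(1+\epsilon+\gamma)$, which yields the stated bound.

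The main obstacle is matching constants: the Bernstein range term must be absorbed into the $\gamma$ in $1+\epsilon+\gamma$. Using the one-sided range bound $Z_i - \mu \leq B$ in the appropriate form, or the cruder bound $|Z_i-\mu|\le B$ as above, suffices because of the slack between $\tfrac23$ and $2$. If a sharper form is wanted, a multiplicative Chernoff bound applied to $Z_i/B \in [0,1]$ with mean $\mu/B$ gives the same expression directly.
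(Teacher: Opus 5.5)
Your proposal is correct and follows the paper's proof essentially step for step. The shared steps are: Bernstein's inequality for the empirical mean of the $\hat w(X_i)$; the variance bound $\Var_P[\hat w(X)] \le B\,\E_P[\hat w(X)] \le B(1+\epsilon)$, which uses $w^*_B \le w^*$; absorbing $\tfrac23 B\gamma$ into $2B\gamma$; and the triangle inequality $|\E_P[\hat w(X)] - \E_P[w^*_B(X)]| \le \epsilon$ to pass from $1 - \E_P[\hat w(X)]$ to $\Delta_B$.
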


\subsection{Choosing the Clipping Parameter}

In this section, we discuss strategies to select the clipping parameter $B$. A good choice of $B$ is critical to balance the bias-variance tradeoff inherent in clipped importance weighting. A small $B$ aggressively clips the weights, which reduces the variance of the conformal predictor but introduces a potentially large clipping bias, leading to overcoverage. Conversely, a large $B$ reduces this bias but can lead to unstable predictors, especially when the true density ratio is unbounded. 

Because the setting of $B$ affects the variance of the CLISF objective, conventional model selection techniques such as cross-validation can be unreliable. Cross-validation requires a stable estimate of out-of-sample performance to choose a hyperparameter. However, the CLISF objective itself can be a high-variance estimator, particularly for large values of $B$ that permit large weights. The objective contains a term quadratic in the weights, and when the true density ratio is heavy-tailed, this term makes the empirical risk highly sensitive to the specific data sample. As a result, the value of $B$ chosen by cross-validation can vary significantly with different random splits of the data.

\noindent\textbf{Choosing $B$ via structural risk minimization.} Structural risk minimization (SRM) (see \cite{logusi1996concept} and \cite{koltchinskii2001rademacher}) offers a data-driven approach for selecting $B$. Clipping $\calW$ creates a hierarchy of increasingly complex function classes $\{ \calW_B : B \geq 1\}$. SRM selects the class from this hierarchy that minimizes an upper bound on the true risk. This involves choosing $B^*$ that minimizes the sum of the empirical CLISF risk minimizer and a complexity penalty derived from our uniform convergence bounds (\Cref{theorem: l2 generalization bound}), which depends on Rademacher complexity of $\calW_{B^*}$. We empirically validate this approach in \Cref{section: srm experiment}.

\noindent\textbf{Other approaches.} See \Cref{appendix: choosing B} for additional exploration of this topic.

\section{Weighted Conformal Prediction with Clipped Weights}\label{section: dataset conditional coverage guarantees}

In this section, we analyze the performance of WCP when run with a clipped density ratio $\hat w$ learned by CLISF. We broadly refer to this approach as clipped weighted conformal prediction (CWCP). 

\subsection{Warmup: Expected Coverage Guarantees}

As a warmup, we show that CWCP can restore the expected marginal coverage guarantee. The intuition is as follows: for a learned clipped density ratio $\hat w$ satisfying $\E_P[|\hat w(X) - w_B^*(X)|] \leq \epsilon$, the triangular inequality yields $\E_P[|\hat w(X) - w^*(X)|] \leq \Delta_B + \epsilon$. Thus, we can apply a similar result to Proposition 1 of \citet{lei2021conformal} (see \Cref{lem: cdf error bound} in the appendix) to bound the expected undercoverage by $\Delta_B + \epsilon$. Since we can accurately estimate $\Delta_B$ (see \Cref{lemma: estimating the bias}), we can thus precisely estimate the correction we need to account for the error in the learned density ratio $\hat w$. Below, we make this intuition formal while also accounting for the misspecification $\Delta_R$.

\begin{theorem}[CWCP achieves expected coverage]\label{theorem: expected coverage}
    Suppose $\hat w : \calX \to [0, B]$ satisfies $\E_P[|\hat w(X) - w_B^*(X)|] \leq \sqrt{2 \Delta_R} +  \epsilon$ and $\widehat \Delta_B \in \R$ satisfies $|\widehat \Delta_B - \Delta_B| \leq \sqrt{2 \Delta_R} + 2 \epsilon$. Suppose we run WCP with weights $\hat w$ at a coverage level of $1 - \alpha + \widehat \Delta_B + 3 \epsilon$, with an i.i.d. calibration set $X_\calibrate = (X_1, Y_1), \dots, (X_m, Y_m) \sim P$ and obtain prediction sets $C_\tau(x) = \{y \in Y : s(x, y) \leq \tau\}$. Then,
    \begin{align*}
        1 - \alpha - 2 \sqrt{2\Delta_R}\leq \Pr_{X_\calibrate, Q}[Y \in C_\tau(X)].
    \end{align*}
\end{theorem}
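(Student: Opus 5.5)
Our plan is to reduce to a Lei--Cand\`es-type bound: running WCP with estimated weights $\hat w$ at target $1-\alpha'$ gives expected coverage at least $1-\alpha' - \E_P[|\hat w(X)-w^*(X)|]$ up to normalization. We will use \Cref{lem: cdf error bound} in the appendix, the analogue of Proposition 1 of \citet{lei2021conformal}, in this form. With $\alpha' = \alpha - \widehat\Delta_B - 3\epsilon$ it suffices to show
\[
\E_P[|\hat w(X)-w^*(X)|] \le \widehat\Delta_B + 3\epsilon + 2\sqrt{2\Delta_R}.
\]

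\textbf{Step 1 (triangle inequality).} Write $|\hat w - w^*| \le |\hat w - w^*_B| + |w^*_B - w^*|$. Taking expectations under $P$ and using (\ref{eq: clipping bias}) gives
\[
\E_P|\hat w - w^*| \le \sqrt{2\Delta_R} + \epsilon + \Delta_B .
\]

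\textbf{Step 2 (replace the unknown bias by its estimate).} The hypothesis $|\widehat\Delta_B - \Delta_B|\le \sqrt{2\Delta_R}+2\epsilon$ gives $\Delta_B \le \widehat\Delta_B + \sqrt{2\Delta_R} + 2\epsilon$. Combining with Step 1,
\[
\E_P|\hat w - w^*| \le \widehat\Delta_B + 3\epsilon + 2\sqrt{2\Delta_R}.
\]
This is exactly the inflation used in the target level, with the residual $2\sqrt{2\Delta_R}$ left over. Plugging into the lemma, the coverage is at least $1-\alpha+\widehat\Delta_B+3\epsilon - \E_P|\hat w-w^*| \ge 1-\alpha-2\sqrt{2\Delta_R}$. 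If the inflated level exceeds $1$, WCP outputs $\tau=\infty$ and coverage is trivially $1$.

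\textbf{Main obstacle.} The delicate step is making sure the CDF-error lemma applies with \emph{unnormalized} $\hat w$, since $\E_P[\hat w]\neq 1$ in general; this is unlike the setting of \citet{lei2021conformal}, where the normalized weighted CDF is used. We first aim to confirm that the appendix lemma bounds the gap between the weighted score CDF $F_P(t,\hat w)$ and $F_Q(t)$ by $\E_P|\hat w - w^*|$ with constant $1$. This would follow from
\[
\left|\frac{\E_P[\hat w\,\1]}{\E_P\hat w} - \E_P[w^*\1]\right| \le \frac{|\E_P[(\hat w - w^*)\1]| + |1-\E_P\hat w|\,\E_P[\hat w \1]/\E_P \hat w}{1},
\]
using the convexity trick that $\E_P[\hat w\1]/\E_P\hat w\le 1$. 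If that route only gives constant $2$, we would instead reuse the exchangeability argument of \citet{tibshirani2019conformal}, treating the test point's weight as in WCP, to get expected coverage $\ge 1-\alpha' - \frac12\E_P|\hat w/\E\hat w - w^*|$, which again is bounded by $\E_P|\hat w - w^*|$.

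We will also need to check that the empirical weighted quantile, including the test-point weight that WCP adds, still yields the population bound in expectation over $X_\calibrate$. Here we would cite the lemma as stated rather than reprove it.
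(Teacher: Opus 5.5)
Your proposal is correct and follows the paper's proof essentially step for step. The paper starts from Corollary~1 of \citet{tibshirani2019conformal}: coverage under the $\hat w$-tilted distribution $\widehat Q$ is at least the inflated level, and this, not \Cref{lem: cdf error bound}, is what supplies your ``empirical weighted quantile gives the population bound'' step. It then applies \Cref{lem: cdf error bound} with $w_2 = w^*$ and finishes with the same triangle inequality through $w^*_B$ and the same substitution of $\widehat\Delta_B$. Your normalization worry is settled by the lemma as stated: its denominator $\max(\E_P[\hat w(X)], \E_P[w^*(X)]) \geq 1$ gives constant $1$ directly. Your explicit display on its own only yields constant $2$, but your total-variation fallback is exactly how the paper proves the lemma.
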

To understand \Cref{theorem: expected coverage}, let us first parse the conditions $\E_P[|\hat w(X) - w_B^*(X)|] \leq \sqrt{2 \Delta_R} +  \epsilon$ and $|\widehat \Delta_B - \Delta_B| \leq \sqrt{2 \Delta_R} + 2 \epsilon$. This separates the $L_1$-error of $\hat w$ and $\widehat\Delta_B$ into two components: a misspecification error $\sqrt{2\Delta_R}$, and a ``finite-sample" error which must be $O(\epsilon)$. Note that, by combining \Cref{theorem: l2 generalization bound}, \Cref{remark: L2 to L1}, and \Cref{lemma: estimating the bias}, we may obtain $\hat w$ and $\widehat \Delta_B$ satisfying these conditions. By following this approach, note that we will know (an upper bound) on the finite-sample error (as the sample complexity bound of CLISF allows us to precisely control $\epsilon$ in terms of the sample size) but we \emph{will not} be able to estimate $\Delta_R$. Thus, \Cref{theorem: expected coverage} states by slightly inflating the coverage by the term $\widehat \Delta_B + 3 \epsilon$, we are able to correct the undercoverage due to the clipping bias $\Delta_B$ and the finite-sample error --- in other words, the only source of undercoverage will be due to misspecification in the model class. This is to be expected: if there is misspecification in $\calW$, then in general no algorithm can hope to exactly recover 
$w^*$ or $w^*_B$ in a reasonable number of samples.

\subsection{Dataset-Conditional Coverage Guarantees}

Next, we show that CWCP restores the dataset-conditional marginal coverage guarantee (\ref{eq: conditional coverage}). Similar to the expected coverage setting, we run WCP with an inflated coverage level. Unlike \Cref{theorem: expected coverage}, which holds regardless of the calibration set size, we now enforce that our calibration set is large enough to ensure that the weighted empirical CDF is a good approximation everywhere to the true distribution of nonconformity scores under $Q$. This relies on a weighted DKW inequality (see \cite{pournaderi2024training}), which is enabled by our use of clipped weights. 

Our analysis relies on a standard assumption in conformal prediction for establishing upper bounds on coverage, that the CDF of the nonconformity scores is continuous. This is a mild technical condition that ensures quantiles are unique (see, e.g. Proposition 1 of \cite{lei2021conformal} or Theorem 34 of \cite{roth2022uncertain}).

\begin{assumption}\label{assume: continuous population CDF}
    The cumulative distribution function of the nonconformity score is continuous.
\end{assumption}

We additionally require that the true bias $\Delta_B$ is not too large. From (\ref{eq: delta 1 is TV}), we know that $\Delta_B \leq 1$ for $B\geq 1$. We assume that $\Delta_B < 1$, i.e., that the bias is \emph{strictly lower} than $1$. Below, the choice of $1/2$ as the upper limit is arbitrary, and any choice in $(0, 1)$ will work with our proof, affecting only the final constants. Furthermore, since we control $B$, we may choose it large enough so that $\Delta_B \leq 1/2$ holds. Thus, we view this assumption as mild and primarily made for ease of exposition.
    
\begin{assumption}\label{assume: small bias}
    The bias is not too large: $\Delta_B \leq 1/2$.
\end{assumption}


\begin{theorem}[CWCP achieves dataset-conditional coverage]\label{theorem: conditional coverage}
    Assume Assumptions \ref{assume: continuous population CDF} and \ref{assume: small bias} hold. Suppose $\hat w: \calX \to [0, B]$ satisfies $\E_P[|\hat w(X) - w_B^*(X)|] \leq \sqrt{2 \Delta_R} + \epsilon$ and $\widehat \Delta_B \in \R$ satisfies $|\widehat \Delta_B - \Delta_B| \leq \sqrt{2 \Delta_R} +2 \epsilon$, for some $\epsilon$ such that $\sqrt{2 \Delta_R} + \epsilon \leq 1/4$. Suppose we run WCP with weights $\hat w$ at an inflated coverage level of $1 - \alpha + \widehat \Delta_B + 5 \epsilon$, with an i.i.d. calibration set $X_\calibrate = (X_1, Y_1), \dots, (X_m, Y_m) \sim P$, where $m = \calO\left(\frac{B \log(1/\epsilon) + B\log(1/\delta)}{\epsilon^2} + \frac{ B^2 \log(1 /\delta)}{\epsilon^2}\right)$, and $C_\tau(x) = \{y \in \calY : s(x, y) \leq \tau\}$. Then,
    \begin{align*}
        \Pr_{X_\calibrate}\bigg[1 - \alpha - 2\sqrt{2\Delta_R}\leq \Pr_{Q}\left[ Y \in C_\tau(X) \right] \\
        \leq 1 - \alpha + 2\Delta_B + 12 \epsilon + 2\sqrt{2\Delta_R}\bigg] \geq 1 - \delta.
    \end{align*}
\end{theorem}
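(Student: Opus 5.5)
The plan is to compare three CDFs: the empirical weighted CDF $\widehat F(t) \coloneqq F_{X_\calibrate}(t,\hat w)$ from (\ref{eq: WCP}), its population counterpart $F_P(t,\hat w)$ from (\ref{eq: weighted score CDF}), and the target $F_Q(t)$. The argument has a deterministic part and a probabilistic part.

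\textbf{Deterministic part: $F_P(\cdot,\hat w)$ versus $F_Q$.} By the triangle inequality, $\E_P|\hat w - w^*| \le \Delta_B + \sqrt{2\Delta_R} + \epsilon \eqqcolon \eta$. Write $F_P(t,\hat w) = N(t)/D$ with $N(t) = \E_P[\hat w\,\1[s\le t]]$ and $D = \E_P[\hat w]$. The two sides are bounded differently.
\begin{itemize}
\item \emph{Numerator.} Since $\hat w \le B$ and clipping only lowers weights, the relevant comparison is $N(t)$ against $\E_P[w^*\1[s\le t]] = F_Q(t)$, and $|N(t) - F_Q(t)| \le \eta$.
\item \emph{Denominator.} The finer fact $|D - (1-\Delta_B)| \le \sqrt{2\Delta_R} + \epsilon$ holds, because $\E_P[w^*_B] = 1 - \Delta_B$.
\end{itemize}

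For the lower bound I would use only the one-sided fact $\E_P[(w^* - \hat w)^+]$-type control, giving $N(t) \le F_Q(t) + \sqrt{2\Delta_R} + \epsilon$, together with $D \ge 1 - \Delta_B - \sqrt{2\Delta_R} - \epsilon \ge 1/4$ (by Assumption \ref{assume: small bias} and $\sqrt{2\Delta_R} + \epsilon \le 1/4$). Hence $F_P(t,\hat w) \ge 1-\alpha'$ forces $F_Q(t) \gtrsim 1-\alpha' - \Delta_B - O(\sqrt{2\Delta_R} + \epsilon)$, since $N \ge (1-\alpha')D$.

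For the upper bound, $N(t) \ge \E_P[w^*_B\1[s\le t]] - (\sqrt{2\Delta_R} + \epsilon) \ge F_Q(t) - \Delta_B - (\sqrt{2\Delta_R} + \epsilon)$, and $D \le 1$. So $F_Q(t) \le F_P(t,\hat w) + \Delta_B + \sqrt{2\Delta_R} + \epsilon$.

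I would then plug in $\alpha' = \alpha - \widehat\Delta_B - 5\epsilon$ and use $|\widehat\Delta_B - \Delta_B| \le \sqrt{2\Delta_R} + 2\epsilon$. The inflation $\widehat\Delta_B$ cancels $\Delta_B$ on the lower side. On the upper side it contributes a second $\Delta_B$, which explains the $2\Delta_B$ term. Tracking constants should give $2\sqrt{2\Delta_R}$ and $12\epsilon$ after the uniform-deviation slack is added. Some care is needed with the factor $1/D$ when converting, which is where $D \ge 1/4$ (or $D \ge 1/2 - \ldots$) enters and some constant inflation appears.

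\textbf{Probabilistic part: $\widehat F$ versus $F_P(\cdot,\hat w)$.} I would show that with probability $1-\delta$ we have $\sup_t |\widehat F(t) - F_P(t,\hat w)| \le \epsilon$. This is the weighted DKW step and I expect it to be the main obstacle. Numerator and denominator are handled separately.
\begin{itemize}
\item \emph{Denominator.} $\frac1m\sum \hat w(X_i)$ concentrates around $D$ within $\epsilon/8$ by Hoeffding/Bernstein with range $B$. This costs $B^2\log(1/\delta)/\epsilon^2$ samples, and Bernstein with variance $\le B\,\E\hat w \le B\cdot O(1)$ improves this to $B\log(1/\delta)/\epsilon^2$.
\item \emph{Numerator.} I need uniform-in-$t$ concentration of $\frac1m\sum \hat w(X_i)\1[s_i\le t]$. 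Because $N(t)$ is monotone and continuous in $t$ (Assumption \ref{assume: continuous population CDF}), pick $O(1/\epsilon)$ grid points $t_j$ with $N(t_{j+1}) - N(t_j) \le \epsilon/8$. Apply Bernstein at each point (variance $\le B\,N(t) \le B$) with a union bound, costing $B\log(1/(\epsilon\delta))/\epsilon^2$. Then sandwich using monotonicity.
\end{itemize}
Dividing, with $D \ge 1/4$ bounded away from zero, converts this to an $O(\epsilon)$ sup-error on the ratio. This matches the stated $m$.

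\textbf{Conclusion.} On the good event, $\tau$ satisfies $\widehat F(\tau) \ge 1-\alpha'$ and $\widehat F(\tau^-) \le 1-\alpha'$. Continuity of the $Q$-score CDF, and hence of $N$, transfers both inequalities to $F_P(\cdot,\hat w)$ within $\epsilon$. Applying the deterministic bounds then yields the two-sided bound on $F_Q(\tau) = \Pr_Q[Y\in C_\tau(X)]$.
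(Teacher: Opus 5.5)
\textbf{The gap is in your deterministic comparison.} What is at stake is the coefficient of $\sqrt{2\Delta_R}$. It cannot be absorbed into $\epsilon$, because $\Delta_R$ does not shrink with the sample size. There are two problems.
\begin{itemize}
\item \emph{Upper bound.} You use $D\le 1$, which is false. If $\Delta_B$ is small and $\hat w$ overestimates $w_B^*$, then $D=\E_P[\hat w]$ can be as large as $1-\Delta_B+\sqrt{2\Delta_R}+\epsilon>1$, and then $N(t)>F_P(t,\hat w)$.
\item \emph{Lower bound.} You bound the numerator overshoot by $\sqrt{2\Delta_R}+\epsilon$ and the denominator deficit by $\Delta_B+\sqrt{2\Delta_R}+\epsilon$ separately. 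This gives $F_Q(t)\ge(1-\alpha')(1-\Delta_B-\sqrt{2\Delta_R}-\epsilon)-\sqrt{2\Delta_R}-\epsilon$, so the misspecification is charged nearly twice. After the $\sqrt{2\Delta_R}$ slack allowed in $\widehat\Delta_B$, the worst case gives roughly $1-\alpha-(3-\alpha)\sqrt{2\Delta_R}$. Careful constant tracking on these ingredients will not produce $2\sqrt{2\Delta_R}$.
\end{itemize}
Two smaller points. Continuity of $N$ needs the score law under $P$, not $Q$, to be atomless, since $\hat w$ can be positive where $w^*=0$. The quantity that bounds $N-F_Q$ from above is $\E_P[(\hat w-w^*)^+]$, not $\E_P[(w^*-\hat w)^+]$ as you wrote.

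\textbf{The missing idea: the same overestimation inflates both numerator and denominator,} so the two errors cannot be worst-case at once. Put $u=\E_P[(\hat w-w^*)^+]$ and $v=\E_P[(w^*-\hat w)^+]$. Then $D=1+u-v$ and $-v\le N(t)-F_Q(t)\le u$. Writing $F=F_P(t,\hat w)$,
\begin{align*}
F_Q(t)&\ge F(1+u-v)-u = F-\big[(1-F)u+Fv\big]\ge F-\max(u,v),\\
F_Q(t)&\le F(1+u-v)+v = F+\big[Fu+(1-F)v\big]\le F+\max(u,v).
\end{align*}
Hence $\sup_t|F_P(t,\hat w)-F_Q(t)|\le\E_P|\hat w-w^*|\le\Delta_B+\sqrt{2\Delta_R}+\epsilon$. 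This is exactly what \Cref{lem: cdf error bound} gives for the pair $(\hat w,w^*)$, whose normalizer is $\E_P[w^*]=1$. Combine it with your inversion $L-\epsilon\le F_P(\tau,\hat w)\le L+\epsilon$, where $L=1-\alpha+\widehat\Delta_B+5\epsilon$. This yields $1-\alpha-2\sqrt{2\Delta_R}+\epsilon\le F_Q(\tau)\le 1-\alpha+2\Delta_B+2\sqrt{2\Delta_R}+9\epsilon$, which implies the statement.

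\textbf{Comparison with the paper.} The paper chains $\hat w\to w_B^*\to w^*$. It pays $2(\sqrt{2\Delta_R}+\epsilon)$ on the first link (normalizer at least $1/2$). It reaches the coefficient $2$ only by using $|\widehat\Delta_B-\Delta_B|\le 2\epsilon$ in its final step. Under the stated hypothesis that chain also gives $3\sqrt{2\Delta_R}$, so the one-shot comparison with $w^*$ is what the stated constants actually need.

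\textbf{What is sound.} Your concentration step is fine and differs from the paper's, which simply cites \Cref{lem: weighted dkw} for $\hat w/\E_P[\hat w]$ using $\E_P[\hat w]\ge 1/4$. Your grid-plus-Bernstein argument is self-contained and even avoids the $B^2\log(1/\delta)/\epsilon^2$ term. Your quantile inversion via $\widehat F(\tau)\ge 1-\alpha'\ge\widehat F(\tau^-)$ is cleaner than the paper's jump argument.
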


\Cref{theorem: conditional coverage} requires that $\sqrt{2 \Delta_R} + \epsilon \leq 1/4$ and thus requires that $\Delta_R < 1/32$ (note that we did not optimize the constant $1/32$ in this condition and it can likely be improved; however, we do not know how to remove the restriction that $\Delta_R = \calO(1)$ from our proof). Nevertheless, we still believe this result to be of theoretical interest when $\calW$ is sufficiently rich or structural assumptions are imposed on $P$ and $Q$ which inform the choice of a class $\calW$ with zero misspecification error (for example, when $P$ and $Q$ are Gaussian, discrete, or piecewise constant, or when the density ratio is assumed to have a certain structure, such as linear-in-known-features).

By combining \Cref{theorem: conditional coverage} with \Cref{theorem: l2 generalization bound} and \Cref{lemma: estimating the bias}, we are able to obtain end-to-end high-probability dataset-conditional guarantees for CWCP with learned importance weights.

\begin{corollary}[End-to-end guarantees]\label{corollary: end to end}
    Assume that the conditions of \Cref{theorem: l2 generalization bound}, \Cref{lemma: estimating the bias}, and \Cref{theorem: conditional coverage} hold. Suppose we first learn a clipped density ratio $\hat w : \calX \to [0, B]$, where $\hat w  \gets \textsc{CLISF}(\calW, \epsilon^2, \delta, B, \EX(P_X), \EX(Q_X))$ as in \Cref{theorem: l2 generalization bound}. Second, we use $\hat w$ as in \Cref{lemma: estimating the bias} with an estimation sample size $m_\estimate = \calO(B \log (1 / \delta)/\epsilon^2)$ to get a bias estimate $\widehat \Delta_B$. Third, we use $\hat w$ and $\Delta_B$ as in \Cref{theorem: conditional coverage} to obtain prediction sets $C_\tau(X) = \{ y \in \calY : s(x, y) \leq \tau \}$. Then,
    \begin{align*}
        \Pr\bigg[1 - \alpha - 2\sqrt{2\Delta_R} \leq \Pr_{Q}\left[ Y \in C_\tau(X) \right] \\
        \leq 1 - \alpha + 2\Delta_B + 12 \epsilon- 2\sqrt{2\Delta_R}\bigg] \geq 1 - 3 \delta
    \end{align*}
    where the randomness is over the draw of the density ratio estimation sets, the bias estimation set, and the calibration set. Additionally, we require
    \begin{align*}
        \calO\left(\frac{B \log(1/\epsilon) + B\log(1/\delta)}{\epsilon^2} + \frac{ \log(1 /\delta)}{\epsilon^2}\right), \\
        \calO\left( \frac{B^2 C_B^2 + B^4 \log(1 / \delta)}{\epsilon^4} \right), \\
        \calO\left( \frac{\tilde C_B^2 + B^2 \log(1 / \delta)}{\epsilon^4} \right) 
    \end{align*}
    labeled examples from $P$, unlabeled examples from $P$, and unlabeled examples from $Q$, respectively.
\end{corollary}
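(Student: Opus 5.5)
The plan is to chain the three earlier results and combine their failure events with a union bound. The coverage statement itself is inherited from \Cref{theorem: conditional coverage}, once its two hypotheses on $\hat w$ and $\widehat\Delta_B$ are verified on a high-probability event.

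\textbf{Step 1 (density ratio).} Call \Cref{alg: CLISF} with error parameter $\epsilon^2$ in place of $\epsilon$. By \Cref{theorem: l2 generalization bound}, with probability at least $1-\delta$ over the density-ratio estimation samples,
\[
\E_P[(\hat w(X)-w^*_B(X))^2]\le 2\Delta_R+\epsilon^2 .
\]
The required sample sizes are $\calO((B^2C_B^2+B^4\log(1/\delta))/\epsilon^4)$ unlabeled points from $P$ and $\calO((\tilde C_B^2+B^2\log(1/\delta))/\epsilon^4)$ unlabeled points from $Q$, which matches the second and third counts in the statement. By \Cref{remark: L2 to L1} and $\sqrt{a+b}\le\sqrt a+\sqrt b$, this gives
\[
\E_P[|\hat w-w^*_B|]\le\sqrt{2\Delta_R}+\epsilon ,
\]
which is the first hypothesis of \Cref{theorem: conditional coverage}.

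\textbf{Step 2 (bias estimate).} Condition on $\hat w$ and draw a fresh sample of size $m_\estimate$. Apply \Cref{lemma: estimating the bias} with its $L_1$ accuracy equal to $\sqrt{2\Delta_R}+\epsilon$ and with $\gamma=\epsilon$. Since $\sqrt{2\Delta_R}+\epsilon\le 1/4$, the denominator $2B(1+\epsilon'+\gamma)$ is $\calO(B)$. Hence $m_\estimate=\calO(B\log(1/\delta)/\epsilon^2)$ makes the failure probability at most $\delta$. On the success event,
\[
|\widehat\Delta_B-\Delta_B|\le\sqrt{2\Delta_R}+2\epsilon ,
\]
which is the second hypothesis.

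\textbf{Step 3 (calibration).} Conditioned on Steps 1–2 succeeding, both hypotheses of \Cref{theorem: conditional coverage} hold. The calibration set is independent of the earlier samples, so that theorem applies conditionally with failure probability $\delta$, using the stated labeled sample size. A union bound over the three stages gives overall probability at least $1-3\delta$. This also yields the lower bound $1-\alpha-2\sqrt{2\Delta_R}$.

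\textbf{Main obstacle: the upper bound as worded.} The corollary's upper end is $1-\alpha+2\Delta_B+12\epsilon-2\sqrt{2\Delta_R}$, which is tighter than what \Cref{theorem: conditional coverage} delivers whenever $\Delta_R>0$. I would not expect to beat the theorem in general. Tracing its upper-bound argument, the overcoverage collects three contributions, each entering with a $+$ sign:
\begin{itemize}
\item the inflation $\widehat\Delta_B+5\epsilon\le\Delta_B+\sqrt{2\Delta_R}+7\epsilon$;
\item the CDF error $\E_P|\hat w-w^*|\le\Delta_B+\sqrt{2\Delta_R}+\epsilon$;
\item the weighted DKW error.
\end{itemize}
Therefore I would prove the stated upper bound in the regime where the two forms agree, namely the well-specified case $w^*\in\calW$. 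There $w^*_B\in\calW_B$, so $\Delta_R=0$, and the chained bound reads $1-\alpha+2\Delta_B+12\epsilon$ exactly as stated.

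For $\Delta_R>0$, I would first try to find a compensating negative term in the theorem's upper-bound derivation. I expect this to fail, since misspecification can only loosen the control. In that case the honest end-to-end guarantee is the theorem's $+2\sqrt{2\Delta_R}$ version, and the stated form is only established with $\Delta_R=0$.
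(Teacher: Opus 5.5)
Your proposal is correct and follows the paper's proof, which is a single union bound over the failure events of \Cref{theorem: l2 generalization bound} (run at accuracy $\epsilon^2$), \Cref{lemma: estimating the bias}, and \Cref{theorem: conditional coverage}; your check of the two hypotheses via Jensen and $\gamma=\epsilon$ just makes explicit what the paper leaves implicit. You are also right that the upper end should read $+2\sqrt{2\Delta_R}$, as in \Cref{theorem: conditional coverage}: the minus sign is a typo, and the stated form can genuinely fail when $\Delta_R>0$. By the same token, this chaining only yields the theorem's labeled-sample requirement, whose $B^2\log(1/\delta)/\epsilon^2$ term is missing from the corollary's first count, so you should quote that size rather than ``the stated'' one.
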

\begin{proof}
    We union bound the failure events of \Cref{theorem: l2 generalization bound}, \Cref{lemma: estimating the bias}, and \Cref{theorem: conditional coverage}.
\end{proof}

\subsection{Split Conformal vs. WCP vs. CWCP}

At the end of the day, a practitioner might wonder when to use split conformal prediction, weighted conformal prediction, or clipped weighted conformal prediction. As evidenced by \Cref{ex: needle}, CWCP is preferable to WCP when the true ratio $w^*$ has large higher moments under $P$ because it does not catastrophically undercover when the calibration set contains an input $x$ such that $w^*(x)$ is very large. However, a reader might note that, when applied to \Cref{ex: needle}, split conformal will also perform well: since split conformal achieves $1 - \alpha$ expected marginal coverage on $P$, then it will also achieve at least $1 - \alpha - \TV(P, Q)$ coverage on $Q$.

A few remarks are in order. First, under the setting of \Cref{theorem: conditional coverage} shows that CWCP \emph{does not} undercover (assuming no misspecification). In order to achieve the same guarantee with split conformal, a natural approach would be to inflate the prediction by a level of $\TV(P, Q) = \Delta_1$ --- this is the same correction used by CWCP with $B = 1$. In general, obtaining a good estimate of this quantity requires some machinery such as training a discriminative model (\cite{sreekumar2022neural} and \cite{tao2024discriminative}) or density ratio estimation. This remark is of particular interest when the resulting prediction sets are for downstream use by a risk-averse agent --- in this case, it is important to ensure minimal undercoverage, as \Cref{theorem: conditional coverage} does.

Second, a natural question is if there are problems which are (i) challenging for split conformal prediction, (ii) challenging for unclipped density ratio estimation methods and WCP, and (iii) not challenging for CWCP with a modest choice of $B$. In general, this will be the case when $w^*$ follows a power law. To illustrate this, let $P_X = U(0, 1)$ and define $w^*(x) = 1 / (2\sqrt{x})$. It is easily checked that $w^*$ defines a valid density ratio with $\E_P[w^*(X)^2] = \infty$ (infinite second moment) and thus will present a challenge for (unclipped) LSIF and WCP. On the other hand, since the tail probability of $w^*$ is $P(w^*(X) \geq t) = 1/(4t^2)$, we have
\begin{align*}
    \Delta_B = \E_P[(w^*(X) - B)^+] &= \int_B^\infty{P(w^*(X) \geq t) \ dt} \\
    &= \int_B^\infty{\frac{1}{4t^2} \ dt} = \frac{1}{4B}.
\end{align*}
In particular, note that $\Delta_1 = 1/4$, which implies that split conformal will significantly undercover. On the other hand, by taking $B = \calO(1/\epsilon)$, we may drive $\Delta_B \leq \epsilon$. In this example, CWCP is able to balance the advantages of both WCP (accounting for the covariate shift) and split conformal (low variance). In finance, power-law distributions are often used to model log-returns of a stock and trading volume \citep{gabaix2003theory}; when training models on one time period (e.g., pre-crisis) and testing on another (e.g., during crisis), the density ratios between these periods naturally inherits this heavy-tailed behavior. In medical studies, extreme density ratios are also common (\cite{li2019addressing}, \cite{gao2021more}). These lend credence to the practical applicability of CWCP.

\section{Experiments}\label{section: experiments}

We compare our method with the following baselines: WCP + LSIF \citep{tibshirani2019conformal} and likelihood-regularized quantile-regression (LR-QR) \citep{joshi2025likelihood}. For illustration, to demonstrate the necessity of accounting for covariate shift, we additionally include split conformal \citep{papadopoulos2002inductive} in our comparisons. For additional experiments, see \Cref{appendix: additional experiments}.

\subsection{Wildlife Camera Trap Data}

\begin{figure*}
    \centering
    \begin{subfigure}{0.49\textwidth}
        \includegraphics[width=\linewidth]{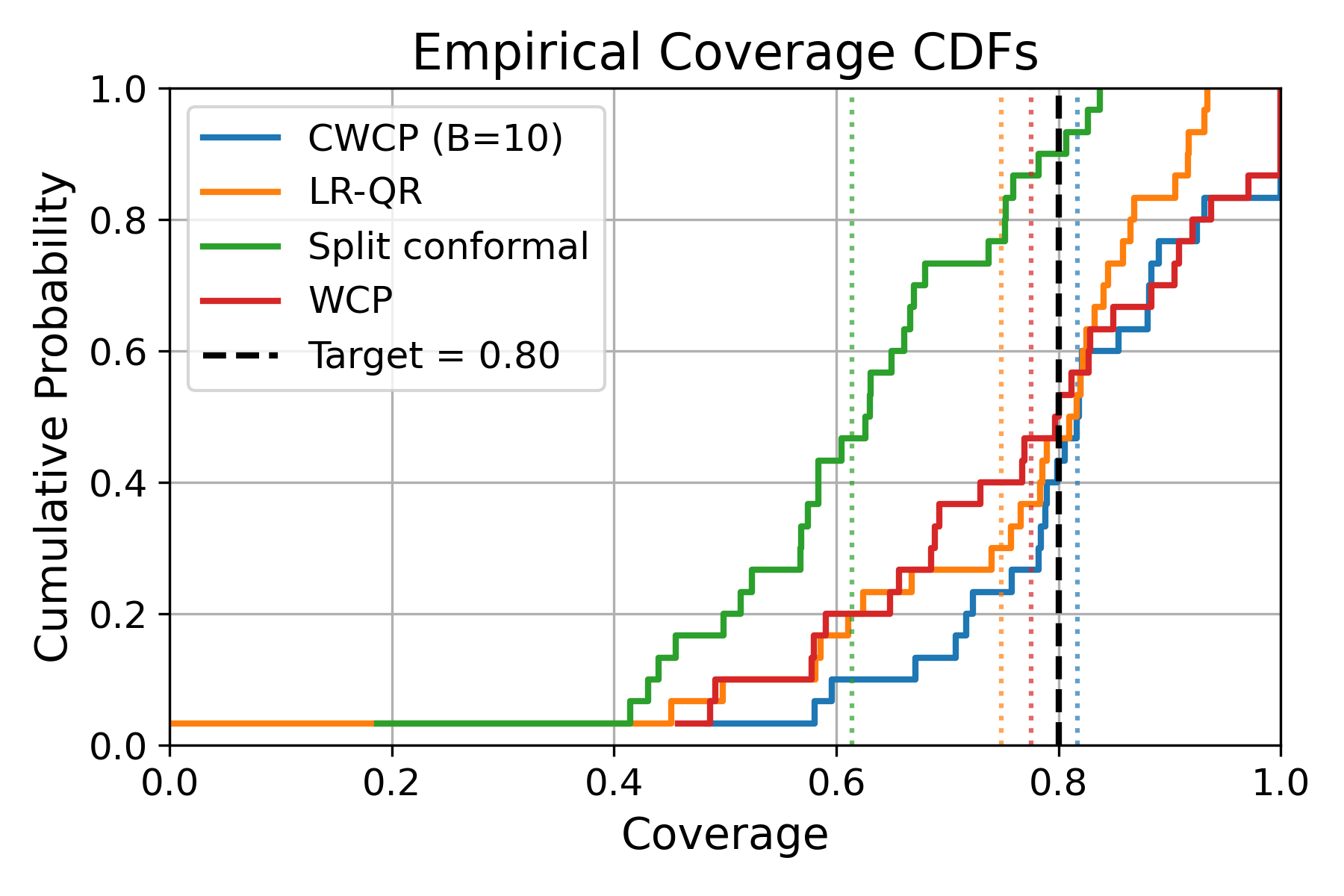}
        \caption{Coverage results for CWCP, LR-QR, split conformal, and WCP on iWildCam data.}
    \end{subfigure}
    \hfill
    \begin{subfigure}{0.49\textwidth}
        \includegraphics[width=\linewidth]{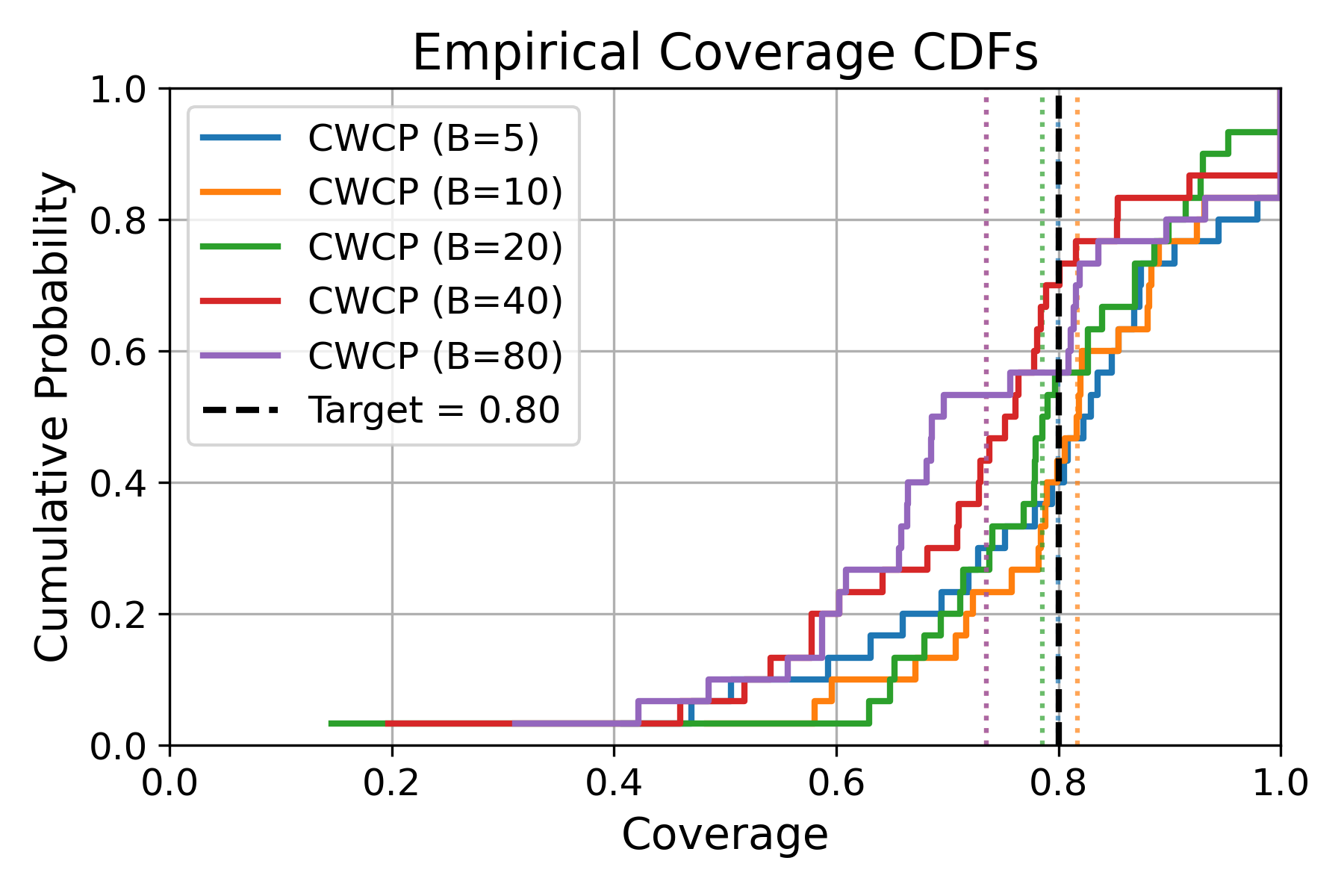}
        \caption{Ablation results for CWCP, varying $B$.}
    \end{subfigure}
    \caption{Experimental results on iWildCam. The solid colored lines show the distribution of coverage levels over $30$ trials. The colored dotted lines represent average coverage levels. Qualitatively, better performance is given by a CDF which looks like a step function about $0.8$.}
    \label{fig: iwildcam coverage}
\end{figure*}

We evaluate our method on the \emph{iWildCam} dataset \citep{beery2021iwildcam}, which contains $203029$ datapoints corresponding to photographs taken by wildlife cameras across the globe. These images additionally have metadata containing a location identifier; this was only used to split the data. The task is to classify the image as one of $182$ species based on a $224\times 224$ RGB photograph. The data was organized into $4$ splits: train, validation, in-distribution (ID) test, and out-of-distribution (OOD) test. No locations were shared between the $\{\text{train}, \text{validation}, \text{ID test}\}$ and OOD test splits. Thus, a covariate shift arises from differences in camera choice, ambient light, etc.

\noindent\textbf{Experimental details.} The nonconformity score was $1 - p(x)$, where $p$ was a model trained beforehand on the train split to predict class probabilities. This was done by finetuning a linear head over the representation layer of a pretrained image model. $\calW$ was defined similarly. To fit each conformal prediction method, we sample $20$ locations from the train set and $20$ locations from the OOD test set. We hold out half of the subsampled test set for evaluation; we discard the labels of the other half. We then train each method on the kept data and then find its coverage on the held out test set. We used a coverage level of $1 - \alpha = 0.8$. This was repeated for $30$ trials. 

\noindent\textbf{Results.} \Cref{fig: iwildcam coverage} displays the results. Notably, split conformal has significant average undercoverage due to not accounting for covariate shift. WCP, CWCP, and LR-QR track the nominal coverage on average. Additionally, by inspecting the tails, we see that the coverage values of CWCP are the most tightly concentrated around the nominal value of $0.8$. We additionally performed an ablation study by varying $B$. Notably, for smaller values of the clipping parameter, the average coverage remained close to $0.8$. However, for $B = 40$ and $B = 80$, there was significant undercoverage.

\section{Conclusion}

We introduce a principled framework to address the instability of weighted conformal prediction under covariate shifts with unbounded density ratios. Our method consists of two components: CLISF, which learns stable density ratios by regularizing the function class, and CWCP, which constructs prediction sets and corrects for the clipping-induced bias. We provide dataset-conditional coverage guarantees for this approach. Crucially, the sample complexity of our method does not blow up with the higher moments of the density ratio, a key limitation of prior work. Experiments confirm that weight clipping is an effective tool for reliable conformal inference under shift.

To conclude, we outline possible directions for future work.

\noindent\textbf{Beyond marginal guarantees.} This work focuses on marginal coverage. An important next step is to extend this clipping-based framework to achieve stronger, more fine-grained guarantees, such as class-conditional or group-conditional coverage under covariate shift.

\noindent\textbf{Efficient alternatives to CLISF.} As mentioned in \Cref{remark: computational complexity}, in general the CLISF problem is nonconvex. Future work could investigate convex surrogates or penalties instead of clipping.

\noindent\textbf{Correction only where necessary.} Our method adjusts for the clipping bias by inflating the coverage. Our overcoverage guarantee is thus averaged over the entire distribution $P$. However, one might hope for guarantees more akin to PQ-learning or learning with rejection (\cite{goldwasser2020beyond}, \cite{kalai2021efficient}), in which the overcoverage should be limited to a specific subpopulation of $\calX$. More formally, we would like to output a partition $\calX = \calX_1\cup \calX_2$, and achieve almost exact coverage conditioned on $X \in \calX_1$, while guaranteeing that the mass of $\calX_2$ under $P$ is small.

\section*{Impact Statement}
This paper presents work whose goal is to advance the field of Machine Learning by improving the reliability of predictive uncertainty quantification under distribution shifts. Our method, CWCP, enhances the robustness of conformal prediction when test distributions differ from training distributions, specifically in regimes with unbounded density ratios. This has potential positive societal consequences for safety-critical applications such as medical diagnostics, autonomous driving, and financial risk assessment, where accurate coverage guarantees are paramount. We do not foresee immediate negative ethical or societal consequences arising directly from this methodological contribution.

\bibliography{references}
\bibliographystyle{icml2026}

\newpage
\appendix
\onecolumn

\section{Proofs}\label{appendix: proofs}

\subsection{Probabilistic Inequalities}

\begin{lemma}[Bernstein's inequality]\label{lem: bernstein}
Let $Z_1,\dots,Z_m$ be independent random variables such that $|Z_i-\E[Z_i]|$ are almost surely bounded by $M$. Let $\sigma^2 \coloneqq \sum_{i=1}^m \Var(Z_i)$. Then, for all $t>0$,
\begin{align*}
    \Pr\left[\Big|\sum_{i=1}^m (Z_i-\E[Z_i])\Big| > t\right]
\leq 2\exp\!\left( -\frac{t^2}{2\sigma^2 + \frac{2}{3} M t} \right).
\end{align*}
\end{lemma}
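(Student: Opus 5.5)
We will prove the bound by the Chernoff (exponential moment) method, treating the upper and lower tails separately and combining them with a union bound, which produces the factor $2$. Write $Y_i \coloneqq Z_i - \E[Z_i]$. Then the $Y_i$ are independent, $\E[Y_i]=0$, $|Y_i|\le M$ almost surely, and $\sum_i \E[Y_i^2]=\sigma^2$. It suffices to show
\begin{align*}
\Pr\Big[\sum_{i=1}^m Y_i > t\Big] \le \exp\!\left(-\frac{t^2}{2\sigma^2+\frac23 Mt}\right).
\end{align*}
The lower tail follows by applying the same bound to $-Y_i$, which satisfy identical hypotheses.

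\textbf{Moment generating function bound.} The key step, and the only one requiring care, is a per-variable MGF bound valid for $0<\lambda<3/M$. Expand the exponential and use $\E[Y_i]=0$ and $|\E[Y_i^k]|\le M^{k-2}\E[Y_i^2]$ for $k\ge 2$:
\begin{align*}
\E[e^{\lambda Y_i}] \le 1 + \frac{\lambda^2\E[Y_i^2]}{2}\sum_{k\ge2}\frac{2(\lambda M)^{k-2}}{k!}.
\end{align*}
We then use the elementary inequality $k!\ge 2\cdot 3^{k-2}$. This bounds the series by the geometric series $\sum_{j\ge0}(\lambda M/3)^j = 1/(1-\lambda M/3)$, which converges precisely because $\lambda<3/M$. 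Combining this with $1+x\le e^x$ gives
\begin{align*}
\E[e^{\lambda Y_i}]\le \exp\!\left(\frac{\lambda^2\E[Y_i^2]}{2(1-\lambda M/3)}\right).
\end{align*}

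\textbf{Chernoff step and choice of $\lambda$.} By independence, the MGF of the sum is the product of the individual MGFs. Markov's inequality applied to $e^{\lambda\sum_i Y_i}$ then gives
\begin{align*}
\Pr\Big[\sum_i Y_i>t\Big]\le \exp\!\left(-\lambda t+\frac{\lambda^2\sigma^2}{2(1-\lambda M/3)}\right).
\end{align*}
Set $D\coloneqq\sigma^2+Mt/3$ and choose $\lambda = t/D$. This choice satisfies $\lambda<3/M$ and gives $1-\lambda M/3=\sigma^2/D$. The quadratic term therefore equals $t^2/(2D)$, and the exponent becomes
\begin{align*}
-\frac{t^2}{D}+\frac{t^2}{2D} = -\frac{t^2}{2D} = -\frac{t^2}{2\sigma^2+\frac23Mt}.
\end{align*}
This is exactly the claimed bound.

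\textbf{Degenerate case.} If $\sigma^2=0$, each $Y_i=0$ almost surely, so the probability is $0$ and the claim holds trivially.

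The main obstacle is the MGF estimate: we need the factorial comparison $k!\ge 2\cdot3^{k-2}$, and we must restrict to $\lambda<3/M$ so that the geometric series converges. The remaining steps are routine algebra.
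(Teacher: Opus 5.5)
Your proof is correct. It is the standard Chernoff-method derivation of Bernstein's inequality: you use the moment bound $\E[Y_i^k]\le M^{k-2}\E[Y_i^2]$, the comparison $k!\ge 2\cdot 3^{k-2}$, and the choice $\lambda=t/(\sigma^2+Mt/3)$, which lies strictly inside $(0,3/M)$ exactly when $\sigma^2>0$. You rightly handle $\sigma^2=0$ separately, and the termwise expansion of $\E[e^{\lambda Y_i}]$ is justified because $Y_i$ is bounded. The paper itself gives no proof of this lemma (it is quoted as a standard tool in the appendix), so there is no competing argument to compare against.
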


\begin{lemma}[McDiarmid's inequality]
Let $X_1, \dots, X_m$ be independent random variables taking values in $\calX$, and let $f : \calX^m \to \R$ satisfy the \emph{bounded differences property}: there exist constants $C_1, \dots, C_m \ge 0$ such that for all $i$ and all $x_1, \dots, x_m, x_i' \in \calX$,
\begin{align*}
| f(x_1, \dots, x_i, \dots, x_m) - f(x_1, \dots, x_i', \dots, x_m) | \leq C_i.
\end{align*}
Then, for any $\epsilon > 0$,
\begin{align*}
    \Pr\left[ |f(X_1, \dots, X_m) - \E[f(X_1, \dots, X_m)]| \geq \epsilon \right] \leq 2 \exp\left( - \frac{2 \epsilon^2}{\sum_{i=1}^m C_i^2} \right).
\end{align*}
\end{lemma}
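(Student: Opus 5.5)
We will prove McDiarmid's inequality by the standard martingale method: write $f - \E f$ as a sum of bounded martingale differences, control each increment's conditional moment generating function with Hoeffding's lemma, and finish with a Chernoff bound. Write $Z = f(X_1, \dots, X_m)$ and define the Doob martingale $Z_k = \E[Z \mid X_1, \dots, X_k]$ for $k = 0, \dots, m$. Then $Z_0 = \E[Z]$ and $Z_m = Z$, so $Z - \E[Z] = \sum_{k=1}^m D_k$ with $D_k = Z_k - Z_{k-1}$ and $\E[D_k \mid X_1, \dots, X_{k-1}] = 0$.

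\textbf{Step 1 (range of each increment).} The key step is to show that, conditional on $X_1, \dots, X_{k-1}$, the increment $D_k$ lies in an interval of length at most $C_k$, not merely $2C_k$. By independence of the $X_i$,
\begin{align*}
Z_k = g_k(X_1, \dots, X_k), \quad g_k(x_1, \dots, x_k) = \E\left[f(x_1, \dots, x_k, X_{k+1}, \dots, X_m)\right].
\end{align*}
Set $L_k = \inf_{x} g_k(X_1, \dots, X_{k-1}, x) - Z_{k-1}$ and $U_k = \sup_{x} g_k(X_1, \dots, X_{k-1}, x) - Z_{k-1}$. Then $L_k \le D_k \le U_k$. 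Both $L_k$ and $U_k$ are functions of $X_1, \dots, X_{k-1}$ only. The bounded differences property, applied inside the expectation defining $g_k$, gives $U_k - L_k \le C_k$. This is the step that requires care: independence is needed so that conditioning on $X_k = x$ leaves the law of $X_{k+1}, \dots, X_m$ unchanged. Without it, the bound $U_k - L_k \le C_k$ can fail. Measurability of the $\sup$ and $\inf$ can be sidestepped by comparing $g_k$ at two arbitrary points $x, x'$ and taking the sup of the difference.

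\textbf{Step 2 (Hoeffding's lemma and iteration).} Hoeffding's lemma, applied conditionally, says that a mean-zero variable supported in an interval of length $C_k$ satisfies
\begin{align*}
\E\left[e^{\lambda D_k} \mid X_1, \dots, X_{k-1}\right] \le e^{\lambda^2 C_k^2/8}.
\end{align*}
Peeling off one conditional expectation at a time via the tower property yields $\E[e^{\lambda(Z - \E Z)}] \le \exp(\lambda^2 \sum_k C_k^2 / 8)$ for every $\lambda \in \R$.

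\textbf{Step 3 (Chernoff bound).} Markov's inequality gives $\Pr[Z - \E Z \ge \epsilon] \le \exp(-\lambda\epsilon + \lambda^2 \sum_k C_k^2/8)$. Optimizing at $\lambda = 4\epsilon / \sum_k C_k^2$ gives $\exp(-2\epsilon^2/\sum_k C_k^2)$. Applying the same argument to $-f$, which has the same bounded differences, and taking a union bound over the two tails produces the factor $2$.

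The main obstacle is Step 1, namely getting the conditional range width $C_k$ rather than $2C_k$; this is what gives the sharp constant $2$ in the exponent. The remaining steps are routine.
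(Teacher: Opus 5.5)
The paper gives no proof of this lemma. It appears only as a cited standard inequality in the appendix, and it is never invoked later: the arguments there use Bernstein, Bousquet, and the weighted DKW inequality instead. Your proposal is the standard Doob-martingale proof, and it is correct. The key point is that, conditionally on $X_1,\dots,X_{k-1}$, the increment $D_k$ lies in an interval of width $C_k$ rather than $2C_k$. That is exactly what Hoeffding's lemma needs to produce the constant $2$ in the exponent. The Chernoff optimization at $\lambda = 4\epsilon/\sum_k C_k^2$ and the union bound over $\pm f$ are also right.

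Three small tidy-ups follow.

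\emph{Measurability.} Your remark can be made fully rigorous using the explicit conditional law that independence provides. For fixed $x_1,\dots,x_{k-1}$, the conditional law of $D_k$ is that of $g_k(x_1,\dots,x_{k-1},X_k) - \E[g_k(x_1,\dots,x_{k-1},X_k)]$, with $X_k$ drawn from its marginal. This is a mean-zero variable whose values lie in an interval of length at most $C_k$. So Hoeffding's lemma applies pointwise in $(x_1,\dots,x_{k-1})$, and no measurable $\sup$ or $\inf$ is ever needed.

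\emph{Integrability.} This is automatic, since bounded differences force $\sup f - \inf f \le \sum_k C_k$.

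\emph{Degenerate case.} If $\sum_k C_k^2 = 0$, your choice of $\lambda$ is undefined. In that case $f$ is constant, and the bound holds trivially with the right-hand side read as $0$.
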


\begin{lemma}[Bousquet's inequality, \citet{bousquet2002bennett}]\label{lemma: bousquets inequality}
Let $X_1, \dots, X_n$ be independent identically distributed random vectors. Assume that $\mathbb{E}[X_{i,s}] = 0$, and that $X_{i,s} \le 1$ for all $s \in \mathcal{T}$, where $\mathcal{T}$ is some index set. Let $v = 2\mathbb{E}[Z] + \sigma^2$ (where $\sigma^2 = \sup_{s \in \mathcal{T}} \sum_{i=1}^n \mathbb{E}[X_{i,s}^2]$). Then for all $t \ge 0$,
\[
    \mathbb{P}\{Z \ge \mathbb{E}Z + t\} \le \exp\left(-\frac{t^2}{2(v+t/3)}\right).
\]
\end{lemma}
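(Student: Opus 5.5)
This is Bousquet's sharp form of Talagrand's concentration inequality for suprema of empirical processes, with $Z = \sup_{s\in\mathcal T}\sum_{i=1}^n X_{i,s}$. I would prove it by the entropy method, not rederive it from other concentration tools. First reduce to a finite (hence countable) index set $\mathcal T$ by monotone convergence, so that maximizers exist and measurability is not an issue. Set $F(\lambda)=\log\E e^{\lambda Z}$. The goal is the Bennett-type bound
\[
F(\lambda)-\lambda\E Z\le v\,\phi(\lambda),\qquad \phi(\lambda)=e^\lambda-\lambda-1,\quad \lambda\ge 0.
\]
The Chernoff bound then gives $\Pr[Z\ge \E Z+t]\le \exp(-v\,h(t/v))$ with $h(u)=(1+u)\log(1+u)-u$. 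The stated form follows from the elementary inequality $h(u)\ge u^2/(2(1+u/3))$.

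\textbf{Entropy step.} Introduce the leave-one-out suprema $Z_k=\sup_s\sum_{i\neq k}X_{i,s}$, which are independent of $X_k$. Let $s^*$ be a maximizer for $Z$ and $s_k$ a maximizer for $Z_k$. These give the two-sided comparison
\[
X_{k,s_k}\le Z-Z_k\le X_{k,s^*}\le 1,\qquad \sum_k (Z-Z_k)\le Z.
\]
The second inequality holds because $\sum_k X_{k,s^*}=Z$. By tensorization of entropy and the modified logarithmic Sobolev inequality (Massart's form),
\[
\mathrm{Ent}(e^{\lambda Z})\le \sum_k \E\big[e^{\lambda Z}\,\psi(-\lambda(Z-Z_k))\big],\qquad \psi(x)=e^x-x-1 .
\]

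\textbf{Main obstacle.} The hard step is bounding the right-hand side by something producing exactly $v=2\E Z+\sigma^2$. I would follow Bousquet's trick. Since $Z-Z_k\le 1$ and $\lambda\ge0$, the function $\psi(-\lambda x)/x^2$ is monotone in $x$ on $(-\infty,1]$. This lets us bound $\psi(-\lambda(Z-Z_k))\le \psi(-\lambda)(Z-Z_k)^2$ on $\{Z-Z_k\ge 0\}$, and split $(Z-Z_k)^2$ using the lower bound $X_{k,s_k}$. Writing $(Z-Z_k)^2\le (Z-Z_k)+X_{k,s_k}^2$ up to the required sign care, the first part sums to at most $Z$. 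For the second part, independence of $s_k$ from $X_k$ and $\E X_{k,s}=0$ let us replace $X_{k,s_k}^2$ by $\E X_{k,s_k}^2$ after conditioning, summing to at most $\sigma^2$. The change-of-measure term $\E[Ze^{\lambda Z}]$ is then written as $F'(\lambda)\E e^{\lambda Z}$.

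\textbf{Solving the inequality.} The result is a differential inequality of the form
\[
\lambda F'(\lambda)-F(\lambda)\le \phi(-\lambda)\ldots\, (F'(\lambda)+\sigma^2)
\]
(in Bousquet's normalization, $(\lambda e^\lambda - e^\lambda+1)F' - \phi(\lambda)F\le \ldots$). I would integrate it with the Herbst argument: divide by a suitable integrating factor and integrate from $0$, using $F(0)=0$ and $F'(0)=\E Z$. This yields $F(\lambda)-\lambda\E Z\le (2\E Z+\sigma^2)\phi(\lambda)$. The factor $2$ appears because the $Z$-term feeds back through $F'$ and must be bounded by its value at $0$ plus the centered part. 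Getting this comparison argument to close cleanly is the second delicate point. I would check it against Bousquet's original lemma and, failing that, accept a slightly worse constant in $v$.
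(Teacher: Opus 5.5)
The paper gives no proof of this lemma; it quotes Bousquet (2002). So your outline has to stand on its own. Much of the scaffolding is right:
\begin{itemize}
\item reduction to finite $\mathcal T$;
\item Massart's modified log-Sobolev inequality with the leave-one-out suprema $Z_k$;
\item $X_{k,s_k}\le Z-Z_k\le 1$ and $\sum_k(Z-Z_k)\le Z$;
\item $h(u)\ge u^2/(2(1+u/3))$.
\end{itemize}
You also correctly supply $Z=\sup_s\sum_i X_{i,s}$, which the statement omits. But the step you call the main obstacle fails as written, for three separate reasons. First, the monotonicity goes the other way. Since $\psi(-\lambda x)/x^2=\lambda^2\int_0^1(1-t)e^{-t\lambda x}\,dt$ is nonincreasing in $x$, for $0<x\le 1$ you get $\psi(-\lambda x)\ge\psi(-\lambda)x^2$ (at $\lambda=1$, $x=1/2$: $0.107>0.092$). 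On $[0,1]$ the natural bound is the chord bound $\psi(-\lambda x)\le x\,\psi(-\lambda)$. You say nothing about $\{Z-Z_k<0\}$, which is the real difficulty because $Z-Z_k$ is unbounded below. Second, $(Z-Z_k)^2\le (Z-Z_k)+X_{k,s_k}^2$ is false (take $Z-Z_k=X_{k,s_k}=-1$). A valid split needs a linear term, e.g.\ $x^2\le y^2+2(x-y)$ for $y\le x\le 1$. Third, killing that linear term and replacing $X_{k,s_k}^2$ by $\E[X_{k,s}^2]|_{s=s_k}$ requires the weight in front to depend only on $X^{(k)}=(X_i)_{i\neq k}$. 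In your entropy bound the weight is $e^{\lambda Z}$, which depends on $X_k$.

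The missing idea is to transfer the weight: $e^{\lambda Z}\psi(-\lambda(Z-Z_k))=e^{\lambda Z_k}\,g(\lambda(Z-Z_k))$ with $g(u)=1+(u-1)e^u=u^2\int_0^1 te^{tu}\,dt$. Now $g(u)/u^2$ is nondecreasing, so $Z-Z_k\le 1$ is usable. Even then the bookkeeping matters. The shortcut $g(\lambda x)\le g(\lambda)x^2$, then $x^2\le y^2+2(x-y)$, then $e^{\lambda Z_k}(Z-Z_k)\le e^{\lambda Z}(Z-Z_k)$ gives $\lambda F'-F\le g(\lambda)(2F'+\sigma^2)$. Since $2g(\lambda)>\lambda$ for $\lambda$ above roughly $0.64$, this bounds $F$ only on a bounded range of $\lambda$. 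That yields a Bernstein-type bound with worse constants (your fallback), not the stated lemma. One pointwise inequality that does close is
\[
(2e^\lambda-1)\,g(\lambda x)\le g(\lambda)\bigl(2xe^{\lambda x}+y^2-2y\bigr),\qquad \lambda\ge 0,\ y\le x\le 1.
\]
To verify it, reduce to $y=x$ and call the difference $R(x)$. It satisfies $R(0)=R'(0)=R(1)=0$, $R''(0)>0$, and $R''(x)\to 2g(\lambda)>0$ as $x\to-\infty$. Also $R'''$ changes sign once, so $R$ is convex then concave, hence nonnegative on $(-\infty,1]$.

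Apply it with $x=Z-Z_k$ and $y=X_{k,s_k}$. The term $2xe^{\lambda x}e^{\lambda Z_k}=2(Z-Z_k)e^{\lambda Z}$ sums to at most $2Ze^{\lambda Z}$. The $y$-terms keep the $X^{(k)}$-measurable weight $e^{\lambda Z_k}$. Conditioning, the i.i.d.\ assumption ($\sum_k\sup_s\E X_{k,s}^2=\sigma^2$), and $\E e^{\lambda Z_k}\le\E e^{\lambda Z}$ (Jensen, since $\E[Z\mid X^{(k)}]\ge Z_k$) then give $\lambda F'-F\le \frac{g(\lambda)}{2e^\lambda-1}(2F'+\sigma^2)$. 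The coefficient of $F'$ is now $\frac{2e^\lambda-\lambda-2}{2e^\lambda-1}>0$ for every $\lambda>0$. Moreover $v\phi(\lambda)$ with $v=2\E Z+\sigma^2$ solves the resulting equation for $F-\lambda\E Z$ exactly. Comparison from $F(0)=0$, $F'(0)=\E Z$ then gives $F(\lambda)-\lambda\E Z\le v\phi(\lambda)$. The factor $2$ thus enters through the pointwise inequality, not through the ODE step.
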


Below is a specialization of \Cref{lemma: bousquets inequality} to uniform convergence of a bounded function class.

\begin{lemma}\label{corollary: bousquet uniform convergence}
Let $\mathcal{F}$ be a class of measurable functions taking values in $[0,B]$, and let $X_1,\dots,X_n$ be i.i.d. random variables. Define $Z \coloneqq \sup_{f \in \mathcal{F}} \Big( \E [f(X)] - \frac{1}{n}\sum_{i=1}^n f(X_i)\Big)$. Let $V \coloneqq \sup_{f \in \mathcal{F}} \Var(f(X))$. Then for all $\delta \in (0, 1]$: with probability at least $1 - \delta$,
\begin{align*}
    Z \leq \E[Z] + \sqrt{\frac{2(V + 2\E[Z]) \log (1 / \delta)}{n}} + \frac{2B \log(1 / \delta)}{3n}.
\end{align*}
\end{lemma}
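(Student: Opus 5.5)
The plan is to specialize \Cref{lemma: bousquets inequality} by a rescaling and then invert its exponential tail into a high-probability bound. The step I cannot yet close is obtaining $V + 2\E[Z]$ rather than $V + 2B\,\E[Z]$ inside the square root; see the third paragraph.

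\textbf{Reduction.} For each $f \in \calF$ and $i \le n$ I would set $X_{i,f} \coloneqq (\E[f(X)] - f(X_i))/B$, with index set $\mathcal T = \calF$. These satisfy the hypotheses of \Cref{lemma: bousquets inequality}:
\begin{itemize}
\item They are centered, since $\E[X_{i,f}] = 0$.
\item They are i.i.d.\ across $i$.
\item They are bounded above by $1$, because $f \ge 0$ and $\E[f(X)] \le B$.
\end{itemize}
The Bousquet supremum is then $Z' = \sup_f \sum_i X_{i,f} = nZ/B$. Its variance proxy is $\sigma^2 = \sup_f n\Var(f(X))/B^2 = nV/B^2$. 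Hence $v = 2\E[Z'] + \sigma^2 = 2n\E[Z]/B + nV/B^2$.

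\textbf{Inversion.} Given $\delta$, put $L = \log(1/\delta)$. I would solve $t^2/(2(v+t/3)) \ge L$, and $t = \sqrt{2vL} + 2L/3$ suffices, since $t^2 \ge 2vL + (2L/3)t$. Then with probability at least $1-\delta$ we have $nZ/B \le n\E[Z]/B + \sqrt{2vL} + 2L/3$. Multiplying by $B/n$ yields
\[
Z \le \E[Z] + \sqrt{\frac{2L\,(V + 2B\,\E[Z])}{n}} + \frac{2BL}{3n},
\]
and the last two terms already match the lemma's $\frac{2B\log(1/\delta)}{3n}$ and square-root shape.

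\textbf{Main obstacle: the coefficient $2$ versus $2B$.} The rescaling inflates the $\E[Z]$ contribution to $2B\,\E[Z]$. This agrees with the stated bound exactly when $B \le 1$, but the lemma is stated for a general $[0,B]$-valued class. To remove the factor $B$ I would try three things, in this order.
\begin{enumerate}
\item Apply the unnormalized form of Bousquet's inequality with envelope $b$, where $v = \sigma^2 + 2b\,\E[\cdot]$. I expect this to reproduce the same $2B$.
\item Split $\sqrt{V + 2B\E[Z]} \le \sqrt{V + 2\E[Z]} + \sqrt{2(B-1)\E[Z]}$. I would then try to absorb the extra piece into the $2BL/(3n)$ term via AM--GM, using that in the intended applications $\E[Z] = O(\sqrt{B^2/n})$.
\item Failing both, look for a small counterexample, e.g.\ a two-function class of scaled Bernoulli variables, to test whether the constant $2$ is actually attainable.
\end{enumerate}
If none of these works, the honest conclusion is that the argument proves the displayed bound with $V + 2B\,\E[Z]$. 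That version coincides with the lemma for $B \le 1$, and for larger $B$ it costs only constant factors in the downstream sample complexities.
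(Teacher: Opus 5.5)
Your rescaling and inversion are correct, and they follow the paper's own route: its proof also applies \Cref{lemma: bousquets inequality} to $f(X_i)/B$ and solves for $\gamma$. The difference is in the rescaled tail. The paper writes it as $\exp\bigl(-n\gamma^2/(2(V+2\E[Z]+B\gamma/3))\bigr)$. Carrying the factors through, with $t=n\gamma/B$ and $v=2n\E[Z]/B+nV/B^2$ as you did, instead puts $V+2B\,\E[Z]$ in the denominator. So the obstacle you isolate is a slip in the paper's proof, not a weakness of yours. Your attempts 1 and 2 cannot succeed, because the lemma as displayed is false for large $B$.

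Homogeneity already shows something is off. Passing from $\calF$ to $c\calF$ multiplies $Z$, $\E[Z]$ and $B$ by $c$, but multiplies $V$ by $c^2$. Divide the claimed bound for $c\calF$ by $c$ and let $c\to\infty$. You would get $Z\le\E[Z]+\sqrt{2VL/n}+2BL/(3n)$, i.e. Bousquet's inequality with the wimpy variance alone, which fails.

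Here is an explicit counterexample. It is your item 3, but with many functions of vanishing variance rather than two.
\begin{itemize}
\item Fix any $n$. Let $X=0$ with probability $q^{1/n}$, and otherwise let $X$ be uniform on $\{1,\dots,K\}$.
\item Let $\calF=\{x\mapsto B(1-\1[x=j]):1\le j\le K\}$.
\item Let $N_j$ be the number of sample points equal to $j$, $M=\max_j N_j$, and $p=(1-q^{1/n})/K$.
\item Then $nZ/B=M-np$ and $V=B^2p(1-p)$.
\item Moreover $\Pr[M=0]=q$ and $\Pr[M\ge 2]\le\binom{n}{2}/K$.
\end{itemize}
Multiplying the claimed bound by $n/B$, it asserts that with probability at least $1-\delta$,
\[
M-\E[M]\le\sqrt{2L\Bigl(np(1-p)+\frac{2(\E[M]-np)}{B}\Bigr)}+\frac{2L}{3},\qquad L=\log(1/\delta).
\]
As $K\to\infty$ we have $\E[M]\to 1-q$. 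On the event $\{M=1\}$, whose probability tends to $1-q$, the left side therefore tends to $q$. The right side tends to $\sqrt{4L(1-q)/B}+2L/3$.

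Take $q=0.27$, $\delta=0.7$ and $B=10^4$. The right side is then about $0.248<0.27$, while $1-q=0.73>\delta$. This is a contradiction for $K$ large. Each function has variance tending to $0$, yet the supremum jumps by $B/n$ with constant probability. That is exactly what the $2B\,\E[Z]$ term pays for.

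So the bound you derived, with $V+2B\,\E[Z]$, is the correct statement. Since $\E[Z]\ge 0$, it implies the displayed one when $B\le 1$. Your closing remark is also right. From $\sqrt{4B\,\E[Z]L/n}\le\E[Z]+BL/n$ you get $Z\le 2\E[Z]+\sqrt{2VL/n}+5BL/(3n)$. Hence the sample sizes in \Cref{theorem: l2 generalization bound} change only by constant factors.
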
\begin{proof}
    By applying \Cref{lemma: bousquets inequality} to the scaled random variables $f(X_i) / B$, we obtain\begin{align*}
         \Pr\left[ Z >  \E[Z] + \gamma  \right] \leq \exp\left( -\frac{n\gamma^2}{2(V + 2\E[Z] + B\gamma/3)} \right).
    \end{align*}
    We want this to be at most $\delta$. Solving for $\gamma$, it suffices for\begin{align*}
        \gamma \geq \sqrt{\frac{2(V + 2\E[Z]) \log (1 / \delta)}{n}} + \frac{2B \log(1 / \delta)}{3n}.
    \end{align*}
\end{proof}

\begin{lemma}[Weighted DKW inequality, \citet{pournaderi2024training}]\label{lem: weighted dkw}
    Assume $Q \ll P$ and $w^* \coloneqq dQ/ dP \leq B$.  Let $X_\calibrate, Y_\calibrate = (X_1, \dots, X_m), (Y_1, \dots, Y_m) \sim P^m$ be a calibration set. Denote by $F_{(X_\calibrate, Y_\calibrate)}(t; w)$ and $F_P(t; w)$ the weighted empirical and population nonconformity score CDFs using $w$, respectively (see (\ref{eq: weighted score CDF})). Then, for any $\gamma > 0$, 
    \begin{align*}
        \Pr_{X_\calibrate, Y_\calibrate}\left[  \sup_{t \in \R} {\left|F_{(X_\calibrate, Y_\calibrate)}(t; w) - F_{P}(t; w) \right| > \gamma}  \right] \leq  \frac{72}{\gamma}\exp\left(-\frac{m \gamma^2}{4B} \right) + 2 \exp\left(  -\frac{m\gamma^2}{2B^2}  \right).
    \end{align*}
\end{lemma}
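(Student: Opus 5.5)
The plan is to split the weighted empirical CDF into a ratio of two empirical averages and control each one separately. Normalize so that $\E_P[w(X)]=1$; the statement is scale-invariant in $w$, and in the intended use $w=dQ/dP\le B$. Write
\begin{align*}
N_m(t) = \frac1m\sum_{i=1}^m w(X_i)\1[s(X_i,Y_i)\le t], \qquad D_m=\frac1m\sum_{i=1}^m w(X_i),
\end{align*}
and let $N(t)=\E_P[w(X)\1[s(X,Y)\le t]]=F_P(t;w)$, so that $F_{(X_\calibrate,Y_\calibrate)}(t;w)=N_m(t)/D_m$. Using $N_m(t)\le D_m$, for every $t$ we have
\begin{align*}
\left|\frac{N_m(t)}{D_m}-N(t)\right| \le \frac{|N_m(t)-N(t)| + N(t)\,|D_m-1|}{D_m}\cdot\frac{D_m}{D_m}\le \frac{\sup_t|N_m(t)-N(t)|+|D_m-1|}{D_m}.
\end{align*}
A cleaner route is the identity $N_m/D_m-N=(N_m-N)/D_m+N(1-D_m)/D_m$. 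Either way, it suffices to show two things with the stated probabilities: $|D_m-1|$ is at most a small multiple of $\gamma$, and $\sup_t|N_m(t)-N(t)|$ is at most a small multiple of $\gamma$. Together these also keep $D_m$ bounded away from $0$, for instance $D_m\ge 1/2$. The constants are then tuned to match the statement.

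\textbf{Denominator.} Each $w(X_i)$ lies in $[0,B]$, so Hoeffding's inequality gives $\Pr[|D_m-1|>\gamma/2]\le 2\exp(-m\gamma^2/(2B^2))$. This accounts for the second term of the bound.

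\textbf{Uniform numerator bound.} This is the main step, and I would handle it by a DKW-style bracketing argument.
\begin{itemize}
\item The maps $t\mapsto N(t)$ and $t\mapsto N_m(t)$ are both nondecreasing, and $N$ increases from $0$ to $1$. Choose thresholds $t_0<t_1<\dots<t_K$ with $K\approx c/\gamma$ so that consecutive values of $N$ differ by at most $\gamma/c'$. Where $N$ has jumps, also use left limits, i.e.\ the events $\{s<t_j\}$.
\item At each grid point, the summand $w(X)\1[s\le t_j]$ lies in $[0,B]$ and has variance at most $\E[w^2]\le B\,\E[w]=B$. Bernstein's inequality (\Cref{lem: bernstein}) then gives a deviation probability of $2\exp(-m\gamma^2/(cB))$. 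This is exactly where boundedness by $B$ buys a $B$ rather than a $B^2$ in the exponent.
\item A union bound over the $O(1/\gamma)$ grid points produces the prefactor of order $1/\gamma$.
\item Between grid points, monotonicity sandwiches $N_m(t)-N(t)$ between the grid deviations plus the mesh size $\gamma/c'$.
\end{itemize}

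\textbf{Main obstacle.} The delicate part is the bookkeeping. The grid mesh, the Bernstein deviation, and the denominator error must all be split so that they sum to $\gamma$. The Bernstein tail must be put in the form $\exp(-m\gamma^2/(4B))$, which requires absorbing the $\tfrac23 Mt$ term using $\gamma\le 1$ (for $\gamma>1$ the event is empty). Finally, the constants must recover the explicit factor $72/\gamma$. If the constants do not come out exactly, I would accept a bound of the same form with different absolute constants, which is all that is needed downstream in \Cref{theorem: conditional coverage}.
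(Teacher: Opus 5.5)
The paper gives no proof of this lemma; it is imported from \citet{pournaderi2024training}, so your argument has to produce the stated constants on its own. Your architecture is sound: the ratio identity, Hoeffding for $D_m$, Bernstein with $\Var\le\E[w^2]\le B$, monotone bracketing, and a union bound over $O(1/\gamma)$ points. It proves a weighted DKW inequality of this shape with \emph{some} absolute constants. But the step you defer, tuning the constants to match the statement, cannot succeed with this decomposition. You bound $\sup_t|N_m(t)-N(t)|\le\eta$ and $|D_m-1|\le\zeta$ separately and add them, so the regime $F_P(t;w)\to1$ forces $(\eta+\zeta)/(1-\zeta)\le\gamma$.

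Getting exactly $2e^{-m\gamma^2/(2B^2)}$ from Hoeffding needs $\zeta\ge\gamma/2$. That leaves $\eta\le\gamma(1-\gamma)/2$, which vanishes as $\gamma\to1$. Bernstein with variance proxy $B$ then gives an exponent of at most $m\eta^2/(2B)<m\gamma^2/(8B)$, not $m\gamma^2/(4B)$. Conversely, reaching $m\gamma^2/(4B)$ from Bernstein with proxy $B$ requires $\eta\ge\gamma/\sqrt2$, even ignoring the $\tfrac{2}{3}B\eta$ term. That forces $\zeta<(1-1/\sqrt2)\gamma<\gamma/2$ and spoils the second term. So no tuning reproduces both constants, and your fallback is a strictly weaker inequality. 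It would still support the $\calO(\cdot)$ sample size in \Cref{theorem: conditional coverage}, but not the explicit thresholds derived in its proof.

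The missing idea is to keep numerator and denominator together:
\[
F_{(X_{\calibrate},Y_{\calibrate})}(t;w)-F_P(t;w)=\frac{1}{D_m}\cdot\frac1m\sum_{i=1}^m Z_i(t),\qquad Z_i(t)=w(X_i)\big(\1[s(X_i,Y_i)\le t]-F_P(t;w)\big).
\]
These summands have mean zero and satisfy $|Z_i(t)|\le B$. Their second moment obeys $\E[Z_i(t)^2]\le B\,\E[w(X)(\1[s(X,Y)\le t]-F_P(t;w))^2]=B\,F_P(t;w)(1-F_P(t;w))\le B/4$. This is four times better than your proxy, and there is no separate $|D_m-1|$ error. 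The denominator then only has to satisfy $D_m\ge1/2$, which fails with probability at most $e^{-m/(2B^2)}\le e^{-m\gamma^2/(2B^2)}$ for $\gamma\le1$. On that event, $|F_{(X_{\calibrate},Y_{\calibrate})}(t;w)-F_P(t;w)|\le2|\frac1m\sum_iZ_i(t)|$.

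Now bracket the two monotone CDFs directly with mesh $\gamma/10$, using fewer than $20/\gamma$ grid points and left limits. Ask for $|\frac1m\sum_iZ_i|\le0.45\gamma$ at each. Bernstein gives $2\exp(-m(0.45\gamma)^2/(B/2+0.3B\gamma))\le2\exp(-m\gamma^2/(4B))$ for $\gamma\le1$. The union bound gives a prefactor below $40/\gamma\le72/\gamma$, which recovers the lemma with the stated constants.
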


\subsection{Proof of \Cref{lem: risk to lsif}}

Using the definition of the LSIF objective $R$ \citep{kanamori2009least}, we have
\begin{align*}
    R(w) &= \frac 1 2 \cdot \E_{P}[(w(X) - w^*(X))^2] + C, \quad \forall w : \calX \to \R_+,
\end{align*}
where $C$ is some constant independent of $w$ and $w^* = dQ_X/dP_X$ is the true density ratio. 

To prove the first claim, note that $w^*_B = \min(w^*(x), B)$ is the pointwise minimizer to the squared deviation $(w(x) - w^*(x))^2$ over $w \in \calW_B$ for all $x \in \calX$. This is due to the following two cases: if $w^*(x)$ (the unconstrained minimizer) lies in $[0, B]$, then we have $w^*_B(x) = w^*(x)$; alternatively, if $w^*(x) > B$, then the constrained minimizer is $B = w^*_B(x).$

To prove the second claim, note that for any $w \in \calW_B$,
\begin{align*}
    (w(x) - w^*(x))^2 - (w^*_B(x) - w^*(x))^2 &= \begin{cases}
        (w(x) - w^*(x))^2, & w^*(x) \leq B\\
        (w(x) - w^*(x))^2 - (B - w^*(x))^2, & w^*(x) > B
    \end{cases} \\
    &= \begin{cases}
        (w(x) - w^*(x))^2, & w^*(x) \leq B\\
        (w(x) - B)(w(x) + B - 2 w^*(x)), & w^*(x) > B
    \end{cases}\\
    &\geq \begin{cases}
        (w(x) - w^*(x))^2, & w^*(x) \leq B\\
        (w(x) - B)^2, & w^*(x) > B
    \end{cases}\\
    &= (w(x) - w_B^*(x))^2 , \quad \forall x \in \calX
\end{align*}
where the second equality is due to difference of squares and the third inequality is due to the case $w^*(x) > B$ combined with the fact that $w(x) \leq B$ (since we assume $w \in \calW_B$). Thus, integrating this entire inequality with respect to $P_X$, we obtain
\begin{align*}
    \E_{P}{\big[(w(X) - w^*_B(X))^2\big]} &\leq \E_{P}{\big[ (w(x) - w^*(x))^2 - (w_B^*(x) - w^*(x))^2\big]}\\ 
    &= 2 \cdot (R(w) - R(w_B^*))
\end{align*}
which concludes the proof.

\subsection{Proof of \Cref{theorem: l2 generalization bound}}\label{appendix: proof of L2 generalization bound}

First, we state and prove a supporting lemma. Below, we parameterize the upper bound on the expectation of functions from $\calW_B$ by $U$, rather than a coarse bound by $B$, to account for scenarios where this upper bound might in fact be much less than $B$. For example, since $\E_P[w^*(X)] = 1$, one might expect that $U \ll B$ when $\calW$ contains only those functions close to $w^*$.

\begin{lemma}[Uniform convergence of LSIF loss over bounded ratio class]\label{lem: weight uniform convergence}
Suppose $\calW_B \subseteq [0,B]^{\calX}$ and $\E_{P}[w(X)] \leq U$ for all $w \in \calW_B$. Let $X_\train=(X_1,\ldots,X_{m_\train})\sim P_X^{m_\train}$ and $X_\test=(\tilde X_1,\ldots,\tilde X_{m_\test})\sim Q_X^{m_\test}$ be i.i.d. samples. Then for any $\delta\in(0,1)$, with probability at least $1-\delta$ over the draw of $X_\train,X_\test$,
\begin{align}
&\sup_{w\in\calW}\Big|\widehat R(w)-R(w)\Big|\label{eq: empirical process}\\
&\leq 2 B \cdot \E_{X_\train}[\rad_{X_\train}(\calW)] + \sqrt{\frac{10 UB^3 \log(2 / \delta)}{m_\train}} + \frac{B^2 \log ( 2 / \delta)}{3m_\train} \notag \\
&\quad+ 2 \cdot \E_{X_\test}[\rad_{X_\test}(\calW)] +  \sqrt{\frac{6UB \log(2 / \delta)}{m_\test}} + \frac{2 B \log ( 2 / \delta)}{3m_\test}\notag.
\end{align}
\end{lemma}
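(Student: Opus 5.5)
We split the empirical risk deviation into its two pieces and control each separately. Write $\widehat R(w) - R(w) = \tfrac12\big(\widehat\E_{X_\train}[w^2] - \E_P[w^2]\big) - \big(\widehat\E_{X_\test}[w] - \E_Q[w]\big)$. The triangle inequality gives
\begin{align*}
\sup_{w}|\widehat R(w) - R(w)| \leq \tfrac12 \sup_{w}\big|\widehat\E_{X_\train}[w^2] - \E_P[w^2]\big| + \sup_{w}\big|\widehat\E_{X_\test}[w] - \E_Q[w]\big|.
\end{align*}
We treat the first term as an empirical process over the class $\{w^2 : w \in \calW_B\}$, with values in $[0,B^2]$ and $m_\train$ samples from $P$. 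We treat the second as a process over $\calW_B$, with values in $[0,B]$ and $m_\test$ samples from $Q$. We allot failure probability $\delta/2$ to each. Each two-sided supremum is handled by applying \Cref{corollary: bousquet uniform convergence} to both $\calF$ and $-\calF$ (shifted to be nonnegative), which is what produces the $\log(2/\delta)$ factors. Alternatively, one can bound the one-sided quantity and absorb the constant.

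\textbf{Train term.} Symmetrization gives $\E[Z] \leq 2\E[\rad_{X_\train}(\calW_B^2)]$. The map $u \mapsto u^2$ is $2B$-Lipschitz on $[0,B]$, so Talagrand's contraction gives $\E[\rad(\calW_B^2)] \leq 2B\,\E[\rad(\calW_B)]$. After the factor $\tfrac12$, this yields the leading term $2B\,\E[\rad_{X_\train}(\calW)]$. For the variance proxy, $\Var(w^2) \leq \E_P[w^4] \leq B^3\E_P[w] \leq UB^3$. The term $2\E[Z]$ inside the square root of Bousquet's bound must also be bounded by $O(UB^3)$. We use the crude bound $\E[Z] \leq \sup_w \E_P[w^2] \leq B\,U$ (valid since the empirical mean is nonnegative), which is at most $UB^3$ because $B\ge1$. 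This gives $V+2\E[Z] \leq 3UB^3$ before the $\tfrac12$ scaling. Plugging into \Cref{corollary: bousquet uniform convergence} with range $B^2$ and applying the $\tfrac12$ factor should recover $\sqrt{10UB^3\log(2/\delta)/m_\train} + B^2\log(2/\delta)/(3m_\train)$, up to how the constants are tracked.

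\textbf{Test term.} Symmetrization gives $\E[Z]\le 2\E[\rad_{X_\test}(\calW_B)]$. For the variance we need $\E_Q[w^2] \leq B\,\E_Q[w] = B\,\E_P[w^*w]$, and this is the main obstacle: $\E_Q[w]$ is not directly controlled by $U$, and $w^*$ is unbounded. We will use the identity $\E_Q[w] = \E_P[w^* w]$ together with $w \leq B$. If we need something sharper, a natural fallback is to use $\E_Q[w]\le B$ outright and accept a cruder constant. If instead the intended bound uses $U$ for $\E_Q[w]$ (for example by assuming $\E_Q[w]\le U$ as well), we would state that reading and proceed. Under that reading, $V + 2\E[Z] \leq UB + 2UB = 3UB$, and \Cref{corollary: bousquet uniform convergence} with range $B$ gives $\sqrt{6UB\log(2/\delta)/m_\test} + 2B\log(2/\delta)/(3m_\test)$.

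A union bound over the two events completes the argument. The supremum over $\calW$ in (\ref{eq: empirical process}) should be read as over $\calW_B$; we would note that the Rademacher complexity terms there are likewise those of $\calW_B$.
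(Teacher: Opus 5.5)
Your proposal is essentially the paper's proof. It uses the same triangle-inequality split into the quadratic train term and the linear test term, applies \Cref{corollary: bousquet uniform convergence} to each with failure probability $\delta/2$, handles the Rademacher term by symmetrization plus contraction of $r\mapsto r^2$ on $[0,B]$, and bounds the variances by $\E_P[w^4]\le B^3\,\E_P[w]$ and $\Var[w]\le B\,\E[w]$.

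The obstacle you flag in the test term is real, and the paper's proof overlooks it. The paper writes $\sup_w \Var_P[w(X)]\le UB$ even though that sample is drawn from $Q$, and $\E_P[w]\le U$ does not control $\Var_Q[w]$. For example, a singleton class $\{B\1_A\}$ with $P(A)\to 0$ and $Q(A)=1/2$ violates the stated bound for large samples. So the $\sqrt{6UB\log(2/\delta)/m_\test}$ term needs $\E_Q[w]\le U$. Your fallback $\E_Q[w]\le B$ is exactly what the only application uses, since \Cref{theorem: l2 generalization bound} takes $U=B$.
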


\begin{proof}
By the triangular inequality,
\begin{align*}
    (\ref{eq: empirical process}) \leq \underbrace{\sup_{w\in\calW}\!\left|\frac{1}{m_\train}\sum_{i=1}^{m_\train}\frac{w(X_i)^2}{2}-\E_{P_X}\Big[\tfrac{w(X)^2}{2}\Big]\right|}_{\text{(A)}} + \underbrace{\sup_{w\in\calW}\left|\frac{1}{m_\test}\sum_{i=1}^{m_\test} w(\tilde X_i)-\E_{Q_X}[w(\tilde X)]\right|}_{\text{(B)}}.
\end{align*}

We control (A) and (B) separately via uniform convergence arguments.

\noindent\textbf{Term (A).}
Let $\calF_{\train}=\{x\mapsto \tfrac{1}{2}w(x)^2: w\in\calW\}$. Since $w(x)\in[0,B]$, each $f\in\calF_{\train}$ takes values in $[0,B^2/2]$, so $\range(\calF_{\train})=B^2/2$. By \Cref{corollary: bousquet uniform convergence},
\begin{align*}
(\text{A}) &\leq \E_{X_\train}[(\text{A})] + \sqrt{\frac{2(\sup_{f \in \calF_\train}(\Var_P[f(X)]) + 2 \E_{X_\train}[(\text{A})]) \log(1 / \delta_\train)}{m_\train}} + \frac{2 (B^2 / 2) \log ( 1/ \delta_\train)}{3m_\train}
\end{align*}
with probability at least $1-\delta_\train$. By a standard symmetrization argument, 
\begin{align*}
    \E_{X_\train}[(\text{A})] &\leq 2 \cdot \E_{X_\train}[\rad_{X_\train}(\calF_\train)] \leq 2 B \cdot \E_{X_\train}[\rad_{X_\train}(\calW)]
\end{align*}
where the last inequality follows from the composition principle: since $\calF_\train = (r \mapsto r^2 / 2) \circ \calW$, and since $r \mapsto r^2 / 2$ is $B$-Lipschitz for $r \in [0, B]$. Note that we also have the loose bound $2 B \cdot \E_{X_\train}[\rad_{X_\train}(\calW)] \leq 2B^2$ since $\calW$ is bounded by $B$. Next, note that
\begin{align*}
    \sup_{f \in \calF_\train}{\Var_P[f(X)]} \leq \sup_{f \in \calF_\train}{\E_P[f(X)^2]} =  \sup_{w \in \calW}{\E_P[w(X)^4 / 4]} \leq UB^3 / 4
\end{align*}
where the last inequality follows from the assumption that $\E_P[w(X)] \leq U$ and $w(X) \leq B$ for all $w \in \calW$. By combining these bounds, we find
\begin{align*}
(\text{A}) &\leq 2 B \cdot \E_{X_\train}[\rad_{X_\train}(\calW)] + \sqrt{\frac{10 U B^3 \log(1 / \delta_\train)}{m_\train}} + \frac{B^2 \log ( 1/ \delta_\train)}{3m_\train}.
\end{align*}

\noindent\textbf{Term (B).} Again by \Cref{corollary: bousquet uniform convergence},
\begin{align*}
    (\text{B}) &\leq \E_{X_\test}[(\text{B})] +  \sqrt{\frac{2(\sup_{w \in \calW}(\Var_P[w(X)]) + 2 \E_{X_\test}[(\text{B})]) \log(1 / \delta_\test)}{m_\test}} + \frac{2 B \log ( 1/ \delta_\test)}{3m_\test}\\
    &\leq 2 \cdot \E_{X_\test}[\rad_{X_\test}(\calW)] +  \sqrt{\frac{6UB \log(1 / \delta_\test)}{m_\test}} + \frac{2 B \log ( 1/ \delta_\test)}{3m_\test}.
\end{align*}
where the last line follows by bounded $\sup_{w \in \calW}(\Var_P[w(X)]) \leq UB$, again using the assumption that $\E_P[w(X)] \leq U$ and $w(X) \leq B$ for all $w \in \calW$; and the coarse bound $\E_{X_\test}[(\text{B})] \leq B$.

The desired result follows by combining our bounds on (A) and (B) together with a union bound, after choosing $\delta_\train = \delta_\test = \delta / 2$.
\end{proof}

We are now ready to give the proof of \Cref{theorem: l2 generalization bound}.

\noindent\textbf{Controlling the empirical process.} First, note that \Cref{lem: weight uniform convergence} holds with $U \leq B$ because we assume $\calW_B \subseteq [0, B]^\calX$. Thus, by \Cref{lem: weight uniform convergence} and \Cref{assume: complexity upper bound},
\begin{align*}
    &\sup_{w\in\calW_B}\Big|\widehat R(w)-R(w)\Big|\\
    &\leq 2 B \cdot \E_{X_\train}[\rad_{X_\train}(\calW_B)] + \sqrt{\frac{10 B^4 \log(2/\delta)}{m_\train}} + \frac{B^2 \log ( 2/\delta)}{3m_\train} \\
    &\quad+ 2 \cdot \E_{X_\test}[\rad_{X_\test}(\calW_B)] +  \sqrt{\frac{6B^2 \log(2/\delta)}{m_\test}} + \frac{2 B \log (2/\delta)}{3m_\test}\\
    &\leq \frac{2BC_B}{\sqrt{m_\train}} + \sqrt{\frac{10 B^4 \log(2/\delta)}{m_\train}} + \frac{B^2 \log ( 2/\delta)}{3m_\train} + \frac{2\tilde C_B}{\sqrt{m_\test}} + \sqrt{\frac{6B^2 \log(2/\delta)}{m_\test}} + \frac{2 B \log (2/\delta)}{3m_\test}
\end{align*}
with probability at least $1 - \delta$ over the draw of $X_\train, X_\test$. Shortly, we will require that the right hand of this is at most $\epsilon / 4$. By making each term no more than $\epsilon / 24$, this is the case if
\begin{align*}
m_{\train} &\ge \max\left( \frac{2304 B^2 C_B^2}{\epsilon^2}, \frac{5760 B^4 \log(2/\delta)}{\epsilon^2}, \frac{8 B^2 \log(2/\delta)}{\epsilon} \right) = \calO\left(\frac{B^2 C_B^2 + B^4 \log(1 / \delta)}{\epsilon^2}\right), \\
m_{\test} &\ge \max\left( \frac{2304 \tilde C_B^2}{\epsilon^2}, \frac{3456 B^2 \log(2/\delta)}{\epsilon^2}, \frac{16 B \log(2/\delta)}{\epsilon} \right) = \calO\left(\frac{\tilde C_B^2 + B^2\log(1 / \delta)}{\epsilon^2}\right).
\end{align*}

\noindent\textbf{Applying the excess risk transfer lemma.} If the right hand above is bounded by $\epsilon / 4$, then since $\hat w$ minimizes the empirical CLISF objective, 
\begin{align*}
    R(\hat w) &\leq \inf_{w \in \calW_B} R(w) + 2 \cdot \epsilon / 4 \\
    &= R(w^*_B) + \inf_{w_B \in \calW_B} \left(R(w_B) - R(w^*_B)  \right) + \epsilon / 2\\
    &= R(w^*_B) + \Delta_R + \epsilon / 2\\
    \implies &\E_{P}[(\hat w_B(X) - w_B^*(X))^2] \leq 2 \Delta_R + \epsilon
\end{align*}
where the last implication follows from \Cref{lem: risk to lsif}. This concludes the proof.

\begin{remark}
    As mentioned in the statement of \Cref{theorem: l2 generalization bound}, we can improve the sample dependence on $B$ when $P$ is known. In this case, we need only consider functions which integrate to $1$, which represent the valid density ratios. In this case, it suffices to have
    \begin{align*}
        m_{\train} = \calO\left(\frac{B^2 C_B^2 + B^3 \log(1 / \delta)}{\epsilon^2}\right), \quad m_{\test} = \calO\left(\frac{\tilde C_B^2 + B\log(1 / \delta)}{\epsilon^2}\right).
    \end{align*}
\end{remark}

\subsection{Proof of \Cref{lemma: estimating the bias}}

By Bernstein's inequality,
\begin{align}
    \Pr\left[\left| \widehat \Delta_B - (1 - \E_P[\hat w(X)])  \right| > \gamma \right] &= \Pr\left[\left| \frac 1 m \sum_{i=1}^{m}{\hat w(X_i)} - \E_P[\hat w(X)]  \right| > \gamma \right] \notag\\
    &\leq 2 \exp\left(-\frac{\gamma^2m^2}{2 m\cdot\Var_P[\hat w(X)] + \frac 2 3 B \gamma m}\right)\label{eq: estimate bias after bernstein}.
\end{align}
Next, note that
\begin{align*}
    \Var_P[\hat w(X)] &\leq \E_P[\hat w(X)^2] \tag{$\hat w$ is nonnegative}\\
    &\leq B \cdot \E_P[\hat w(X)] \tag{$\hat w$ is bounded above by $B$} \\
    &\leq B \cdot (\E_P[w_B^*(X)] + \epsilon) \tag{triangular inequality and $\E_{P}[|\hat w(X) - w_B^*(X)|] \leq \epsilon$}\\
    &\leq B \cdot (\E_P[w^*(X)] + \epsilon) = (1 + \epsilon)B. \tag{$w^*_B \leq w^*$ and $w^*$ integrates to $1$}
\end{align*}
Thus, plugging into (\ref{eq: estimate bias after bernstein}) and performing some slight simplifications, 
\begin{align*}
    (\ref{eq: estimate bias after bernstein}) \leq 2 \exp\left(-\frac{\gamma^2m^2}{2 m\cdot B(1 + \epsilon) + \frac 2 3 B \gamma m}\right) \leq 2 \exp\left(-\frac{\gamma^2m}{2B(1 + \epsilon + \gamma)}\right),
\end{align*}
Finally, by several applications of the triangular inequality
\begin{align*}
    &\left| \widehat \Delta_B - (1 - \E_P[\hat w(X)])  \right| \leq \gamma \\
    &\implies \left| \widehat \Delta_B - \Delta_B  \right| \leq \left| \widehat \Delta_B - (1 - \E_P[\hat w(X)])  \right| + \left| \E_P[\hat w(X)] - \E_P[w_B^*(X)]\right| \leq \epsilon + \gamma
\end{align*}
which concludes the proof.

\subsection{Proof of \Cref{theorem: expected coverage}}

First, we state and prove a supporting lemma. This is a generalization of Proposition 1 of \cite{lei2021conformal} to account for weights $w_1$ and $w_2$ which are not necessarily normalized to $1$. This arises due to weight clipping in \Cref{alg: CLISF}.

\begin{lemma}\label{lem: cdf error bound}
    Let $P, Q, \calW$ be as in \Cref{lem: weight uniform convergence}. Let $w_1, w_2 \in \calW$. Then, 
    \begin{align*}
        \sup_{t \in \R}{\bigg|  F_{P}(t, w_1) - F_{P}(t, w_2)   \bigg|} \leq \frac{\E_{P}{\big[ |w_1(X) - w_2(X)|    \big]}}{\max\left(\E_{P}{\big[w_1(X)\big]}, \E_{P}{\big[w_2(X)\big]}\right)}.
    \end{align*}
    where $F_P(t, w)$ denotes the weighted nonconformity score CDF defined in (\ref{eq: weighted score CDF}).
\end{lemma}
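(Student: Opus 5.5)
Fix $t \in \R$ and abbreviate $a_j \coloneqq \E_P[w_j(X)\1[s(X,Y)\le t]]$ and $b_j \coloneqq \E_P[w_j(X)]$ for $j=1,2$, so that $F_P(t,w_j) = a_j/b_j$ with $0 \le a_j \le b_j$. The goal is to bound $|a_1/b_1 - a_2/b_2|$ by $\E_P[|w_1(X)-w_2(X)|]/\max(b_1,b_2)$ uniformly in $t$. By symmetry of the claimed bound in $(w_1,w_2)$, I will assume without loss of generality that $b_1 \ge b_2$, so the denominator is $b_1$. (If $b_2 = 0$ then $w_2 = 0$ $P$-a.s. and $F_P(t,w_2)$ is undefined; I would exclude this degenerate case or adopt a convention, and assume $b_1,b_2>0$ throughout.)

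The key algebraic step is to split the difference through the intermediate quantity $a_2/b_1$:
\begin{align*}
    \frac{a_1}{b_1} - \frac{a_2}{b_2} = \frac{a_1 - a_2}{b_1} - a_2\left(\frac{1}{b_2} - \frac{1}{b_1}\right) = \frac{a_1 - a_2}{b_1} - \frac{a_2}{b_2}\cdot\frac{b_1 - b_2}{b_1}.
\end{align*}
Write $D \coloneqq \E_P[|w_1(X)-w_2(X)|]$. A naive triangle inequality on these two terms gives $2D/b_1$. The factor of $2$ is lost, so instead I will exploit sign structure. Decompose $w_1 - w_2 = g^+ - g^-$ into positive and negative parts and set $u \coloneqq \E_P[g^+ \1_A]$, $v \coloneqq \E_P[g^-\1_A]$, $U \coloneqq \E_P[g^+]$, $V \coloneqq \E_P[g^-]$, where $A = \{s(X,Y)\le t\}$. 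Then $a_1 - a_2 = u - v$, $b_1 - b_2 = U - V \ge 0$, $0\le u\le U$, $0\le v\le V$, and $D = U+V$.

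Setting $\lambda \coloneqq a_2/b_2 \in [0,1]$, the quantity $b_1(a_1/b_1 - a_2/b_2)$ equals $u - v - \lambda(U-V)$. This expression is affine in each of $u,v,\lambda$ over a box, so its extremes are attained at the corners. The maximum is at most $U - 0 - 0\cdot(U-V) = U \le D$, taking $\lambda=0$ since $U - V \ge 0$. The minimum is at least $0 - V - (U-V) = -U \ge -D$. Hence $|a_1/b_1 - a_2/b_2| \le U/b_1 \le D/b_1$. Taking the supremum over $t$ finishes the proof.

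The main obstacle is exactly this step: avoiding the factor-of-two loss and getting the $\max$ rather than a $\min$ in the denominator. The fix is to put the larger normalizer $b_1$ in front and use $b_1 \ge b_2$ together with $\lambda\in[0,1]$. I would double-check the corner analysis. The constraints on $\lambda$ are genuinely independent of $(u,v)$ in the worst case, but since I only need an upper bound, treating them as a free box is safe.
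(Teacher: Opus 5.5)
Your proof is correct, and it takes a different route from the paper.

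\textbf{The paper's route.} It passes through the normalized measures $Q_j$ with $dQ_j/dP = w_j/b_j$ (your notation) and bounds the CDF gap by $\TV(Q_1,Q_2) = \tfrac12\E_P[|w_1/b_1 - w_2/b_2|]$. It then splits $w_1/b_1 - w_2/b_2 = (w_1-w_2)/b_1 + (1/b_1 - 1/b_2)\,w_2$ and uses $|b_1-b_2|\le \E_P[|w_1-w_2|]$. The factor $\tfrac12$ in the TV formula then absorbs the factor $2$ that the triangle inequality produces. Only at the end does it invoke symmetry to replace $b_1$ by $\max(b_1,b_2)$.

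\textbf{Your route.} You order the normalizers first and work with a single event $A$. You then use sign information directly: $\lambda = F_P(t,w_2)\in[0,1]$, which needs $w_2\ge 0$ and is available since $\calW\subseteq\R_+^{\calX}$, together with $U \ge V$. This bounds $u-v-\lambda(U-V)$ with no TV machinery and no after-the-fact symmetrization. The corner analysis is not even needed, since each of the three terms has a known sign.

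\textbf{Comparison.} Your version is self-contained and makes explicit the sharper bound $\E_P[(w_1-w_2)^+]/b_1$, where $w_1$ is the heavier weight. This lies between $\E_P[|w_1-w_2|]/(2b_1)$ and $\E_P[|w_1-w_2|]/b_1$, and it recovers the familiar factor $\tfrac12$ when $b_1=b_2$. The two arguments are in fact equally strong. Before its final coarsening, the paper's bound with the heavier normalizer in front is $(\E_P[|w_1-w_2|] + b_1 - b_2)/(2b_1) = (U+V+U-V)/(2b_1) = U/b_1$, exactly your quantity; the paper simply discards the refinement. Finally, you never use that $A=\{s(X,Y)\le t\}$, so your argument also bounds $\TV(Q_1,Q_2)$, just as the paper's does.
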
\begin{proof}
    Write $C \coloneqq \E_{P}{\big[w_1(X)\big]}$ and $D\coloneqq \E_{P}{\big[w_2(X)\big]}$. Let $Q_1$ be the measure satisfying $d(Q_1)_X / dP_X = w_1 / C$, and define $Q_2$ analogously for $w_2/D$. Then,
    \begin{align*}
         \sup_{t \in \R}{\bigg|  F_{P}(t, w_1) - F_{P}(t, w_2)   \bigg|} &\leq \TV(Q_1, Q_2)\\
         &= \frac{1}{2} \E_{P}{\left[ \left|w_1(X) / C - w_2(X) / D \right| \right]}\\
         &= \frac{1}{2} \E_{P}{\left[ \left| \frac{w_1(X) - w_2(X)}{C} + (1 /C - 1/D) w_2(X) \right| \right]}\\
         &\leq \frac{\E_{P}{[|w_1(X) - w_2(X)|]}}{2 C} +  \frac{|D - C|}{2C}\\
         &\leq 2 \cdot \frac{\E_{P}{[|w_1(X) - w_2(X)|]}}{2 C}
    \end{align*}
    where the last line follows from the triangular inequality. Note that this argument is completely symmetric in $w_1$ and $w_2$, and so we may replace $C$ with $\max(C, D)$. This concludes the proof.
\end{proof}

We are now ready to give the proof of \Cref{theorem: expected coverage}.

Our starting point is Corollary 1 of \citet{tibshirani2019conformal}, which implies that
\begin{align*}
    1 - \alpha + \widehat \Delta_B + 3\epsilon \leq \Pr_{X_\calibrate, (X, Y) \sim \widehat Q}[Y \in C_\tau(X)] = F_P(\tau, \hat w)
\end{align*}
where $\widehat Q$ is the measure satisfying $d\widehat Q/dP = \hat w / \E_{P}[\hat w(X)]$. Thus, by \Cref{lem: cdf error bound},
\begin{align*}
    \left|    F_P(\tau, \hat w) - F_P(\tau, w^*)    \right| &\leq \frac{\E_{P}[|\hat w(X) - w^*(X)|]}{1} \tag{$w^*$ integrates to $1$}\\
    &\leq \E_{P}[|\hat w(X) - w^*_B(X)|] + \E_{P}[|\hat w^*_B(X) - w^*(X)|]\\
    &\leq \epsilon + \sqrt{2\Delta_R} + \Delta_B\\
    &\leq  2\sqrt{\Delta_R} + \widehat\Delta_B + 3\epsilon\tag{we assume $|\widehat \Delta_B - \Delta_B| \leq 2 \epsilon + \sqrt{2 \Delta_R}$}
\end{align*}
which implies that 
\begin{align*}
    F_P(\tau, w^*) &= \Pr_{X_\calibrate, Q}[Y \in C_\tau(X)]\\
    &\geq 1 - \alpha + \widehat \Delta_B + 3\epsilon - (\widehat \Delta_B + 3\epsilon + 2 \sqrt{2\Delta_R}) = 1 - \alpha - 2 \sqrt{2\Delta_R}
\end{align*}
which concludes the proof.

\subsection{Proof of \Cref{theorem: conditional coverage}}

We start by applying \Cref{lem: weighted dkw} to the normalized weights $\hat w / \E_P[\hat w(X)]$,
\begin{align}
    &\Pr_{X_\calibrate, Y_\calibrate}{\left[ \sup_{t \in \R} \bigg|  F_{(X_\calibrate, Y_\calibrate)}(t; \hat w_B) - F_{P}(t; \hat w_B) \bigg| > \epsilon \right]}\notag\\
    &\leq \frac{72}{\epsilon}\exp\left(  -\frac{m \epsilon^2}{4 (B / \mu)}  \right) + 2 \exp\left( -\frac{m \epsilon^2}{2 (B / \mu)^2}\right) \label{eq: conditional coverage: weighted dkw result}
\end{align}
where that $\mu = \E_{P_X}[\hat w_B(X)]$. We can lower bound $\mu$ as
\begin{align*}
    \mu &= \E_{P}[\hat w(X)] \\
        &\geq \E_P[w^*_B(X)] - \E_P[|\hat w(X) - w^*_B(X)|] \tag{triangular inequality}\\
        &\geq \E_P[w^*(X)]- \E_P[|w^*_B(X) - w^*(X)|]- \E_P[|\hat w(X) - w^*_B(X)|]\tag{triangular inequality}\\
        &\geq 1 - \Delta_B -(\sqrt{2 \Delta_R} + \epsilon) \tag{$ \E_P[|w^*_B(X) - w^*(X)|] = \Delta_B$ and $ \E_P[|\hat w(X) - w^*_B(X)|] \leq \sqrt{2 \Delta_R} +\epsilon$}\\
        &\geq 1 / 4. \tag{\Cref{assume: small bias} and $\sqrt{2 \Delta_R} +\epsilon \leq 1 / 4$}
\end{align*}
Substituting this lower bound into (\ref{eq: conditional coverage: weighted dkw result}) gives the bound
\begin{align}
    (\ref{eq: conditional coverage: weighted dkw result}) \leq \frac{72}{\epsilon}\exp\left(  -\frac{m \epsilon^2}{16 B}  \right) + 2 \exp\left( -\frac{m \epsilon^2}{32 B^2}\right).\label{eq: conditional coverage: weighted dkw failure probability}
\end{align}
To ensure that (\ref{eq: conditional coverage: weighted dkw failure probability}) is at most $\delta$, it suffices to choose
\begin{align*}
    m \geq \max\left(\frac{16B}{\epsilon^2} \log\left( \frac{144}{\epsilon \delta}\right), \frac{32 B^2\log (4/ \delta)}{\epsilon^2} \right).
\end{align*}
Let this success event be denoted by $\calE$. Casing on $\calE$, we have
\begin{align}
    &\sup_{t \in \R}{\bigg|  F_{(X_\calibrate, Y_\calibrate)}(t, \hat w) - F_{P}(t, w^*)   \bigg|} \notag\\
    &\leq 
    \sup_{t \in \R}{\bigg|  F_{(X_\calibrate, Y_\calibrate)}(t, \hat w) - F_{P}(t, \hat w)  \bigg|} 
    + \sup_{t \in \R}{\bigg|  F_{P}(t, \hat w) - F_{P}(t, w_B^*)  \bigg|} 
    + \sup_{t \in \R}{\bigg|  F_{P}(t, w_B^*) - F_{P}(t, w^*)  \bigg|} \tag{triangular inequality}\\
    &\leq \epsilon + \frac{ \E_{P}[|\hat w - w^*_B|]}{\max(\E_{P}[\hat w(X)], \E_{P}[w^*_B(X)])} + \frac{\E_{P}[|w_B^*(X) - w^*(X)|]}{\E_{P}[w^*(X)]} \tag{$\calE$ and \Cref{lem: cdf error bound}}\\
    &\leq \epsilon + 2(\epsilon + \sqrt{2 \Delta_R}) + \E_{P}[|w_B^*(X) - w^*(X)|] \tag{$w^*$ integrates to $1$, $\Delta_B \leq 1/2$, and $\E_{P}[|w_B^*(X) - w^*(X)|] \leq \epsilon + \sqrt{2 \Delta_R}$}\\
    &= \Delta_B + 3\epsilon + 2\sqrt{2 \Delta_R}\label{eq: conditional coverage: cdf error bound}
\end{align}
Next, recall that WCP will output the score threshold
\begin{align*}
    \tau \coloneqq \inf\{ t \in \R : F_{(X_\calibrate, Y_\calibrate)}(t, \hat w) \geq 1 - \alpha + \widehat \Delta_B + 5\epsilon \}.
\end{align*}
Note that since $F_{(X_\calibrate, Y_\calibrate)}(t, \hat w)$ is not continuous, it is not necessarily true that $F_{(X_\calibrate, Y_\calibrate)}(\tau, \hat w) = 1 - \alpha + \widehat \Delta_B + 5 \epsilon$. However, we show that the discretization error cannot be too large: casing on $\calE$, and using \Cref{assume: continuous population CDF}, it holds that
\begin{align}
    1 - \alpha + \widehat \Delta_B +5\epsilon \leq F_{(X_\calibrate, Y_\calibrate)}(\tau, \hat w) \leq 1 - \alpha + \widehat \Delta_B + 7\epsilon.
\end{align}
(where we have used the continuity of $F_P(t)$ (which implies continuity of $F_P(t, \hat w)$) in conjunction with the uniform error bound of $\calE$ to argue that the ``jumps" can be no more than $2 \epsilon$). Thus,
\begin{align*}
    &1 - \alpha + \widehat \Delta_B + 5\epsilon \leq F_{(X_\calibrate, Y_\calibrate)}(\tau, \hat w) \leq 1 - \alpha + \widehat \Delta_B + 7\epsilon\\
    &\implies 1 - \alpha + \Delta_B + 3\epsilon \leq F_{(X_\calibrate, Y_\calibrate)}(\tau, \hat w) \leq 1 - \alpha + \Delta_B + 9\epsilon \tag{$|\widehat \Delta_B - \Delta_B| \leq 2 \epsilon$}\\
    &\implies 1 - \alpha - 2\sqrt{2 \Delta_R} \leq F_{P}(\tau, w^*) \leq 1 - \alpha + 2\Delta_B + 12\epsilon + 2\sqrt{2 \Delta_R}\tag{using (\ref{eq: conditional coverage: cdf error bound})}
\end{align*}

This concludes the proof, since $F_{P}(\tau, w^*) = Q(Y \in C_\tau(X))$.

\section{Motivating Example}\label{appendix: motivating example}

For convenience, we restate \Cref{ex: needle} from the introduction.

\addtocounter{theorem}{-1}
\begin{example}[Restatement]
 Fix a dimension $d \in \N$, radius $r \in (0, 1)$, and mixture weight $\theta \in (0, 1)$. Define the input space $\calX = [0, 1]^d$ and label space $\calY = [0, 1]$. Define $\calB$ to be the ball $\{ x \in \calX : \| x \|_\infty \leq r \}$. Define the train distribution $P$ to be uniform over $\calX \times \calY$. Define the test distribution $Q = (1 - \theta) P + \theta S$, where $S$ is uniform over $\calB \times \calY$. Define the nonconformity score to be $s(x, y) = \| x \|_\infty$. It can be checked that $\TV(P, Q) = \theta( 1 - r^d)$ and $w^*(x) = \begin{cases}
1 - \theta + \theta / r^d, & x \in \calB \\
1 - \theta, & x \not \in \calB
\end{cases}$.
\end{example}

In this example, as $r \to 0$, note that $\TV(P, Q) \to \theta$ but $\sup_{x \in \calX} {w(x)} \to \infty$. In other words, as the radius decreases, the total variation between $P$ and $Q$ remains stable, but the supremum of the density ratio is unbounded.

\begin{proposition}\label{prop: bad iw 1}
    Fix parameters $d\in\N, r\in(0,1), \theta\in(0,1), \alpha\in(0,1)$ with $\theta<1-\alpha$. Let distributions $P, Q$, ball $\calB$, true density ratio $w^*$, and score $s$ be as in \Cref{ex: needle}. Suppose
    \begin{align}
        m = \left\lfloor \frac{c}{r^{d}} \right\rfloor, \quad \text{where $0<c<\frac{\alpha\theta}{(1-\alpha)(1-\theta)}$}.\label{eq: bad iw 1 sample size}
    \end{align}
    Suppose we draw the calibration set $X_\calibrate = (X_{1},\dots,X_{m}) \sim P^m$ and compute the WCP threshold $\tau$ using the true density ratio $w^*$. Then, with probability at least $1 - e^{-(c - r^d)}$, the score threshold satisfies $\tau \leq r$. Furthermore, on the event $\tau \leq r$, the resulting predictor $C(x)=\{ y : s(x,y) \leq \tau \}$ has marginal coverage under $Q$ upper bounded by $ Q(Y \in C(X)) \leq \theta + (1 - \theta)r^d$.
\end{proposition}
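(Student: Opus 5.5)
Write $K \coloneqq \#\{i \le m : X_i \in \calB\}$ for the number of calibration points falling in the ball, so that $K \sim \mathrm{Bin}(m, r^d)$. The plan is to show that the single event $\{K \geq 1\}$ already forces $\tau \leq r$. On that event we then compute the $Q$-coverage directly from the form of $s$. So there are two pieces: a deterministic argument saying that one point in $\calB$ carries at least $1-\alpha$ of the normalized weight, and a probability bound on $\Pr[K = 0]$.

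\textbf{Deterministic step.} On $K \ge 1$, every point in $\calB$ has score $s(X_i,Y_i)=\|X_i\|_\infty \le r$. Hence $F_{(X_\calibrate,Y_\calibrate)}(r, w^*)$ is at least the weight fraction carried by the in-ball points,
\begin{align*}
\frac{K a}{K a + (m-K) b}, \qquad a = 1-\theta+\theta/r^d, \quad b = 1-\theta.
\end{align*}
This fraction is increasing in $K$, so it suffices to check $K=1$. We need $a \ge (1-\alpha)(a + (m-1)b)$, i.e. $\alpha a \ge (1-\alpha)(m-1)b$. Since $a \ge \theta/r^d$ and $m-1 < c/r^d$, it suffices that $\alpha\theta \ge (1-\alpha)(1-\theta)c$, which is exactly the hypothesis on $c$ in (\ref{eq: bad iw 1 sample size}). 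Therefore $F_{(X_\calibrate,Y_\calibrate)}(r,w^*) \ge 1-\alpha$, and the definition of $\tau$ as an infimum gives $\tau \le r$.

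This is the step I expect to need the most care, for two reasons. First, the inequality should be tight enough that the floor in $m$ and the slack $\theta/r^d$ versus $a$ both go in the right direction. Second, one should check whether the version of WCP being used adds a test-point weight or an atom at $\infty$, as in \citet{tibshirani2019conformal}, to the denominator. The paper's definition (\ref{eq: WCP}) does not, so I would use it as written.

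\textbf{Probability step.} We have $\Pr[K=0] = (1-r^d)^m \le \exp(-m r^d)$. Because $m \ge c/r^d - 1$, we get $m r^d \ge c - r^d$, so $\Pr[K \ge 1] \ge 1 - e^{-(c-r^d)}$, which is the stated probability.

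\textbf{Coverage step.} On $\tau \le r$, $Y \in C(X)$ holds iff $\|X\|_\infty \le \tau$, which is contained in $\{X \in \calB\}$. Therefore
\begin{align*}
Q(Y\in C(X)) \le Q_X(\calB) = (1-\theta)P_X(\calB) + \theta S_X(\calB) = (1-\theta)r^d + \theta,
\end{align*}
using $P_X(\calB) = r^d$ and $S_X(\calB) = 1$. This gives the claimed bound. The assumption $\theta < 1-\alpha$ is what makes this coverage bound strictly below $1-\alpha$ for small $r$, so it shows genuine undercoverage; it is not needed for the inequalities themselves.
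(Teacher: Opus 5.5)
Your proposal is correct and follows the paper's proof essentially step for step. The paper also conditions on at least one calibration point landing in $\calB$, lower-bounds the weighted empirical CDF at $r$ using $1-\theta+\theta/r^d \ge \theta/r^d$ and $m \le c/r^d$, bounds $\Pr[N=0] \le e^{-mr^d} \le e^{-(c-r^d)}$, and caps the coverage by $Q_X(\calB)=\theta+(1-\theta)r^d$; your reduction to $K=1$ with $(m-1)$ in the denominator is slightly tighter than the paper's use of $m$, but this does not change the conclusion.
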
\begin{proof}
Let $N \coloneqq \sum_{i=1}^{m} \1 [X_i \in \calB]$ be the number of calibration points falling in $\calB$. Because $P(X \in \calB) = r^d$ and $m$ is defined as \eqref{eq: bad iw 1 sample size}, it follows that
\begin{align*}
    \Pr_{X_\calibrate}[N \geq 1] = 1 - (1 - r^d)^m \geq 1 - e^{-m r^d} \geq 1 - e^{-(c - r^d)}.
\end{align*}
Now, condition on the event $N\geq 1$. Note that 
\begin{align*}
    \widehat F_m(r) \coloneqq \frac{N (1 - \theta + \theta / r^d)}{N(1 - \theta + \theta / r^d) + (m - N)(1 - \theta)} \geq \frac{(1 - \theta + \theta / r^d)}{(1 - \theta + \theta / r^d) + m(1 - \theta)},
\end{align*}
where the last inequality follows since we condition on $N\geq 1$. Next, using $(1 - \theta) + \theta / r^d \geq \theta / r^d$ and $m \leq c / r^d$,
\begin{align*}
    \widehat F_m(r) \geq \frac{\theta / r^d}{\theta / r^d + m(1 - \theta)} \geq \frac{\theta / r^d}{\theta / r^d + (c / r^d)(1 - \theta)} = \frac{\theta}{\theta + c(1 - \theta)} \geq 1 - \alpha,
\end{align*}
where the last inequality is due to $c < \frac{\alpha \theta}{(1 - \alpha)(1 - \theta)}$ in (\ref{eq: bad iw 1 sample size}). Thus, if $N \geq 1$, then $\tau \leq r$.

Because the score $s(x, y) = \| x \|_\infty$ depends only on $x$, the conformal set is $C(x) = [0, 1]$ if $\| x \|_\infty \leq \tau$ and $C(x) = \emptyset$ otherwise. Hence, conditioned on $N \geq 1$, we have
\begin{align*}
    Q(Y \in C(X))
    = Q(\|X\|_\infty \leq \tau)
    \leq Q(\|X\|_\infty \leq r)
    = \theta + (1 - \theta)r^d.
\end{align*}
\end{proof}

Letting $r \to 0$ while keeping $\theta$ fixed forces the coverage to converge to $\theta < 1 - \alpha$; the miscoverage is strictly greater than the nominal level $\alpha$. To make this concrete, suppose we choose $\alpha = 0.1$, and $\theta = 0.1$. Then, we can set $c = 0.01$. \Cref{prop: bad iw 1} then tells us that for $m = 1 / r^d$, the output of WCP has a roughly 1\% chance of having around 80\% miscoverage (independent of $r$ and $d$). In other words, unless the calibration set is on the order of $1/r^d$, WCP cannot guarantee high coverage probability. \emph{Furthermore, we made no attempt to optimize these constants.}

Second, we show the existence of a sample size regime where learned importance weights can catastrophically fail to estimate the importance weights in $L_1$-error. The downstream effect on WCP is a degradation of its \emph{expected} marginal coverage for reasonable sample sizes.

\begin{proposition}\label{prop: bad iw 2}
    Fix parameters $d\in\mathbb{N}, r^{d} \in (0, \theta / 4), \theta\in(0, 1/2), 1 / \theta \leq  m < 1/r^d$. Suppose we draw the source (train) and target (test) sets $X_\train = (X_1, \dots, X_m) \sim P^m$ and $X_\test = (\tilde X_1, \dots, \tilde X_m) \sim Q^m$. Then, with probability at least $\frac{1}{e}\left( 1 - \frac{1}{e} \right) \geq 0.2325$: $X_\train \cap \calB = \emptyset$ and $X_\test \cap \calB \neq \emptyset$. Furthermore, define the class of valid density ratios
        \begin{align}\label{eq: needle class}
            w_\beta(x) = \begin{cases}
                \beta, & x \in \calB\\
                \frac{1 - r^d \beta}{1 - r^d}, & x \not \in \calB
             \end{cases}, \quad \beta \in \left[1 - \theta + \frac{\theta}{r^d}, \frac{1}{r^d}\right].
        \end{align}
        If $X_\train \cap \calB = \emptyset$ and $X_\test \cap \calB \neq \emptyset$, then $\widehat R(w_{\beta'}) < \widehat R(w_{\beta})$ for all $\beta' > \beta$ (where $\beta, \beta'$ are in the above interval). In other words, if $X_\train \cap \calB = \emptyset$ and $X_\test \cap \calB \neq \emptyset$, which occurs with constant probability, then ERM selects the largest possible valid weight for the region $\calB$, overestimating the true weight of $1 - \theta + \theta/r^d$. In particular letting $\hat w$ denote the learned ratio, the $L_1$ error between $\hat w$ and $w^*$ (defined in \Cref{ex: needle}) will be $2(1 -\theta)(1 - r^d)$.
\end{proposition}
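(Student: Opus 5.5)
The plan is to prove the three claims separately: the probability bound, the strict monotonicity of the empirical LSIF objective $\widehat R$ along the family $\{w_\beta\}$, and the $L_1$ error of the resulting maximizer. All three are explicit computations, because every $w_\beta$ is constant on $\calB$ and on $\calX\setminus\calB$.

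\textbf{Probability bound.} The samples $X_\train$ and $X_\test$ are independent, so the probability of the joint event factors as $\Pr[X_\train\cap\calB=\emptyset]\cdot\Pr[X_\test\cap\calB\neq\emptyset]$.
\begin{itemize}
\item Under $P$ we have $P(\calB)=r^d$, so the first factor is $(1-r^d)^m$. Using $m<1/r^d$ together with the elementary bound $(1-x)^{1/x-1}\ge e^{-1}$, we get $(1-r^d)^m \gtrsim 1/e$.
\item Under $Q$ we have $Q(\calB)=\theta+(1-\theta)r^d\ge\theta$, so $\Pr[X_\test\cap\calB=\emptyset]\le(1-\theta)^m\le e^{-\theta m}\le e^{-1}$, using $m\ge 1/\theta$. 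Hence the second factor is at least $1-1/e$.
\end{itemize}
Multiplying gives $\frac1e(1-\frac1e)$. This first step is the delicate one. Since $m$ is only assumed to satisfy $m<1/r^d$, I will need to check that $m\le 1/r^d-1$ or an equivalent integrality slack holds. Failing that, I would absorb the discrepancy by using $r^d<\theta/4$ to get a slightly weaker constant.

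\textbf{Monotonicity of $\widehat R$.} Work on the event above and let $k\ge1$ be the number of test points in $\calB$. Write $c(\beta)=\frac{1-r^d\beta}{1-r^d}$, which is nonnegative because $\beta\le 1/r^d$. No training point lies in $\calB$, so
\begin{align*}
\widehat R(w_\beta)=\tfrac12 c(\beta)^2-\tfrac1m\big(k\beta+(m-k)c(\beta)\big).
\end{align*}
Differentiating in $\beta$ and using $c'(\beta)=-r^d/(1-r^d)$ gives two pieces.
\begin{itemize}
\item The quadratic part contributes $-\frac{r^d}{1-r^d}c(\beta)\le0$.
\item The linear part contributes $-\frac1m\big(k-(m-k)\frac{r^d}{1-r^d}\big)$. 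This is strictly negative, because $k\ge1$ and $(m-1)r^d<1-r^d\iff mr^d<1$, which holds since $m<1/r^d$.
\end{itemize}
So $\widehat R(w_\beta)$ is strictly decreasing in $\beta$ on the given interval, which gives $\widehat R(w_{\beta'})<\widehat R(w_\beta)$ for $\beta'>\beta$. ERM over this class therefore selects $\beta=1/r^d$, that is, $\hat w=r^{-d}\1[x\in\calB]$.

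\textbf{$L_1$ error.} Compare $\hat w$ with $w^*$ from \Cref{ex: needle}.
\begin{itemize}
\item On $\calB$, which has $P$-mass $r^d$, the gap is $\frac1{r^d}-(1-\theta+\frac\theta{r^d})=(1-\theta)(\frac1{r^d}-1)$. Weighting by the mass gives $(1-\theta)(1-r^d)$.
\item Off $\calB$, which has $P$-mass $1-r^d$, the gap is $1-\theta$, again contributing $(1-\theta)(1-r^d)$.
\end{itemize}
Summing the two contributions gives $2(1-\theta)(1-r^d)$.
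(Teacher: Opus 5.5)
Your monotonicity and $L_1$ arguments are correct and follow the paper's computation. Your split of $\frac{d}{d\beta}\widehat R(w_\beta)$ into a quadratic part $-\frac{r^d}{1-r^d}c(\beta)\le 0$ and a linear part that is negative iff $k>mr^d$ is slightly cleaner than the paper's version. The paper regroups into $-\frac{m_\calB}{1-r^d}+\frac{mr^{2d}(\beta-1)}{(1-r^d)^2}$ and then has to control the positive term using $\beta\le 1/r^d$. You also carry out the $L_1$ computation, which the paper only asserts.

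The gap is in the probability step. You get $\frac1e(1-\frac1e)$ only under the extra condition $m\le 1/r^d-1$, and otherwise fall back on a weaker constant, so the statement as given is not yet proved. Your suspicion is justified, because the paper's proof fails at exactly this point. It asserts $(1-r^d)^{1/r^d}\ge 1/e$, but in fact $(1-x)^{1/x}<e^{-1}$ for all $x\in(0,1)$. For example, take $\theta=0.45$, $r^d$ just below $1/10$, and $m=10$; all hypotheses hold, yet $(1-r^d)^m\approx 0.349<1/e$. So bounding the two factors separately by $1/e$ and $1-1/e$ cannot work.

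The missing observation is that the two factors are coupled. Since $1-Q(\calB)=(1-\theta)(1-r^d)$, set $a=(1-r^d)^m$ and use $(1-\theta)^m\le e^{-\theta m}\le e^{-1}$:
\begin{align*}
\Pr\big[X_\train\cap\calB=\emptyset,\ X_\test\cap\calB\neq\emptyset\big]=a\big(1-(1-\theta)^m a\big)\ge a\Big(1-\frac{a}{e}\Big).
\end{align*}
The right side is increasing in $a$ on $[0,1]$. Three facts bound $a$ from below:
\begin{itemize}
\item $(1-x)^{1/x}$ is decreasing in $x$;
\item $m<1/r^d$;
\item $r^d<\theta/4<1/8$.
\end{itemize}
Together these give $a>(1-r^d)^{1/r^d}>(7/8)^8>0.34$. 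Hence the probability exceeds $0.34(1-0.34/e)>0.29>\frac1e(1-\frac1e)$. This yields the stated bound with no integrality assumption on $m$.
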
\begin{proof}
Note that each $X_i$ (resp. $\tilde X_i$) lands in $\calB$ with probability $r^d$ (resp. $\theta + (1 - \theta) r^d$). Thus
\begin{align*}
    \Pr_{X_\train}[X_\train \cap \calB = \emptyset] &= (1 - r^d)^m \\
    &> (1 - r^d)^{1/r^d} \geq 1 / e\\
    \Pr_{X_\test}[X_\test \cap \calB \neq \emptyset] &= 1 - (1 - (\theta + (1 - \theta)r^d))^m \\
    &\geq 1 - (1 - \theta)^m \geq 1 - e^{- m \theta} \geq 1 - 1/e.
\end{align*}
where we have used that $r^d < 1/2$ and $1/\theta \leq m \leq 1 / r^d$. Since $X_\train$ and $X_\test$ are independent, 
\begin{align*}
    \Pr_{X_\train, X_\test}[X_\train \cap \calB = \emptyset \land X_\test \cap \calB \neq \emptyset] &= \frac{1}{e}\left( 1 - \frac 1 e \right).
\end{align*}
Now, condition on the event $X_\train \cap \calB = \emptyset \land X_\test \cap \calB \neq \emptyset$. \begin{itemize}
    \item Since $X_\train \cap \calB = \emptyset$, for every training point $X_i$, we have $X_i \notin \calB$. Therefore, $w_\beta(X_i) = \frac{1 - r^d \beta}{1 - r^d}$ for all $i \in [m]$.
    \item Since $X_\test \cap \calB \neq \emptyset$, at least one test point $\tilde{X}_j$ falls into $\calB$. Let's partition the test set indices into two sets: $I_\calB = \{j : \tilde{X}_j \in \calB\}$ and $I_{\calB^\complement} = \{j : \tilde{X}_j \notin \calB\}$. By our conditioning, the set $I_\calB$ is non-empty. Let $m_\calB = |I_\calB| \geq 1$.
\end{itemize}
We can now write the empirical risk $\widehat R(w_\beta)$ as an explicit function of $\beta$:
\begin{align*}
    \widehat R(w_\beta) &= \frac{1}{2} \sum_{i=1}^{m}{\left(w_\beta(X_i)^2 - 2 w_\beta(\tilde X_i) \right)} \\
    &= \frac{1}{2} \left[ \sum_{i=1}^m \left(\frac{1 - r^d \beta}{1 - r^d}\right)^2 - 2 \left( \sum_{j \in I_\calB} w_\beta(\tilde{X}_j) + \sum_{j \in I_{\calB^\complement}} w_\beta(\tilde{X}_j) \right) \right] \\
    &= \frac{1}{2} \left[ m\left(\frac{1 - r^d \beta}{1 - r^d}\right)^2 - 2 \left( m_\calB \cdot \beta + (m-m_\calB)\frac{1 - r^d \beta}{1 - r^d} \right) \right]
\end{align*}
To show that $\widehat R(w_\beta)$ decreases as $\beta$ increases, we find its derivative with respect to $\beta$:
\begin{align*}
    \frac{d}{d\beta} \widehat R(w_\beta) &= \frac{1}{2} \left[ m \cdot 2\left(\frac{1 - r^d \beta}{1 - r^d}\right)\left(\frac{-r^d}{1-r^d}\right) - 2\left( m_\calB + (m-m_\calB)\frac{-r^d}{1-r^d} \right) \right] \\
    &= -\frac{m r^d(1 - r^d \beta)}{(1-r^d)^2} - m_\calB + \frac{(m-m_\calB)r^d}{1-r^d} \\
    &= -\frac{m_\calB}{1-r^d} + \frac{m(r^d)^2(\beta - 1)}{(1-r^d)^2}
\end{align*}
We must show this expression is negative. The first term, $-\frac{m_\calB}{1-r^d}$, is strictly negative since $m_\calB \geq 1$ and $r^d \leq 1$. The second term is positive, since $\beta > 1$. For the derivative to be negative, we need the negative term to have a larger magnitude:
\[
    \frac{m_\calB}{1-r^d} > \frac{m(r^d)^2(\beta - 1)}{(1-r^d)^2} \iff m_\calB(1-r^d) > m(r^d)^2(\beta-1)
\]
Since $m_\calB \geq 1$, it is sufficient to show this for $m_\calB=1$:
\[
    1-r^d > m(r^d)^2(\beta-1)
\]
We use the upper bound for $\beta$: $\beta \leq 1 / r^d$. Substituting this in, it suffices to show
\begin{align*}
    1-r^d > mr^{2d}(1 / r^d - 1),
\end{align*}
which is true by assumption that $m < 1 / r^d$. Thus, $\frac{d}{d\beta} \widehat R(w_\beta) < 0$ for all $\beta \in \left[1 - \theta + \frac{\theta}{r^d}, \frac{1}{r^d}\right]$, which implies the desired claim.
\end{proof}

Finally, for completeness, we instantiate \Cref{corollary: end to end} on \Cref{ex: needle}.
\begin{proposition}\label{proposition: needle guarantees with CWCP}
    Let the setting be as in \Cref{ex: needle}, with $\calW$ defined in \Cref{eq: needle class}. Consider learning a clipped density ratio $\hat w$ and then prediction sets $C_\tau$ as in \Cref{corollary: end to end}. Then,
    \begin{align*}
        \Pr\left[1 - \alpha \leq \Pr_{Q}\left[ Y \in C_\tau(X) \right] \leq 1 - \alpha + 2\Delta_B + 12 \epsilon\right] \geq 1 - 3 \delta
    \end{align*}
    where the randomness is over the draw of the density ratio estimation sets, the bias estimation set, and the calibration set. Additionally, we require
    \begin{align*}
        \calO\left(\frac{B \log(1/\epsilon) + B\log(1/\delta)}{\epsilon^2} + \frac{ B^2\log(1 /\delta)}{\epsilon^2}\right) ,\\
        \calO\left( \frac{B^4 + B^4 \log(1 / \delta)}{\epsilon^4} \right) , \calO\left( \frac{B^2 + B^2 \log(1 / \delta)}{\epsilon^4} \right) 
    \end{align*}
    labeled examples from $P$, unlabeled examples from $P$, and unlabeled examples from $Q$, respectively.    
\end{proposition}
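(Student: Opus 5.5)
This is an instantiation of \Cref{corollary: end to end}. The plan is to check each hypothesis of that corollary in the setting of \Cref{ex: needle} with the class $\calW$ from (\ref{eq: needle class}), and to read off the constants.

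\textbf{Step 1: zero misspecification.} The true ratio $w^*$ of \Cref{ex: needle} equals $w_\beta$ with $\beta = 1-\theta+\theta/r^d$. This value is the left endpoint of the admissible interval, and indeed $(1-r^d\beta)/(1-r^d) = 1-\theta$. Hence $w^*\in\calW$, so $w^*_B\in\calW_B$ and $\Delta_R=0$. Substituting $\Delta_R=0$ into \Cref{corollary: end to end} turns the two-sided bound into exactly $1-\alpha\le \Pr_Q[Y\in C_\tau(X)]\le 1-\alpha+2\Delta_B+12\epsilon$.

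\textbf{Step 2: the standing assumptions.} \Cref{assume: continuous population CDF} holds because under $Q$ the score $s(x,y)=\|x\|_\infty$ has a density: $Q_X$ is a mixture of two uniform distributions on boxes. For \Cref{assume: small bias}, we have $\Delta_B=\E_P[(w^*-B)^+]=r^d(1-\theta+\theta/r^d-B)^+\le\theta$. Since the setting of \Cref{prop: bad iw 2} takes $\theta<1/2$, this gives $\Delta_B\le 1/2$ for every $B\ge1$. If the proposition is read with general $\theta$, we instead need $B$ large enough that $\Delta_B\le 1/2$. The condition $\sqrt{2\Delta_R}+\epsilon\le1/4$ reduces to $\epsilon\le 1/4$.

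\textbf{Step 3: complexity constants (main obstacle).} We must bound $C_B$ and $\tilde C_B$ in \Cref{assume: complexity upper bound} by $\calO(B)$ and $\calO(B)$, respectively. This is what produces the stated sample sizes $B^4/\epsilon^4$ and $B^2/\epsilon^4$, whereas \Cref{corollary: end to end} gives $B^2C_B^2$ and $\tilde C_B^2$.

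The class $\calW_B$ is a one-parameter family: $\min(w_\beta,B)$ equals $\min(\beta,B)$ on $\calB$ and $\frac{1-r^d\beta}{1-r^d}$ off $\calB$, the latter value lying in $[0,1]$. Each function is therefore determined by a pair $(a,b)\in[0,B]\times[0,1]$ through $x\mapsto a\1[x\in\calB]+b\1[x\notin\calB]$. I would bound the Rademacher complexity of this class directly. The supremum over $(a,b)$ splits into a contribution from the points in $\calB$ and one from the points outside $\calB$. Each contribution is at most the box bound times $\E|\frac1m\sum\sigma_i|$ over the relevant indices, which is $\le B/\sqrt m$ and $\le 1/\sqrt m$ respectively. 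This yields $C_B,\tilde C_B\le B+1=\calO(B)$.

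The delicate part is making sure the clipping does not add complexity. Because the class is contained in the two-dimensional span of $\1_\calB$ and $\1_{\calB^c}$ with bounded coefficients, clipping preserves this form, so the bound holds uniformly in $r$. Plugging these bounds into \Cref{corollary: end to end} gives $\calO((B^4+B^4\log(1/\delta))/\epsilon^4)$ unlabeled examples from $P$ and $\calO((B^2+B^2\log(1/\delta))/\epsilon^4)$ unlabeled examples from $Q$.

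\textbf{Step 4: labeled sample size.} The labeled calibration count is inherited from \Cref{theorem: conditional coverage}, which requires $\calO\big((B\log(1/\epsilon)+B\log(1/\delta))/\epsilon^2 + B^2\log(1/\delta)/\epsilon^2\big)$ examples. To this we add the bias-estimation sample $\calO(B\log(1/\delta)/\epsilon^2)$, which is dominated. A union bound over the three failure events then gives probability $1-3\delta$.
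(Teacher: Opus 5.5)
Your proposal is correct and has the same skeleton as the paper's proof. Everything is inherited from \Cref{corollary: end to end} once $C_B,\tilde C_B=\calO(B)$ is established. The two arguments differ only in how that complexity bound is obtained.

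\textbf{The paper's route.} It writes $\calW_B$ as a union of two pieces: an unclipped piece $\{w_\beta:\beta\le B\}$ and a clipped piece, which takes the value $B$ on $\calB$. Each piece is affine in $\beta$, so the Rademacher supremum is attained at an endpoint. Massart's lemma is then applied to the resulting two-element classes.

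\textbf{Your route.} You enlarge $\calW_B$ to the product class $\{a\1_{\calB}+b\1_{\calB^c}:(a,b)\in[0,B]\times[0,1]\}$. Over this class the supremum decouples across the indices inside and outside $\calB$, and each term is controlled by $\E|\frac1m\sum_i\sigma_i|$ over the relevant indices.

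\textbf{Comparison.} Both routes give the same $\calO(B/\sqrt m)$ bound. Yours avoids the bookkeeping about which range of $\beta$ is clipped, and it is visibly uniform in $r$. It would also apply verbatim to any class of two-valued piecewise-constant ratios. The paper's route uses the exact one-parameter structure but gains nothing quantitatively from it. You also explicitly check $\Delta_R=0$ (since $w^*=w_{1-\theta+\theta/r^d}$), \Cref{assume: continuous population CDF}, \Cref{assume: small bias} (via $\Delta_B\le\theta(1-r^d)$), and $\epsilon\le 1/4$. The paper's proof leaves these implicit.

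\textbf{Two minor bookkeeping points.}
\begin{enumerate}
\item The continuity actually invoked in the proof of \Cref{theorem: conditional coverage} is that of the score CDF under $P$, and hence of $F_P(\cdot,\hat w)$. This holds here because $\|X\|_\infty$ has CDF $t^d$ under $P$.
\item The bias-estimation sample consists only of unlabeled $P$ points. It therefore belongs in the unlabeled-$P$ budget rather than the labeled one, where it is likewise dominated.
\end{enumerate}
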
\begin{proof}
    Note that \Cref{proposition: needle guarantees with CWCP} would follow from \Cref{corollary: end to end} as long as we are able to show that $C_B, \tilde C_B = \calO(B)$. Let us decompose $\calW_B$ as a union of unclipped and clipped components,
    \begin{align*}
        \calW_B = \left\{ w_\beta : \beta \in [1 - \theta + \theta / r^d, B]\right\} \cup \left\{ \left(x \mapsto \begin{cases}
             B, & x \in \calB\\
             \frac{1 - r^d \beta}{1 - r^d}, & x \not \in \calB
          \end{cases} \right): \beta \in [B, 1/r^d] \right\}.
    \end{align*}
    Let us refer to the first term as $\calW_B^{(1)}$ and the second term $\calW_B^{(2)}$. For any $X=(X_1, \dots, X_m)\in \calX^m$,
    \begin{align*}
        \rad_{X}(\calW_B) &\leq \rad_{X}(\calW_B^{(1)}) + \rad_{X}(\calW_B^{(2)}).
    \end{align*}
    Thus, we bound each piece independently. To bound the first term, write
    \begin{align*}
        \rad_{X}(\calW_B^{(1)}) = \E_{\sigma \sim \{-1, 1\}^m}\left[ \sup_{\beta \in [1 - \theta + \theta / r^d , B]}\frac 1 m \sum_{i=1}^{m}{\sigma_i w_\beta(X)} \right].
    \end{align*}
    since $w_\beta$ is linear in $\beta$, the maximum will be achieved at an endpoint, where $\beta \in \{1 - \theta + \theta/ r^d, B\}$. Thus, $\rad_{X}(\calW_B^{(1)}) = \rad_{X}(\{w_{1 - \theta+\theta/ r^d}, w_{B}\}) \leq B/ \sqrt m$ by Massart's lemma. A similar argument holds for $\calW_B^{(2)}$, since $\calW_B^{(2)}$ is affinely parameterized by $\beta \in [B,1/r^d]$, and the maximum must be at the boundary. Massart's lemma again yields $\rad_{X}(\calW_B^{(2)}) \leq B / \sqrt m$. By adding these two bounds, we conclude that $C_B, \tilde C_B = \calO(B)$ as desired.
\end{proof}

\section{Complexity Bounds for Clipped Classes}\label{appendix: clipped class complexity}

Under the assumption that $\calW$ has finite combinatorial dimension, we may obtain finer bounds on the Rademacher complexity of $\calW_B$. In this section, we present our results for classes with finite fat-shattering dimension, a combinatorial measure which is known to characterize the sample complexity of distribution-independent learning. We define this below.\begin{definition}[Fat-shattering dimension]
    Let $\mathcal{F}$ be a class of real-valued functions on a domain $\mathcal{X}$, and let $\gamma > 0$. We say that a set $S=\{x_1,\dots,x_m\}\subseteq \calX$ is \emph{$\gamma$-shattered} by $\calF$ if there exist real numbers $r_1,\dots,r_m$ such that for every $\sigma \in \{-1, 1 \}^m$ there exists $f \in \calF$ satisfying
    \begin{align*}
        \sigma_i = 1 \implies f(x_i) \ge r_i + \gamma, 
        \quad
        \sigma_i = -1 \implies f(x_i) \le r_i - \gamma,
        \quad \forall i \in [m].
    \end{align*}
    The \emph{$\gamma$-fat-shattering dimension} of $\calF$, denoted $\fat_{\calF}(\gamma)$, 
    is the largest integer $m$ for which there exists a set of $m$ points that is $\gamma$-shattered by $\calF$. 
    If no such largest $m$ exists, then $\fat_{\calF}(\gamma) = \infty$.
\end{definition}

\begin{example}\label{example: fat shatter linear}
    Let $\calF$ be the class of linear functions over $\R^d$. Then, $\fat_{\calF}(\gamma) = d$.
\end{example}

We rely on the property that clipping does not increase the fat-shattering dimension of $\calF$. We prove this below for completeness.
\begin{lemma}\label{lemma: fat shattering clipping}
    Let $\calF \subseteq \R^\calX$. Define the clipped class $\calF_B = \{ x\mapsto \max(\min(f(x), B), -B) : f \in \calF \}$. Then for any $0 \leq \gamma \leq B$, it holds that $\fat_{\calF_B}(\gamma) \leq \fat_{\calF}(\gamma)$. 
\end{lemma}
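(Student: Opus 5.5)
The plan is to show directly that any set $\gamma$-shattered by $\calF_B$ is also $\gamma$-shattered by $\calF$, with the same witnesses $r_1,\dots,r_m$. Write $c_B(z) = \max(\min(z, B), -B)$ for the clipping map, so every element of $\calF_B$ has the form $c_B \circ f$ with $f \in \calF$. Fix a set $S = \{x_1, \dots, x_m\}$ that is $\gamma$-shattered by $\calF_B$ with witnesses $r_i$. For each sign pattern $\sigma$, choose $f_\sigma \in \calF$ so that $c_B \circ f_\sigma$ realizes $\sigma$. The goal is to show that $f_\sigma$ itself realizes $\sigma$ with the same $r_i$. Taking the supremum over shattered sets then gives $\fat_{\calF_B}(\gamma) \leq \fat_\calF(\gamma)$.

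First, the witnesses can be localized. Since $c_B\circ f$ takes values in $[-B,B]$, shattering forces some element to be $\ge r_i+\gamma$ and some element to be $\le r_i-\gamma$ at $x_i$. Hence $r_i + \gamma \leq B$ and $r_i - \gamma \geq -B$, so $r_i \in [-B+\gamma, B-\gamma]$; this interval is nonempty because $\gamma \le B$.

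The key step uses monotonicity of $c_B$ together with the fact that it acts as the identity away from the clipping region. Suppose $\sigma_i = 1$, so $c_B(f_\sigma(x_i)) \geq r_i + \gamma$. There are two cases.
\begin{itemize}
\item If $f_\sigma(x_i) \geq B$, then $f_\sigma(x_i) \geq B \geq r_i + \gamma$.
\item Otherwise $c_B(f_\sigma(x_i)) \le f_\sigma(x_i)$. To see this, note that if $f_\sigma(x_i) \in [-B, B)$ then $c_B(f_\sigma(x_i)) = f_\sigma(x_i)$, while if $f_\sigma(x_i) < -B$ then $c_B(f_\sigma(x_i)) = -B < r_i + \gamma$, which contradicts the hypothesis. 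Either way $f_\sigma(x_i) \geq r_i + \gamma$.
\end{itemize}
The case $\sigma_i = -1$ is symmetric: if $f_\sigma(x_i) \le -B$ then $f_\sigma(x_i) \le -B \le r_i - \gamma$, and otherwise clipping can only have lowered the value from above, so $f_\sigma(x_i) \le c_B(f_\sigma(x_i)) \le r_i - \gamma$. Thus $f_\sigma$ realizes $\sigma$ on $S$ with the same witnesses, and $S$ is $\gamma$-shattered by $\calF$.

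No step here is a real obstacle. The only point needing care is the localization of the $r_i$, since that is exactly where the hypothesis $\gamma\le B$ enters and what makes the boundary cases of the clip consistent. For the one-sided clip $\min(f, B)$ used in $\calW_B$, the same argument applies with only the upper case.
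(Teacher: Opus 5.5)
Your argument is correct for $\gamma>0$ and is essentially the paper's proof: you keep the same witnesses, localize them to $[-B+\gamma, B-\gamma]$, and check pointwise that a threshold condition satisfied by the clipped value is also satisfied by the unclipped value. Two small fixes are needed: in the $\sigma_i=-1$ bullet the inference ``clipping can only have lowered the value, so $f_\sigma(x_i)\le c_B(f_\sigma(x_i))$'' runs the wrong way, and you must instead exclude $f_\sigma(x_i)>B$ by contradiction, since then $c_B(f_\sigma(x_i))=B>B-2\gamma\ge r_i-\gamma$; also, both of your exclusions need $\gamma>0$ strictly, so $\gamma=0$ must be handled separately (it is trivial, since every finite set is $0$-shattered by any nonempty class by taking $r_i=f_0(x_i)$ for a fixed $f_0$ in the class, and the paper's proof has the same omission).
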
\begin{proof}
    Suppose $S = \{x_1, \dots, x_m\}$ is $\gamma$-shattered by $\calF_B$, and let $r_1, \dots, r_m$ be the witness. For every $\sigma \in\{-1, 1\}^m$, let $f^\sigma_B \in \calF_B$ be a function satisfying
    \begin{align*}
        \sigma_i = 1 \implies f_B^\sigma(x_i) \ge r_i + \gamma, 
        \quad
        \sigma_i = -1 \implies f_B^\sigma(x_i) \le r_i - \gamma,
        \quad \forall i \in [m].
    \end{align*}
    Clearly, it must be that $-B + \gamma \leq r_i \leq B - \gamma$, or else the above implications could not be satisfied, since the range of functions in $\calF_B$ is $[-B, B]$. Now, let $f \in \calF$ and define $f_B(x) = \max(\min(f(x), B), -B)$. It can be easily checked that
    \begin{align*}
        f_B(x) \geq r + \gamma \implies f(x) \geq r + \gamma, \quad f_B(x) \leq r - \gamma \implies f(x) \leq r - \gamma, \quad \forall r \in [-B + \gamma, B - \gamma].
    \end{align*}
    On the other hand, since each $f_B \in \calF_B$ can be written like this, it follows that any sign behavior that can be expressed by $\calF_B$ with witnesses in the range $[-B +\gamma, B - \gamma] $ can also be expressed by $\calF$. In particular, we use apply this to the functions $f^\sigma_B$ and conclude that $\fat_{\calF_B}(\gamma) \leq \fat_{\calF}(\gamma)$.
\end{proof}

Equipped with this lemma, we can now derive an explicit bound on the Rademacher complexity of $\calF_B$ in terms of $B$ and the fat-shattering dimension of $\calF$. For ease of exposition, we assume that the fat-shattering dimension is upper bounded by a constant as $\gamma \to 0$ (which is the case for \Cref{example: fat shatter linear} and more generally, classes with finite pseudodimension).

\begin{proposition}\label{proposition: rademacher and fat shattering}
    Let $\calF\subseteq \R^{\mathcal X}$ define $\calF_B$ as in \Cref{lemma: fat shattering clipping}. Assume that $\fat_\calF(\gamma) \leq d$ for all $\gamma > 0$. Then for every sample $X=(X_1,\dots,X_m)\in\mathcal X^m$ the empirical Rademacher complexity satisfies
    \begin{align*}
        \rad_{X}(\mathcal F_B) = \calO\left(B\sqrt{\frac{d}{m}}\right).
    \end{align*}
\end{proposition}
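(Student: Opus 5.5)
The plan is to combine Dudley's entropy integral with a fat-shattering bound on $L_2$ covering numbers, using \Cref{lemma: fat shattering clipping} to pass the dimension bound from $\calF$ to $\calF_B$. Since every function in $\calF_B$ takes values in $[-B,B]$, rescaling by $1/B$ gives the class $\calF_B/B$ with range $[-1,1]$. Its fat-shattering dimension at scale $\gamma$ equals $\fat_{\calF_B}(B\gamma)$. By \Cref{lemma: fat shattering clipping}, for $B\gamma\le B$ this is at most $\fat_{\calF}(B\gamma)\le d$. For $\gamma>1$ it is $0$, because a class with range $[-1,1]$ cannot $\gamma$-shatter anything once $\gamma>1$. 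So $\fat_{\calF_B/B}(\gamma)\le d$ for every $\gamma>0$, and it suffices to prove $\rad_X(\calF_B/B)=\calO(\sqrt{d/m})$.

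Next I would bound the empirical $L_2(X)$ covering numbers of the $[-1,1]$-valued class $\mathcal G=\calF_B/B$. I would use the Mendelson--Vershynin bound: for a class with values in $[-1,1]$,
\begin{align*}
\log \calN(\mathcal G, L_2(X), \gamma) \le C\, \fat_{\mathcal G}(c\gamma)\log(2/\gamma)
\end{align*}
for absolute constants $c,C$ and every $\gamma\in(0,1)$. With the dimension bound above, this gives $\log\calN \le C d\log(2/\gamma)$, uniformly over samples $X$.

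Finally I would apply Dudley's chaining bound,
\begin{align*}
\rad_X(\mathcal G)\le \inf_{\alpha\ge 0}\left(4\alpha + \frac{12}{\sqrt m}\int_\alpha^{1}\sqrt{\log\calN(\mathcal G,L_2(X),\gamma)}\,d\gamma\right).
\end{align*}
Here the diameter is at most $1$ up to a constant. Taking $\alpha=0$, the integral is bounded by $\sqrt{Cd}\int_0^1\sqrt{\log(2/\gamma)}\,d\gamma$, and this integral is a finite constant. Hence $\rad_X(\mathcal G)=\calO(\sqrt{d/m})$, and multiplying back by $B$ gives the claim.

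The main obstacle is the covering-number step. If I used the older Alon--Ben-David--Cesa-Bianchi--Haussler bound, it would introduce extra $\log m$ factors and only give $\calO(B\sqrt{d\log m/m})$. Obtaining the clean rate therefore depends on a dimension-free $L_2$ covering bound of Mendelson--Vershynin type, which I would cite as a known result. A smaller point to check is that the witnesses $r_i$ in the shattering definition lie inside the range. This is exactly what \Cref{lemma: fat shattering clipping} handles, so the uniform bound $\fat\le d$ transfers cleanly.
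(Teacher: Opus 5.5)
Your proposal is correct and takes essentially the same route as the paper. Both combine Dudley's entropy integral with the Mendelson--Vershynin fat-shattering bound on $L_2$ covering numbers, and both use \Cref{lemma: fat shattering clipping} to transfer the dimension bound from $\calF$ to $\calF_B$. The only cosmetic difference is that you rescale to $\calF_B/B$ before chaining and explicitly handle scales $\gamma>1$, whereas the paper chains over $[0,B]$ for $\calF_B$ directly and does the rescaling inside the covering-number bound.
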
\begin{proof}
    We begin with an application of chaining; by Theorem 1.1 of \citet{kakade2008dudley}, for any sample $X = (X_1, \dots, X_m) \subseteq \calX^m$, we may bound the empirical Rademacher complexity by
    \begin{align*}
        \rad_{X}(\calF_B) &\leq 12 \int_{0}^{\infty} \sqrt{\frac{\log N_2(\alpha, \calF_B, X)}{m} }\ d\alpha \\
        &= \frac{12}{\sqrt m} \int_{0}^{B} \sqrt{\log N_2(\alpha, \calF_B, X)}\ d\alpha, \tag{$F_B$ has range in $[-B, B]$}
    \end{align*}
    where $N_2(\alpha, \calF, X)$ is the $L_2$-covering number of $\calF_B$ on the sample $X$. On the other hand, from Theorem 1 of \citet{mendelson2003entropy} (after suitable rescaling by $1 / B$) along with \Cref{lemma: fat shattering clipping} we may bound the log covering number as
    \begin{align*}
        \log N_2(\alpha, \calF_B, X) \leq C_1 \fat_\calF(C_2\alpha) \log (B / \alpha), \quad \forall \alpha \in [0, B]
    \end{align*}
    for some universal constant $C_1, C_2 > 0$. Combining with the above integral, we conclude
    \begin{align*}
        \rad_{X}(\calF_B) = \calO\left(B\sqrt{\frac{d}{m}}\right).
    \end{align*}
\end{proof}

\begin{remark}
    In particular, we may instantiate this with linear classes to derive a regime where clipping yields a significant reduction in the Rademacher complexity of $\calF$. Let $\calF = \{ x \mapsto y^\top x : y \in \R^d, \| y \|_2 \leq U \}$. By \Cref{proposition: rademacher and fat shattering} and \Cref{example: fat shatter linear}, we have that $\rad_X(\calF_B) = \calO(B \sqrt{\frac{d}{m}})$. On the other hand, by directly bounding the Rademacher complexity, and then applying Talagrand's contraction principle, we may obtain $\rad_X(\calF_B) \leq \rad_X(\calF) \leq UR / \sqrt m$ assuming $\| X_i \|_2 \leq R$ for all $i \in [m]$. Thus, \Cref{proposition: rademacher and fat shattering} reveals a regime where $B \leq UR / \sqrt d$ where clipping allows a significantly sharper bound on the complexity of $\calF_B$ than the naive strategy in \Cref{remark: decreasing complexity}.
\end{remark}

\section{CLISF with Piecewise Constant Density Ratios}\label{appendix: CLISF piecewise constant}

In this section, we assume that the input space $\calX$ consists of points of the form $(X^0, X^1, Y)$ where $X^0 \in [k]$ is a subpopulation identifier, $X^1$ contains additional covariate information, and $Y$ is the outcome. We assume that $P$ has the form
\begin{align*}
    X^0 &\sim \text{Multinomial}(p_1, \dots, p_k), \quad (X_1, Y) \mid (X^0 = i) \sim \Pi_i
\end{align*}
i.e., the training data point is drawn from group $i$ with probability $p_i$, and then conditional on being drawn from group $i$, the remaining features $X^1$ and outcome $Y$ are drawn from some joint distribution $\Pi_k$. We assume that $Q$ has the form
\begin{align*}
    X^0 &\sim \text{Multinomial}(q_1, \dots, q_k), \quad (X_1, Y) \mid (X_0 = i) \sim \Pi_i.
\end{align*}
In other words, $P$ and $Q$ are both mixtures of the $\Pi_i$, but with different mixture weights. Thus, the true weights have a piecewise constant structure, where $w^*(X^0, X^1, Y)$ depends only on the subpopulation identifier $X^0$. This is the setting considered by \citet{bhattacharyya2024group}. This also subsumes the setting of Appendix B of \citet{park2021pac}, by taking $X_0 = j(X^1)$, where $j : \calX \to [k]$ is some clustering model. \citet{park2021pac} propose to use bucketed source discriminators or unsupervised learning to estimate the clusters.

In this setting, we consider two very natural settings of the density ratio class $\calW$ and show that each leads to efficient optimization of the CLISF objective.

\textbf{Unknown train distribution.} We consider the class $\calW$ of piecewise constant weights $w(X^0, X^1) = w_i \in \R_+$ for $X^0 = i$. In this case, the empirical CLISF objective, over a sample $X_\train = (X_1, \dots, X_m)$ and $X_\train = (\tilde X_1, \dots, \tilde X_m)$, is equivalent to the convex QP
\begin{align*}
    \text{Minimize} &\quad \frac 1 2 \sum_{i=1}^{m}{\left( w_{X_m^0}^2    - 2 w_{\tilde X_m^0}\right)} \quad\text{over $w_1, \dots, w_k \in \R$}\\
    \text{Subject to} &\quad 0 \leq w_i \leq B, \quad \forall i \in [k]
\end{align*}
and hence may be solved efficiently. Since there are no second-order interactions between the different $w_i$, this may be minimized pointwise for each $w_i$ by taking $w_i = \min(\tilde m_i / m_i, B)$ where $m_i$ is the number of training points falling in cluster $i$, and $\tilde m_i$ is defined similarly for the test points. When $m_i = 0$, we follow the convention that $\tilde m_i / m_i = \infty$.

\textbf{Known train distribution.} Now, assume the train marginal $P_X$ is known. More specifically, assume we have access to the mixture weights $p_1, \dots, p_k$. We can incorporate this information into an additional affine constraint on our feasible set, which enforces that the density ratios cannot integrate to more than $1$ under $P$:
\begin{align*}
    \text{Minimize} &\quad \frac 1 2 \sum_{i=1}^{m}{\left( w_{X_m^0}^2    - 2 w_{\tilde X_m^0}\right)} \quad\text{over $w_1, \dots, w_k, b_1, \dots, b_k \in \R$}\\
    \text{Subject to} &\quad 0 \leq w_i \leq B, \quad b_i \geq 0, \quad \forall i \in [k]; \quad \sum_{i=1}^{k}{p_i(w_i + b_i)} = 1
\end{align*}
where $b_1, \dots, b_k$ are slack variables representing the clipping bias. This is another convex QP in $w_1, \dots, w_k, b_1, \dots, b_k$ and hence may be solved efficiently.

\section{Other Approaches to Choosing the Clipping Parameter} \label{appendix: choosing B}

In this section, we discuss additional strategies to select the clipping parameter $B$. 

\noindent\textbf{Choosing $B$ via \Cref{corollary: end to end}.} Consider fixing the sample sizes. The dominant dependence on $B$ for the sample sizes in \Cref{corollary: end to end} are for the unlabeled $P$ and $Q$ examples. Assuming that $C_B, \tilde C_B = \calO(B^p)$, we may invert these sample sizes to obtain a heuristic $B \approx m^{\frac{1}{2(p+1)}}\epsilon^{\frac{2}{p+1}}$. However, this may be overly conservative and in practice it suffices to choose a larger value of $B$.

\noindent\textbf{Choosing $B$ to make $\Delta_B$ small.} A natural question is whether we can precisely control $\Delta_B$ in terms of $B$. If we choose $B$ large enough such that $\Delta_B = \calO(\epsilon)$, then in (\ref{eq: conditional coverage}), the overcoverage will not depend on $\Delta_B$; this is an ``unbiased" coverage guarantee. Furthermore, if $\inf \{  B : \Delta_B \leq \epsilon  \} = \mathsf{poly}(1 / \epsilon)$, then the sample size is polynomial in $1/\epsilon$. However, precisely controlling $\Delta_B$ is not possible in general. For example, consider a two-symbol universe $\{a, b\}$, where $P(\{a\}) = p$ and $P(\{b \}) = 1 - p$, and $Q$ is uniform over $\{a, b\}$. If $p \to 0$, then $\inf \{  B : \Delta_B \leq \epsilon  \}\to\infty$ for any fixed $\epsilon$. To obtain rate control in $B$, we thus assume additional tail penalization on $w^*$. The below proposition applies, for example, with the $\chi^2$ distance when $P$ and $Q$ are known to be spherical Gaussians with similar variance (Corollary 1 of \citet{rubenstein2019practical}).
\begin{proposition}\label{prop: clipping bias upper bound}
    Let $f : \R_+ \to \R_+$ be nondecreasing on $[B_0, \infty)$. Let $\rho \coloneqq \E_{P}{[f(w^*(X))]}$. Then, $\Delta_B\coloneqq \E_{P}{[(w^*(X) - B)^+]} \leq \rho \cdot \int_{B}^{\infty}{\frac{1}{f(t)}\ dt}$ for all $B \geq B_0$. In particular, if $f(x) \geq C(x - B_0)^p$ for all $x \geq B_0$, for some $C > 0$ and $p > 1$, then $\Delta_B \leq \frac{\E_{P}{[f(w^*(X))]}}{C(p - 1)(B - B_0)^{p-1}}$ for all $B \geq B_0$.
\end{proposition}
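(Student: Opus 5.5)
The bound follows from the layer-cake representation of $(w^*-B)^+$ combined with a Markov-type tail bound driven by $f$. First, for $B \geq 0$ I write
\begin{align*}
    \Delta_B = \E_{P}[(w^*(X) - B)^+] = \int_B^\infty P(w^*(X) > t)\, dt,
\end{align*}
which is Tonelli applied to $(w^*(X)-B)^+ = \int_B^\infty \1[w^*(X) > t]\,dt$. This is the same computation already used in the power-law example earlier in the paper.

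Second, I bound each tail probability using $f$. Fix $t \geq B \geq B_0$. Because $f$ is nondecreasing on $[B_0,\infty)$, the event $w^*(X) > t$ implies $f(w^*(X)) \geq f(t)$. Nonnegativity of $f$ and Markov's inequality then give
\begin{align*}
    P(w^*(X) > t) \leq P\big(f(w^*(X)) \geq f(t)\big) \leq \frac{\rho}{f(t)}.
\end{align*}
If $f(t)=0$, the right-hand side is $+\infty$ and the inequality is trivial, with the convention $1/0=\infty$. Integrating over $t \in [B,\infty)$ yields $\Delta_B \leq \rho\int_B^\infty f(t)^{-1}\,dt$.

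Third, for the polynomial case I substitute $1/f(t) \leq C^{-1}(t-B_0)^{-p}$ for $t > B_0$. For $B > B_0$ and $p > 1$,
\begin{align*}
    \int_B^\infty (t-B_0)^{-p}\,dt = \frac{(B-B_0)^{1-p}}{p-1},
\end{align*}
which gives the stated bound $\rho/\big(C(p-1)(B-B_0)^{p-1}\big)$. At $B = B_0$ the right-hand side is $+\infty$ and the claim holds trivially.

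None of these steps is a real obstacle; the only care needed is in the monotonicity step. The implication $w^*(X) > t \Rightarrow f(w^*(X)) \geq f(t)$ requires both $t$ and $w^*(X)$ to lie in $[B_0,\infty)$, where $f$ is monotone. Both do, since $t \geq B \geq B_0$ and $w^*(X) > t$. This is why the result is restricted to $B \geq B_0$.
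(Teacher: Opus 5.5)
Your proof is correct and follows the same route as the paper. Both use a Markov bound $P(w^*(X)> t)\le \rho/f(t)$, obtained from monotonicity of $f$ on $[B_0,\infty)$, integrate it via the tail-integral formula for $\E_P[(w^*(X)-B)^+]$, and then evaluate $\int_B^\infty (t-B_0)^{-p}\,dt$ explicitly for the polynomial case. Your checks that both $t$ and $w^*(X)$ lie in $[B_0,\infty)$, and your handling of $f(t)=0$ and $B=B_0$, make explicit what the paper's proof leaves implicit.
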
\begin{proof}
By Markov's inequality, and using the assumption that $f$ is nondecreasing, for any $\alpha \geq B_0$,
\begin{align*}
\Pr_{P}{[w^*(X) \geq \alpha]} &\leq \Pr_{P}{[f(w^*(X)) \geq f(\alpha)]} \leq \frac{\E_{P}{[f(w^*(X))]}}{f(\alpha)} = \frac{\rho}{f(\alpha)}.
\end{align*}
By integrating this upper bound on the tail probability, we find
\begin{align*}
\Delta_B &\coloneqq \E_{P}{[(w^*(X) - B)^+]} = \int_{B}^{\infty}{\Pr_{P}{[w^*(X) \geq t]} \ dt} \leq \rho \cdot \int_{B}^{\infty}{\frac{1}{f(t)}\ dt}.
\end{align*}
To prove the second part of the claim, we use the assumption that $f(x) \geq C(x - B_0)^p$, which implies $1 / f(x) \leq \frac{1}{C(x - B_0)^p}$. This argument yields
\begin{align*}
\Delta_B \leq \frac{\E_{P}{[f(w^*(X))]}}{C}\cdot \int_{B}^{\infty}{\frac{1}{(t - B_0)^p}\ dt} = \frac{\E_{P}{[f(w^*(X))]}}{C (p - 1) (B - B_0)^{p-1}}, \quad \forall B \geq B_0.
\end{align*}
\end{proof}

\section{Additional Experiments}\label{appendix: additional experiments}

\subsection{Synthetic Data}\label{section: synthetic data experiment}

We additionally evaluate our method on synthetic data at various controlled levels of covariate shift. The covariate is $X = (X_1,\dots, X_{d}) \in \R^{d}$ and the outcome is $Y\in \R$. We define
\begin{align*}
    P_X \coloneqq \calN(0, I_{d}), \quad Q_X = \calN(\beta \cdot e_1, I_{d})\\
    P_{Y\mid X} = Q_{Y\mid X} = \1^\top X + \exp(X_1^2) + \calN(0, 1)
\end{align*}
where $e_1$ is the first standard basis vector and $\beta$ models the level of covariate shift. 

\noindent\textbf{Experimental details.} We consider the nonconformity score $s$ defined by the residual $|Y - \mu(X)|$, where $\mu : \R^{d} \to \R$ is a fixed regression model trained beforehand on $P$. We consider the class $\calW$ of the general form of a change of measure between two Gaussians with identity covariance, $\calW = \left\{x\mapsto \exp\left(x^\top \mu - \frac{\|\mu\|^2}{2}\right) : \mu \in \R^{d} \right\}$. All algorithms are run with $d=100$ with $600$ examples. For each value of $\beta \in [0, 0.1, 0.2, \dots, 2]$, we ran $30$ trials of the experiment above with a target of $1 - \alpha =0.8$. For each trial, we measured the coverage on a freshly drawn dataset from $Q$. 

\noindent\textbf{Results.} \Cref{figure: synthetic coverage} displays the results. We did not evaluate LR-QR, as $\calW$ was not compatible with the linear structure assumed in \cite{joshi2025likelihood}. Notably, CWCP and split conformal had much less variance in coverage than WCP. However, split conformal displayed increasing levels of undercoverage with increasing $\beta$, where as this was less of an issue for CWCP and WCP (which account for the covariate shift). Comparing CWCP run with different levels of $B$, one can see that the variance increases as $B$ increases; however, for lower values of $B$ there was slight degradation of the expected coverage (this is most apparent when comparing $B = 2.5$ and $B = 20$).

\begin{figure*}
    \centering
    \begin{subfigure}{0.32\linewidth}
        \includegraphics[width=\linewidth]{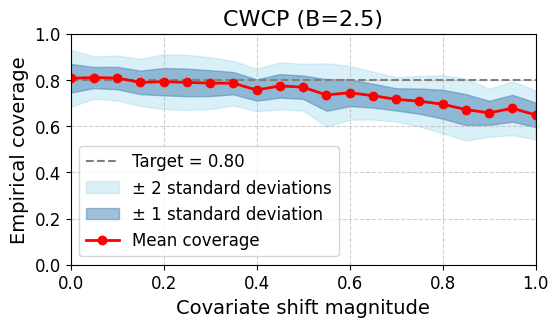}
    \end{subfigure}
    \begin{subfigure}{0.32\linewidth}
        \includegraphics[width=\linewidth]{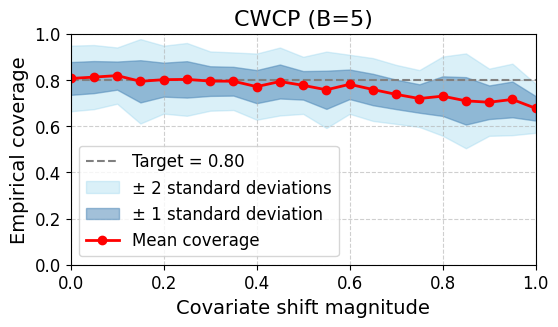}
    \end{subfigure}
    \begin{subfigure}{0.32\linewidth}
        \includegraphics[width=\linewidth]{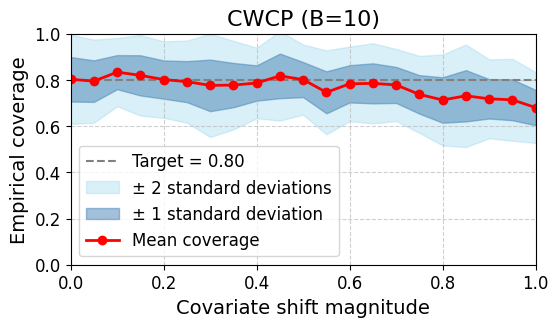}
    \end{subfigure}
    \begin{subfigure}{0.32\linewidth}
        \includegraphics[width=\linewidth]{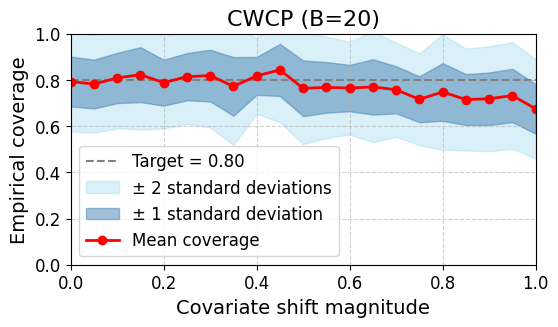}
    \end{subfigure}
    \begin{subfigure}{0.32\linewidth}
        \includegraphics[width=\linewidth]{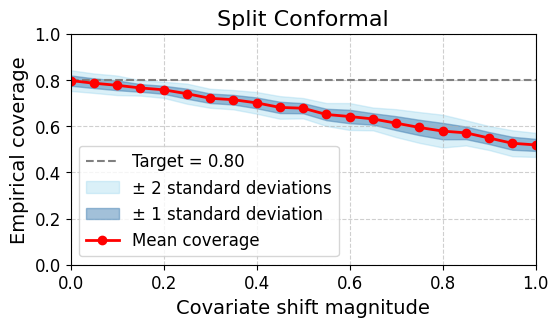}
    \end{subfigure}
    \begin{subfigure}{0.32\linewidth}
        \includegraphics[width=\linewidth]{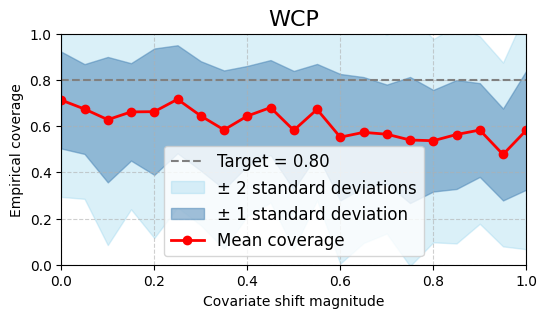}
    \end{subfigure}
    \caption{Coverage results for CWCP ($B \in \{ 2.5, 5, 10, 20\}$), split conformal, and WCP on synthetic shifted Gaussians data. The $x$-axis represents $\beta$. Qualitatively, good performance corresponds to a red line which is close to $y = 0.8$ (good expected coverage) and a small blue region (low variance).}
    \label{figure: synthetic coverage}
\end{figure*}

\subsection{Communities and Crime}

\begin{figure*}
    \centering
    \includegraphics[width=0.99\linewidth]{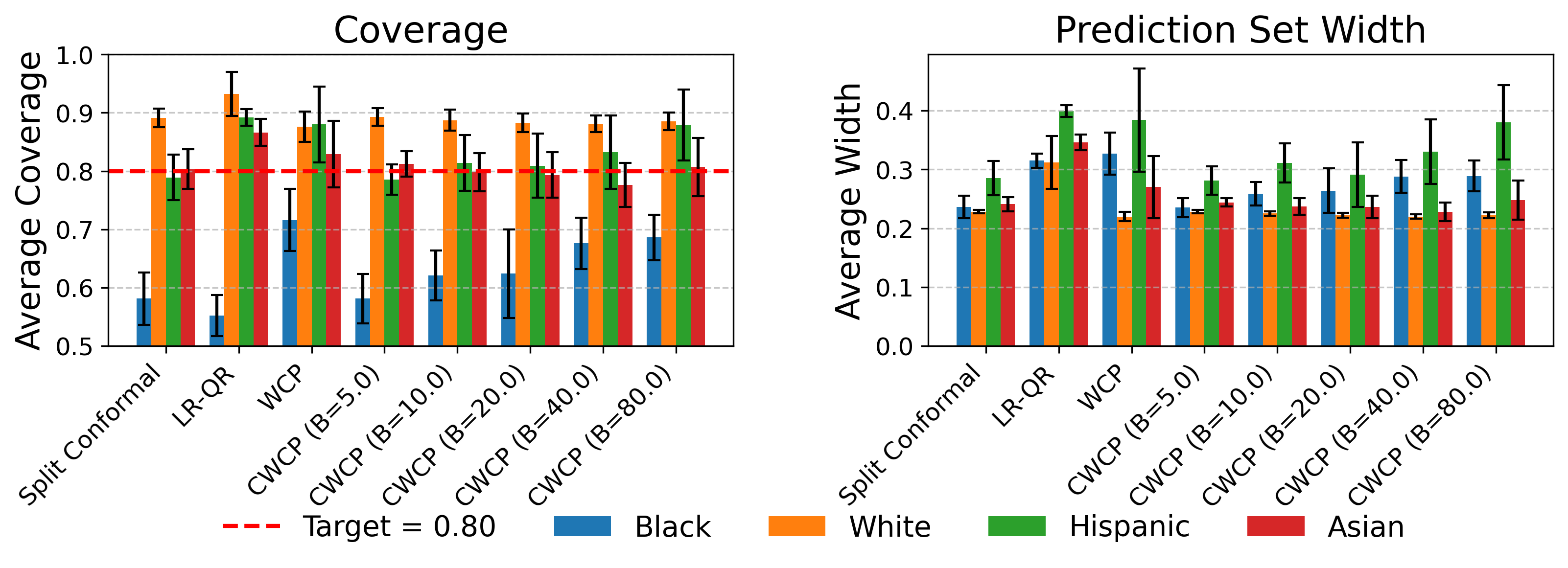}
    \caption{Coverage results for CWCP ($B \in \{5, 10, 20, 40, 80\}$), split conformal, WCP, and LR-QR on Communities and Crime data. The colored bars represent average coverage and prediction set size for each algorithm and the black bars represent $\pm 1$ standard deviation.}
    \label{figure: crime coverage}
\end{figure*}

We additionally evaluate our methods on the \emph{Communities and Crime} dataset \cite{communities_and_crime_183}, which contains $1994$ datapoints of communities in the United States, each datapoint being a $127$-dimensional input. The task is to predict the violent crime rate. Following \citet{joshi2025likelihood}, We first randomly select half of the data as a training set, and use it to fit a $1$ hidden layer neural network as our predictor. We use the remaining half to design four covariate shift scenarios, determined by the frequency of a specific racial subgroup. For each of these features, we find the median value $m$ over the remaining dataset. Datapoints with feature value at most $m$ form our source set, and the rest form our target set. This creates a covariate shift between train and test datasets.

\noindent\textbf{Experimental details.} The nonconformity score is the residual to our regression model. The we considered $\calW$ defined by linear maps from the features space to $\R$. We ran $30$ trials in total, with a coverage target of $1- \alpha = 0.8$, as in \cite{joshi2025likelihood}. For each trial, we measured the coverage on a held out test set as well as the width of the resulting prediction interval. In contrast to \citet{joshi2025likelihood}, who considered a ratio class consisting of linear maps directly from the feature space to $\R$, we considered the class of linear maps from the hidden layer of the regression model. 

\noindent\textbf{Results.} \Cref{figure: crime coverage} displays the results. For the Hispanic and Asian population covariate shifts, CWCP achieved both average coverage close to $0.8$ as well as low coverage variance. LR-QR also achieved stable coverage. On the other hand, WCP had very high variance on the Hispanic and Asian shifts. As predicted by our theory, the amount of variation tended to increase with $B$. For the White population covariate shift, all methods slightly overcovered. Interestingly, WCP achieved a slightly lower overcoverage compared to other methods, although with a higher variance in coverage. 

Next, for the Black population shift, all methods except for WCP and CWCP (with high $B$) seemed to greatly undercover. For the density ratio-based methods (WCP, LR-QR, and CWCP) a possible explanation is that the class of ratios did not correctly capture the nature of the covariate shift in this case, leading to high misspecification. For split conformal, a likely explanation is that it did not take the covariate shift into account.

Regarding set sizes, for the Black, Hispanic, and Asian population covariate shifts, split conformal and CWCP ($B = 5$) appeared to produce the smallest prediction sets on average. This is not surprising, as split conformal and CWCP ($B = 5$) tended to exhibit less overcoverage compared to other methods, particularly on the Hispanic and Asian shifts. In contrast, LR-QR, WCP, and CWCP ($B = 80$) had the most overcoverage and, unsurprisingly, also the largest prediction set widths. A key takeaway is that the good coverage performance of CWCP \emph{does not} rely on outputting trivial prediction sets, as evidenced by the relatively low prediction set widths.

\subsection{Empirical Validation of SRM for Clipping Parameter Selection on Synthetic Data}\label{section: srm experiment}

We additionally investigate the performance of a SRM-based strategy for selecting $B$. As a proof of concept, we implement a structural risk-regularized objective on the synthetic data setting from \Cref{section: synthetic data experiment}. For varying sample sizes, we will investigate the generalization behavior of the empirical minimizer of a SRM-regularized CLISF objective.

\noindent\textbf{Experimental details.} We consider the same distributions and density ratio class as \Cref{section: synthetic data experiment}. In fact, since we are only interested in the density ratio estimation part (CLISF) of the CWCP pipeline, we need only consider the marginal covariate distributions of $P$ and $Q$. Thus, the task is equivalent to estimating the density ratio between two shifted Gaussians. We used $d = 200$ in our experiments and considered a fixed shift magnitude of $\beta = 2$ (this choice was arbitrary).

The SRM-regularized CLISF objective we solved was
\begin{align*}
    \arg \min_{B \in \{2.5, 5, 10, 20, 40\}, w \in \calW_{B}}{\widehat R(w) + \lambda \cdot B\sqrt{\frac{d}{m}}},
\end{align*}
where $\widehat R(w)$ is as in (\ref{eq: LSIF}) and $\lambda \cdot B\sqrt{\frac{d}{m}}$ denotes the complexity regularization term chosen per \Cref{appendix: clipped class complexity}, with $\lambda \geq 0$ denoting a regularization strength. We ran our experiments with varying choices $\lambda \in \{ 0, 0.1, 0.3, 0.5, 0.7, 0.9, 1\}$ and varying sample sizes $m \in \{ 50, 100, \dots, 500\}$. We ran $100$ trials and measured the average generalization performance (in terms of the population square loss $\E_P[(\hat w(X) - w^*(X))^2]$) for each combination of $B$, $\lambda$, and $m$.

\begin{figure*}
    \centering
    \begin{subfigure}{0.49\linewidth}
        \includegraphics[width=\linewidth]{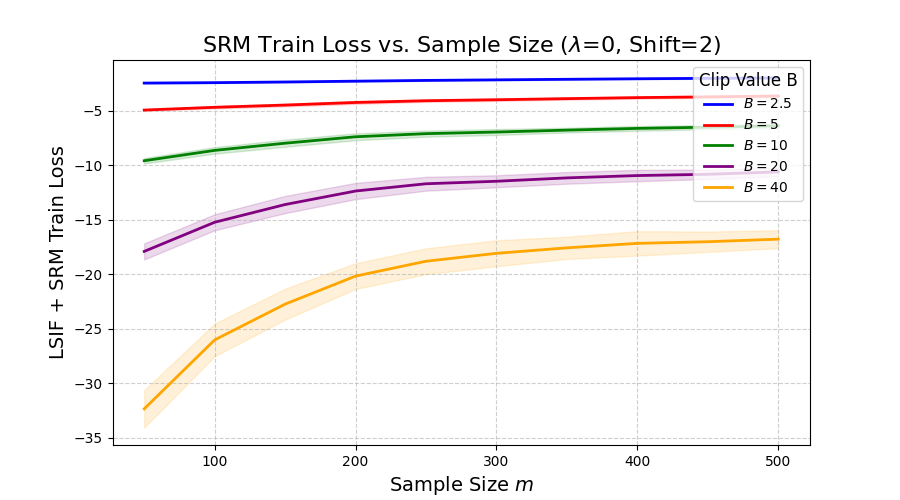}
    \end{subfigure}
    \begin{subfigure}{0.49\linewidth}
        \includegraphics[width=\linewidth]{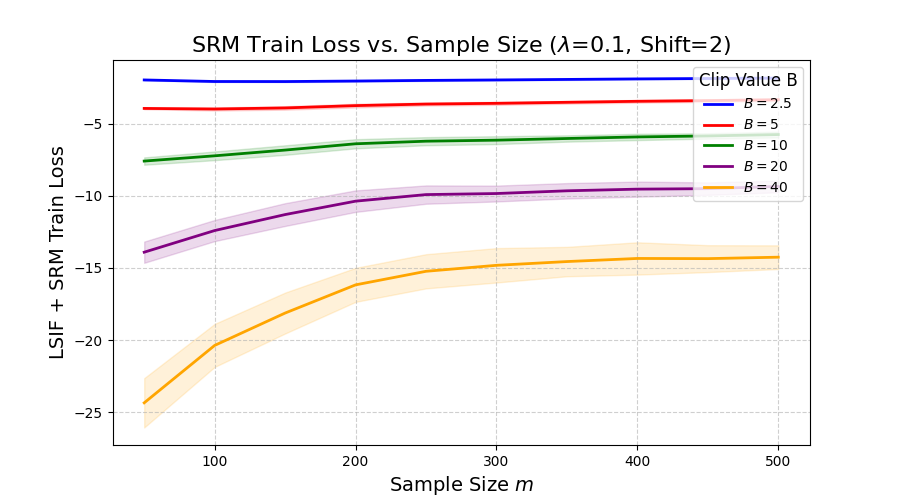}
    \end{subfigure}
    \begin{subfigure}{0.49\linewidth}
        \includegraphics[width=\linewidth]{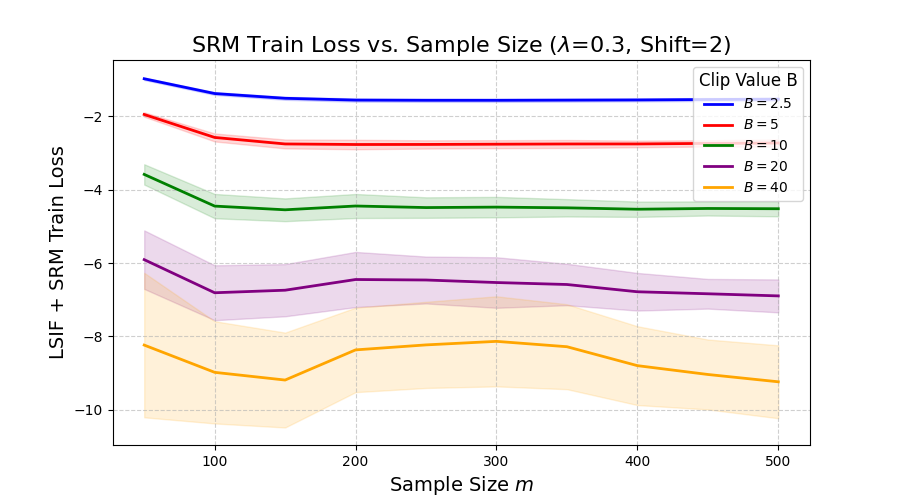}
    \end{subfigure}
    \begin{subfigure}{0.49\linewidth}
        \includegraphics[width=\linewidth]{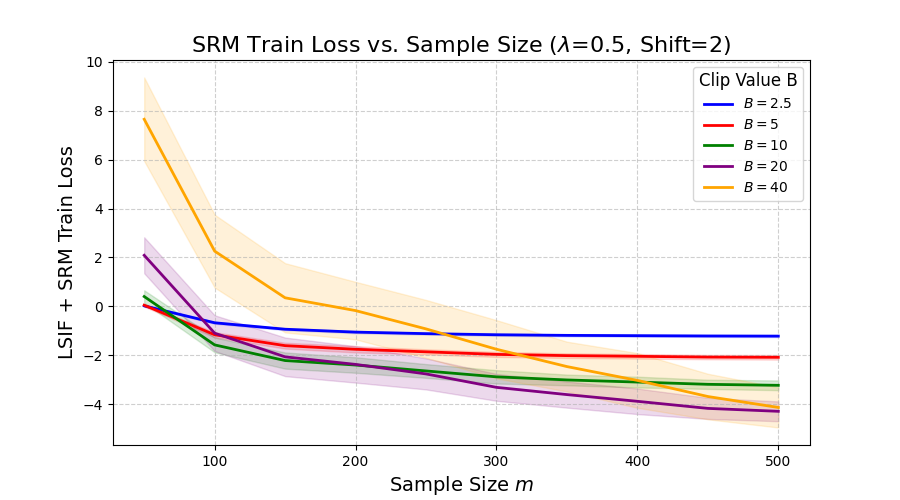}
    \end{subfigure}
    \begin{subfigure}{0.49\linewidth}
        \includegraphics[width=\linewidth]{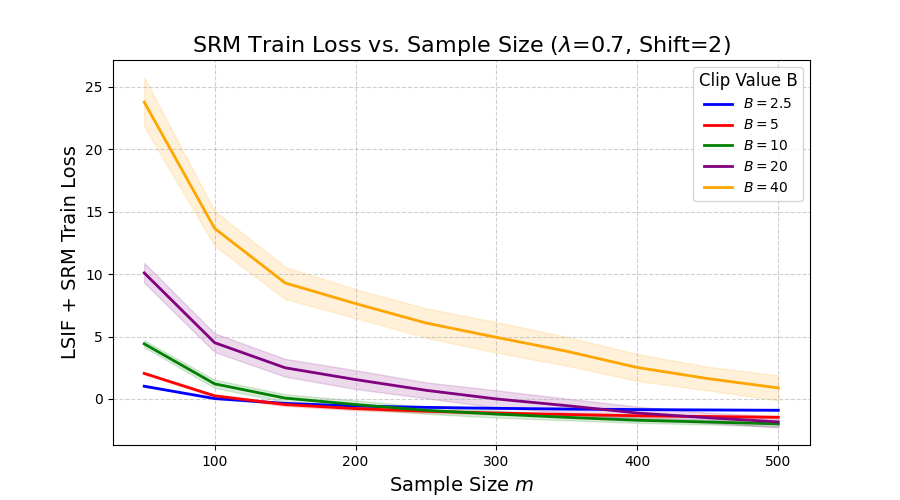}
    \end{subfigure}
    \begin{subfigure}{0.49\linewidth}
        \includegraphics[width=\linewidth]{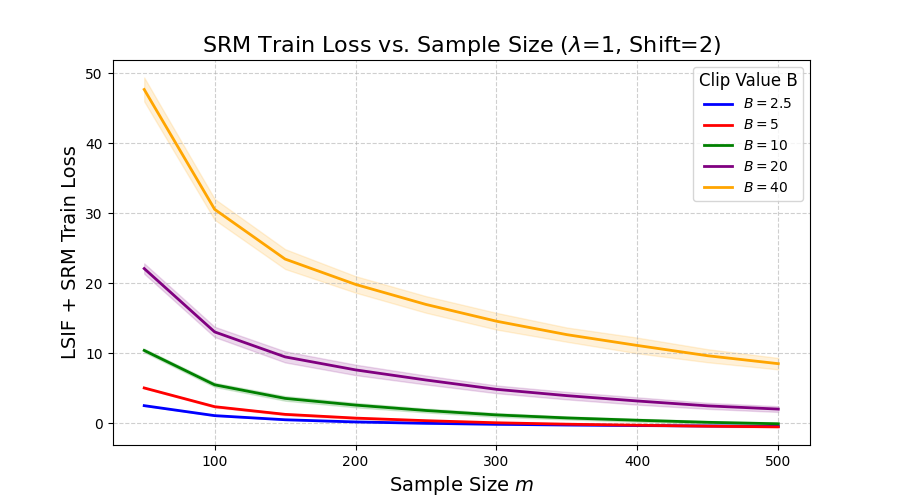}
    \end{subfigure}
    \begin{subfigure}{0.49\linewidth}
        \includegraphics[width=\linewidth]{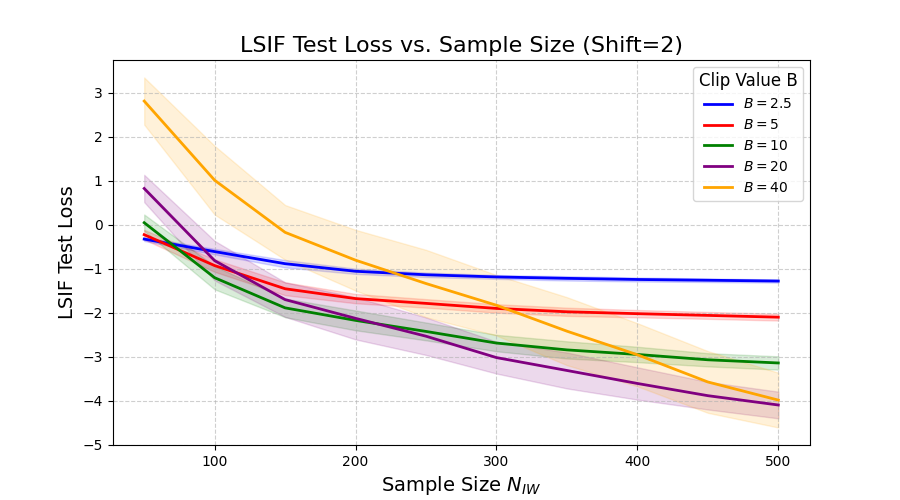}
    \end{subfigure}
    \caption{Results for the structural risk-regularized CLISF objective. Qualitatively, the best choice of regularizer $\lambda$ will correspond to a plot which most closely matches the bottommost plot: this is clearly attained when $\lambda = 0.5$.}
    \label{figure: srm validation}
\end{figure*}

\paragraph{Results.} \Cref{figure: srm validation} displays the results. The bottommost figure plots the average test performance (in terms of the population square loss $\E_P[(\hat w(X) - w^*(X))^2]$) of the learned clipped density ratio against the sample size $m$. Different colors indicate different choices of $B$, and the shaded colored regions indicate $\pm 1$ standard deviation. The top six plots (each representing a value of $\lambda$) represent the value of the SRM-regularized CLISF objective, again against the sample size $m$. Qualitatively, the best choice of regularizer $\lambda$ will correspond to a plot which most closely matches the bottommost plot (corresponding to the test losses): this indicates that the best choice of $B$ according to the SRM-regularized objective is close to the best choice of $B$ if we had known the test losses in advance. This is clearly achieved for $\lambda = 0.5$, which very closely tracks the test loss plot.

For lower values of $\lambda < 0.5$, we observe that there was insufficient penalty for structural complexity. This is because the lowest training loss was attained by the highest value of $B = 80$, whereas this value achieved the worst generalization performance until $m \approx 200$, and did not become competitive with the best choice of $B$ until $m \approx 500$. This is a clear sign of overfitting due to $\lambda$ being insufficiently large. 

For higher values of $\lambda > 0.5$, we observe that there was too much penalty for structural complexity. This is evidenced by the fact that the SRM-regularized objective favored smaller values even for higher sample sizes. For example, when $\lambda = 0.7$, the green curve (corresponding to $B= 10$) did not go below the red and blue curves ($B = 2.5, 5$) until $m\approx 400$, much later than on the test loss plot. On the other hand, at least in this example, the suboptimality due to an overly conservative choice of $\lambda$ appears relatively benign, especially for lower values of $m$ where only the yellow curve ($B = 40$) was significantly higher than the others. 

However, this approach has limitations. First, it exchanges the problem of selecting $B$ for the problem of selecting the regularization strength $\lambda$. While $\lambda$ is a universal constant related to the Rademacher complexity constants, in practice, the theoretical bounds are often loose, requiring $\lambda$ to be tuned as a hyperparameter. Nevertheless, our experiments suggest that a single choice of $\lambda$ (e.g., $\approx 0.5$) is robust across varying sample sizes, unlike $B$, which must strictly grow with $m$. Second, the computational cost is higher than a single fit, as one must solve the CLISF objective for a grid of $B$ values to identify the minimum of the penalized risk profile. 

Our empirical results suggest that SRM provides a robust, data-driven mechanism for navigating the bias-variance tradeoff. Crucially, while the optimal clipping threshold $B$ shifts dramatically with sample size (as seen in the bottom panel), the optimal regularization strength $\lambda \approx 0.5$ remains stable across the entire range of $m$. This implies that SRM effectively transforms the difficult problem of selecting a dynamic, sample-dependent parameter $B$ into the simpler task of selecting a static, structural constant $\lambda$. By penalizing the hypothesis complexity directly, the method allows the estimator to automatically adapt its capacity to the available data, tracking the optimal test performance without requiring access to the target labels.

\end{document}